
\documentclass{article}

\usepackage{graphicx}
\usepackage{subfigure}
\usepackage{booktabs} 

\usepackage{hyperref}



 \usepackage[accepted]{icml2023}

\usepackage{amsmath}
\usepackage{amssymb}
\usepackage{mathtools}
\usepackage{amsthm}

\usepackage[capitalize,noabbrev]{cleveref}

\theoremstyle{plain}
\newtheorem{theorem}{Theorem}[section]
\newtheorem{proposition}[theorem]{Proposition}
\newtheorem{lemma}[theorem]{Lemma}

\theoremstyle{definition}
\newtheorem{definition}[theorem]{Definition}

\newtheorem{remark}[theorem]{Remark}

\usepackage[textsize=tiny]{todonotes}

\newcommand{\alglinelabel}{%
  \addtocounter{ALC@line}{-1}
  \refstepcounter{ALC@line}
  \label
}

\setlength\floatsep{3pt}
\setlength\intextsep{1pt}
\setlength{\abovecaptionskip}{1pt}
\setlength{\belowcaptionskip}{1pt}
\setlength{\parskip}{0.45em}
\usepackage[compact]{titlesec}
\titlespacing*{\section}
{0pt}{1.1ex}{1.1ex}
\titlespacing*{\subsection}
{0pt}{1.1ex}{1.1ex}
\titlespacing*{\subsubsection}
{0pt}{1.1ex}{1.1ex}
\usepackage{smile}

\newcommand{\la}{\langle}
\newcommand{\ra}{\rangle}
\newcommand{\qvalue}{Q}
\newcommand{\vvalue}{V}

\newcommand{\reward}{r}

\newcommand{\regret}{\mathbf{Regret}}

\allowdisplaybreaks
\usepackage{colortbl}
\definecolor{LightCyan}{rgb}{0.8, 1, 0.9}

\def \alg {\mathtt{Alg}}
\def \algname {\text{LSVI-UCB++}}

\usepackage{enumitem}


\setlength\floatsep{3pt}
\setlength\intextsep{1pt}
\setlength{\abovecaptionskip}{1pt}
\setlength{\belowcaptionskip}{1pt}
\setlength{\parskip}{0.45em}
\usepackage[compact]{titlesec}
\usepackage{sidecap}
\titlespacing*{\section}{0pt}{*0.1}{*0.1}
\titlespacing*{\subsection}{0pt}{*0.1}{*0.1}
\titlespacing*{\subsubsection}{0pt}{*0.1}{*0.1}

\allowdisplaybreaks

\icmltitlerunning{Nearly Minimax Optimal Reinforcement Learning 
for Linear Markov Decision Processes}

\begin{document}

\twocolumn[
\icmltitle{Nearly Minimax Optimal Reinforcement Learning \\ for Linear Markov Decision Processes}



\begin{icmlauthorlist}
\icmlauthor{Jiafan He}{UCLA}
\icmlauthor{Heyang Zhao}{UCLA}
\icmlauthor{Dongruo Zhou}{UCLA}
\icmlauthor{Quanquan Gu}{UCLA}
\end{icmlauthorlist}

\icmlaffiliation{UCLA}{Department of Computer Science, University of California, Los Angeles, CA 90095, USA}

\icmlcorrespondingauthor{Quanquan Gu}{qgu@cs.ucla.edu}

\icmlkeywords{Machine Learning, ICML}

\vskip 0.3in
]



\printAffiliationsAndNotice{}  

\begin{abstract}
We study reinforcement learning (RL) with linear function approximation. For episodic time-inhomogeneous linear Markov decision processes (linear MDPs) whose transition probability can be parameterized as a linear function of a given feature mapping, we propose the first computationally efficient algorithm that achieves the nearly minimax optimal regret $\tilde O(d\sqrt{H^3K})$, where $d$ is the dimension of the feature mapping, $H$ is the planning horizon, and $K$ is the number of episodes. Our algorithm is based on a weighted linear regression scheme with a carefully designed weight, which depends on a new variance estimator that (1) directly estimates the variance of the \emph{optimal} value function, (2) monotonically decreases with respect to the number of episodes to ensure a better estimation accuracy, and (3) uses a rare-switching policy to update the value function estimator to control the complexity of the estimated value function class. Our work provides a complete answer to optimal RL with linear MDPs, and the developed algorithm and theoretical tools may be of independent interest.
\end{abstract}

\section{Introduction} 

How to make reinforcement learning (RL) efficient with large state and action spaces has been a central research problem in the RL community. A widely used approach is \emph{function approximation}, which approximates the value function in RL with a predefined function class for efficient exploration and exploitation. Although the intuition is simple, some basic questions about the function approximation approach still remain open. For instance, what is the optimal sample complexity (or regret) for RL algorithms with function approximation to find the optimal policy? Such optimal sample complexity results have been widely-studied and established for tabular RL methods (e.g., \citealt{azar2017minimax,zhang2019regret,zhang2020almost}), but are still understudied for RL with function approximation. 

Some recent works have studied the optimal regret results for a special class of MDPs called \emph{linear mixture Markov decision processes (linear mixture MDPs)} \citep{jia2020model, ayoub2020model,zhou2021nearly, zhou2022computationally}, which assume that the transition probability of the MDP is a \emph{linear} combination of several base models. More specifically, \citet{zhou2021nearly} proposed the near-optimal algorithm for time-inhomogeneous linear mixture MDPs. \citet{zhou2022computationally} further proposed near-optimal horizon-free algorithm for time-homogeneous linear mixture MDPs under the assumption that the total reward is bounded by $1$. However, the computational efficiency of their algorithms highly depends on the \emph{value-targeted regression} procedure \citep{jia2020model, ayoub2020model}, which relies on an integration or sampling oracle of the individual base model. 
Such an integration or sampling oracle exists for some special linear mixture MDPs but can be computationally expensive or even intractable in the general case.

Another line of works studies the \emph{linear Markov decision processes} (linear MDPs) \citep{yang2019sample,jin2020provably}, which assumes that the transition probability and the reward of the environment enjoys a compact low-rank  representation. The most appealing feature of linear MDPs is that they can induce a linear structure of the value function for any policy, which makes sample-efficient RL possible. Meanwhile, the algorithms for linear MDPs directly approximate the value function itself, which is computationally more efficient than the algorithms for linear mixture MDPs. In particular, \citet{yang2019sample} first proposed a near-optimal RL algorithm with the access to a generative model, which can generate any number of samples for any given state-action pairs. Without accessing the generative model, \citet{jin2020provably} proposed an LSVI-UCB algorithm based on the principle of optimism in the face of uncertainty and achieved $\tilde O(\sqrt{d^3H^4K})$ regret, where $d$ is the dimension of a linear MDP, $H$ is the planning horizon and $K$ is the number of episodes. Nevertheless, their algorithm is not optimal since there exists an $O(\sqrt{dH})$ gap between their regret upper bound, and the lower bound $O(d\sqrt{H^3K})$ proved in \citet{zhou2021nearly}. \citet{zanette2020learning} studied a more general MDP class called \emph{low Bellman error class}, which contains linear MDPs as a special case, and they proposed a computationally inefficient algorithm with a near-optimal regret.

Therefore, a natural question arises\footnote{We are aware of a recently published work \citep{hu2022nearly}, which claims to achieve the nearly minimax optimal regret for linear MDPs. However, a closer examination of their proof can find a technical error, which makes their result invalid. 
We will discuss it in more detail and show why our algorithm and proof can get around the issue in Appendix \ref{section:wrong}. Using the techniques proposed by our paper, \citet{hu2022nearly} recently fixed the technical flaw by using the ``rare-switching'' update strategy and also abandoning the over-optimistic estimator. This is acknowledged in the updated arXiv version of \citet{hu2022nearly}.}:

\begin{center}
    Can we design a computationally efficient algorithm that achieves the minimax optimality for linear MDPs?
\end{center}

We give an affirmative answer to the above question in this work. Our contributions are listed as follows.
\begin{itemize}[leftmargin = *]
    \item We propose an algorithm $\algname$ which attains a near-optimal regret $\tilde O(d\sqrt{H^3K})$ when $K$ is large, which matches the lower bound \citep{zhou2021nearly} up to logarithmic factors. To the best of our knowledge, this is the first computationally efficient RL algorithm that is nearly minimax-optimal for linear MDPs.  
    
    \item The first key component of our algorithm is a variance-aware weighted ridge regression scheme, which is firstly introduced to acheive nearly minimax optimal regret for linear mixture MDPs in \citet{zhou2021nearly} and later improved in \citet{zhou2022computationally} to achieve horizon-free regret. Such a component reduces the variance of the estimators in our algorithm, which leads to a $\sqrt{H}$ improvement in the regret over \citet{jin2020provably}. 
    \item To improve the $d$ dependence, inspired by previous works for tabular RL \citep{azar2017minimax}, our algorithm utilizes a new strategy to estimate the variance of the estimated value function. Unlike the previous approach for linear mixture MDPs \citep{zhou2021nearly}, our new estimator directly estimates the variance of the \emph{true} value function and computes the difference between the variances of the true value function and the estimated one. Such a strategy allows the variance estimator to focus on a simpler function class that only includes the true value function, and therefore gives a tighter confidence set than that in \citet{jin2020provably}.
    \item To obtain a uniform variance upper bound, we construct our value function estimator as a monotonically decreasing estimator with a ``rare-switching'' update strategy, which makes the estimated value function decrease with respect to the episodes and being updated rarely. 
    Together with our new variance estimator, we can remove the additional $\sqrt{d}$ dependency from the previous regret, which makes our algorithm nearly minimax optimal. Notably, our algorithm only needs to update the policy $O(\log K)$ times instead of $K$ times, and therefore enjoys a low-switching cost.
\end{itemize}

For the ease of comparison, we summarize the regret bounds of our algorithm and previous algorithms for linear MDPs in Table \ref{table:11}. 

Recently, an independent concurrent work \citep{agarwal2022vo} proposed a different algorithm that can also achieve near-optimal regret for linear MDPs. Their algorithm follows the algorithm design in \citet{hu2022nearly}, which introduces an additional over-optimistic value function to construct a monotonic variance estimator, and a non-Markovian policy to fix the technical flaw in \citet{hu2022nearly}. In contrast, our algorithm takes a neat approach and constructs the monotonic variance estimator with a simple ``rare-switching'' update strategy, which enjoys low-switching cost. \citet{agarwal2022vo} also studied RL with nonlinear function approximation, which is beyond the scope of this work.


\noindent\textbf{Notation} In this work, we use lowercase letters to denote scalars and use lower and uppercase boldface letters to denote vectors and matrices respectively.  For a vector $\xb\in \RR^d$ and matrix $\bSigma\in \RR^{d\times d}$, we denote by $\|\xb\|_2$ the Euclidean norm and $\|\xb\|_{\bSigma}=\sqrt{\xb^\top\bSigma\xb}$. For two sequences $\{a_n\}$ and $\{b_n\}$, we write $a_n=O(b_n)$ if there exists an absolute constant $C$ such that $a_n\leq Cb_n$, and we write $a_n=\Omega(b_n)$ if there exists an absolute constant $C$ such that $a_n\geq Cb_n$. We use $\tilde O(\cdot)$ and $\tilde \Omega(\cdot)$ to further hide the logarithmic factors. For any $a \leq b \in \RR$, $x \in \RR$, let $[x]_{[a,b]}$ denote the truncate function $a\cdot \ind(x \leq a) + x \cdot \ind (a \leq x \leq b) + b \cdot \ind (b \leq x)$, where $\ind(\cdot)$ is the indicator function. For a positive integer $n$, we use $[n]=\{1,2,..,n\}$ to denote the set of integers from $1$ to $n$.

\newcolumntype{g}{>{\columncolor{LightCyan}}c}
\begin{table}[t!]
\centering
\begin{tabular}{cggg}
\toprule
\rowcolor{white}
Model & Algorithm & Regret
 \\

 \midrule


 \rowcolor{white}
&LSVI-UCB &  \\
\rowcolor{white}
 &\citep{jin2020provably} & \multirow{-2}{*}{$\tilde  O\big(\sqrt{d^3H^4K}\big)$}  \\

 
 &$\algname$ &    \\

 \multirow{-4}{*}{Linear MDP}&(\textbf{Our work}) &\multirow{-2}{*}{ $\tilde  O\big(d\sqrt{H^3K}\big)$} \\

 \midrule
 \rowcolor{white}
Lower bound & \citet{zhou2021nearly} & $\Omega\big(d\sqrt{H^3K}\big)$\\
\bottomrule
\end{tabular}
\caption{Comparison of RL with linear function approximation in terms of regret guarantee. }\label{table:11}
\end{table}

\section{Related Work}
\noindent\textbf{Near-optimal tabular reinforcement learning } 
There is a voluminous amount of works developing nearly minimax optimal algorithms for tabular MDPs under different settings \citep{azar2017minimax,  zanette2019tighter, zhang2019regret, simchowitz2019non, zhang2020almost, zhang2021reinforcement, he2021nearly}. A key idea behind these works is to exploit the $O(H^2)$ total variance of the value functions for each episode \citep{azar2017minimax,jin2018q}. 
\citet{azar2017minimax} first proposed this idea to design Bernstein-type bonuses in tabular MDPs and provided an $\tilde{O}(\sqrt{H^2SAK})$ regret upper bound, which matches the lower bound for the tabular setting. In their analysis, \citet{azar2017minimax} also introduced a new value function decomposition scheme which mainly focuses on the variances of the optimal value function rather than the estimated value function. \citet{zhang2021reinforcement} further improved the dependence on $H$ for the constant terms and achieved a nearly minimax optimal horizon-free regret (nearly independent of $H$) under the assumption that the total reward is bounded by $1$. Our algorithm extends the idea of Bernstein-type bonuses and value function decomposition in \citet{azar2017minimax} to RL with linear function approximation.

\noindent\textbf{Reinforcement Learning with Linear Function Approximation. } There exists a large body of literature on RL with linear function approximation \citep{jiang2017contextual, dann2018oracle, yang2019sample, jin2020provably, wang2020optimism, du2019good, sun2019model, zanette2020frequentist, yang2020reinforcement, modi2020sample, ayoub2020model, zhou2021nearly, he2021logarithmic, zhou2022computationally}. All these works assume certain linear structures of the underlying MDP. The most related work to ours is initiated by \citet{yang2019sample}, which assumes that the reward function and the transition probability are linear in the feature mapping $\phi(s, a)$ for each state-action pair $(s, a)$. \citet{jin2020provably} further considered \emph{Linear MDPs} and proposed LSVI-UCB which achieves an $\tilde{O}( \sqrt{d^3H^4K})$ regret bound. 
\citet{zanette2020frequentist} proposed a Thompson sampling based algorithm for linear MDPs, which attains a regret upper bound of order $\tilde{O}(\sqrt{d^4H^5K})$.
Another popular MDP model for RL with linear function approximation is \emph{linear mixture Markov Decision Processes} \citep{modi2020sample, yang2020reinforcement, jia2020model,ayoub2020model}, or \emph{Linear Kernel MDPs} \citep{zhou2021provably}, where the transition probability is a linear combination of several base models. 
For linear mixture MDPs, \citet{zhou2021nearly} is the first to achieve a nearly minimax optimal regret bound. There are also works achieving horizon-free regret bounds for time-homogeneous linear mixture MDPs \citep{zhang2021improved, zhou2022computationally}. Compared with \citet{zhou2021nearly}, our algorithm is the first to achieve the near-optimality for linear MDPs. 


\section{Preliminaries}
In this work, we consider the episodic Markov Decision Processes (MDP), where the MDP can be denoted by a tuple of $M(\cS, \cA, H, \{\reward_h\}_{h=1}^H, \{\PP_h\}_{h=1}^H)$. Here, $\cS$ is the state space, $\cA$ is the finite action space, $H$ is the length of each episode (i.e., planning horizon), $\reward_h: \cS \times \cA \rightarrow [0,1]$\footnote{In this work, we study the deterministic and known reward functions for simplicity, and it is not difficult to generalize our results to stochastic and unknown linear reward functions in \citep{jin2019provably}, where $\reward_h(s,a)=\big\la \bphi(s,a),\bmu_h\big\ra.$} is the reward function at stage $h$ and $\PP_h(s'|s,a) $ is the transition probability function at stage $h$ which denotes the probability for state $s$ to transfer to next state $s'$ with current action $a$. Following \citet{jin2020provably}, 
we assume that $\cS$ is a measurable space with possibly infinite number of states and $\cA$ is a finite set. A policy $\pi: \cS \times [H] \rightarrow \cA$ is a function that maps a state $s$ and the stage number $h$ to an action $a$. 
For any stage $h\in [H]$ and policy $\pi$, we define the value function $\vvalue_h^{\pi}(s)$ and the action-value function $\qvalue_h^{\pi}(s,a)$ as follows:
\begin{align}
\qvalue^{\pi}_h(s,a) &=\reward_h(s,a)\notag\\
&+ \EE\bigg[\sum_{h'=h+1}^H \reward_{h'}\big(s_{h'},a_{h'} \big)\big| s_h=s,a_h=a\bigg],\notag\\
\vvalue_h^{\pi}(s) &= \qvalue_h^{\pi}\big(s, \pi(s,h)\big),\notag
\end{align}
where $s_{h'+1}\sim \PP_h(\cdot|s_{h'},a_{h'})$ denotes the state at stage $h'+1$ and $a_{h'}=\pi(s_{h'},h')$ denotes the action at stage $h'$
. Furthermore, 
we can define the optimal value function $V_h^*$ and the optimal action-value function $\qvalue_h^*$ as $V_h^*(s) = \max_{\pi}\vvalue_h^{\pi}(s)$ and $\qvalue_h^*(s,a) = \max_{\pi}\qvalue_h^{\pi}(s,a)$. By this definition, the value function $\vvalue_h^{\pi}(s)$ and action-value function $\qvalue_h^{\pi}(s,a)$ are bounded in $[0,H]$.
For any function $\vvalue: \cS \rightarrow \RR$, we denote $[\PP_h \vvalue](s,a)=\EE_{s' \sim \PP_h(\cdot|s,a)}\vvalue(s')$ and $[\VV_h \vvalue](s,a)=[\PP_h V^2](s,a)-\big([\PP_h V](s,a)\big)^2$ for simplicity. Thus, for every stage $h\in[H]$ and policy $\pi$, we have the following Bellman equation for value functions $Q_{h}^{\pi}(s,a)$ and $V_{h}^{\pi}(s)$, as well as the Bellman optimality equation for optimal value functions $Q_{h}^{*}(s,a)$ and $V_{h}^{*}(s)$:
\begin{align}
    \qvalue_h^{\pi}(s,a) &= \reward_h(s,a) + [\PP_h\vvalue_{h+1}^{\pi}](s,a),\notag\\
    \qvalue_h^{*}(s,a) &= \reward_h(s,a) + [\PP_h\vvalue_{h+1}^{*}](s,a),\notag
\end{align}
where $\vvalue^{\pi}_{H+1}(s)=\vvalue^{*}_{H+1}(s)=0$. At the beginning of each episode $k\in[K]$, the agent selects a policy $\pi_k$ to be followed in this episode. At each stage $h\in[H]$, the agent first observes the current state $s_h^k$, chooses an action following the policy $\pi_k$ and then observes the next state with $s_{h+1}^k \sim \PP_h(\cdot|s_h^k,a_h^k)$. Based on these definitions, we further define the regret in the first $K$ episodes as follows:
\begin{definition}\label{def:Regret} 
For any algorithm $\alg$, we define its regret on learning an MDP $M(\cS, \cA, H, \reward, \PP)$ in the first $K$ episodes as the sum of the sub-optimality gaps for episode $k = 1,\ldots, K$, i.e.,
\begin{align}
    \text{Regret}(K) = \sum_{k=1}^K \vvalue_1^*(s_1^k) - \vvalue_1^{\pi_k}(s_1^k),\notag
\end{align}
where $\pi_k$ is the agent's policy in the $k$-th episode.
\end{definition}

\noindent\textbf{Linear Markov Decision Process}
In this work, we focus on the linear Markov decision Process \citep{jin2020provably,yang2019sample}, which is formally defined as follows:
\begin{definition}\label{assumption:linear-MDP}
An MDP $\cM(\cS, \cA, H, \{\reward_h\}_{h=1}^H, \{\PP_h\}_{h=1}^H)$ is a linear MDP if for any stage $h\in[H]$, there exists 
an unknown measure $\btheta_h(\cdot): \cS\rightarrow \RR^d$ and a known feature mapping $\bphi: \cS \times \cA \rightarrow \RR^d$, such that for each state-action pair $(s,a) \in \cS \times \cA$ and state $s' \in \cS$, we have
\begin{align}
    \PP_h(s'|s,a) = \big\la \bphi(s,a), \btheta_h(s')\big\ra.
\end{align}
\end{definition}
For simplicity, we assume that the norms of 
$\btheta_n(\cdot)$ and $\bphi(\cdot,\cdot)$ are upper bounded by $\|\bphi(s,a)\|_2 \leq 1$
and $\big\|\btheta_h(\cS)\big\|_2\leq \sqrt{d}$. For linear MDPs, we have the following property:
\begin{proposition}[Proposition 2.3, \citealt{jin2020provably}]\label{prop:linearq}
For any policy $\pi$, there exist weights $\{\wb_h^\pi\}_{h=1}^H$ such that for any state-action pair $(s,a)\in \cS \times \cA$ and stage $h\in[H]$, we have $[\PP V_{h+1}^\pi](s,a) = \la \bphi(s,a), \wb_h^\pi\ra$. 
\end{proposition}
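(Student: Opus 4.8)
The plan is to unfold the definition of $[\PP_h V_{h+1}^\pi](s,a)$ and exploit the linear structure of the transition kernel to factor out the feature vector $\bphi(s,a)$, leaving behind a quantity that depends only on $h$ and $\pi$. Concretely, I would start from the definition introduced in the preliminaries, $[\PP_h V_{h+1}^\pi](s,a) = \EE_{s'\sim \PP_h(\cdot|s,a)}V_{h+1}^\pi(s') = \int_{\cS} V_{h+1}^\pi(s')\,\td\PP_h(s'|s,a)$, and then substitute the linear MDP assumption from \Cref{assumption:linear-MDP}, namely $\PP_h(s'|s,a) = \la \bphi(s,a), \btheta_h(s')\ra$.

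Since $\bphi(s,a)$ does not depend on the integration variable $s'$, the bilinearity of the inner product lets me pull $\bphi(s,a)$ outside the integral, yielding
\begin{align}
[\PP_h V_{h+1}^\pi](s,a) = \int_{\cS} V_{h+1}^\pi(s')\,\big\la \bphi(s,a), \td\btheta_h(s')\big\ra = \Big\la \bphi(s,a),\ \int_{\cS} V_{h+1}^\pi(s')\,\td\btheta_h(s')\Big\ra.\notag
\end{align}
This identifies the desired weight as $\wb_h^\pi := \int_{\cS} V_{h+1}^\pi(s')\,\td\btheta_h(s')$, a vector in $\RR^d$ that depends only on the stage $h$ and the policy $\pi$ (through $V_{h+1}^\pi$) but not on the state-action pair $(s,a)$, exactly as required. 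Because $(s,a)$ was arbitrary, the same $\wb_h^\pi$ works uniformly over $\cS\times\cA$.

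The only technical point worth checking --- and the closest thing to an obstacle here --- is that the vector-valued integral defining $\wb_h^\pi$ is well-defined and that interchanging the inner product with the integral is legitimate over the possibly infinite state space $\cS$. This is controlled by the boundedness assumptions already in place: $V_{h+1}^\pi$ takes values in $[0,H]$ and the (signed, vector-valued) measure $\btheta_h(\cdot)$ satisfies $\|\btheta_h(\cS)\|_2 \le \sqrt{d}$, so the integral converges component-wise and linearity of integration justifies the exchange. I expect the full argument to be only a few lines, with essentially no difficulty beyond this measurability check.
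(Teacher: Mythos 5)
Your argument is correct and is essentially identical to the one the paper gives for its auxiliary Lemma~\ref{lemma:qvalue-linear}, which proves this statement in slightly greater generality (for any bounded $V$, with the norm bound $\|\wb\|_2 \le B\sqrt{d}$): both unfold $[\PP_h V](s,a)$ as an integral against $\btheta_h$, pull $\bphi(s,a)$ out by linearity, and set $\wb_h^\pi = \int V_{h+1}^\pi(s')\,\td\btheta_h(s')$. No issues.
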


\section{The Proposed Algorithm}

\begin{algorithm}[t]
    \caption{$\algname$}
    \begin{algorithmic}[1]\label{algorithm1}
   \REQUIRE Regularization parameter $\lambda>0$, confidence radius $\beta, \bar\beta,\tilde{\beta}$
   \STATE Initialize $k_\text{last}=0$ and for each stage $h\in[H]$ set $\bSigma_{0,h},\bSigma_{1,h}\leftarrow \lambda \Ib$ 
   \STATE For each stage $h\in[H]$ and state-action $(s,a)\in \cS \times \cA$, set $ Q_{0,h}(s,a)\leftarrow H, \check{Q}_{0,h}(s,a)\leftarrow 0$
\FOR{episodes $k=1,\ldots,K$}
\STATE Received the initial state $s_1^k$. 
    \FOR{stage $h=H,\ldots,1$}
        
        \STATE 
    $\hat{\wb}_{k,h}=\bSigma_{k,h}^{-1}\sum_{i=1}^{k-1}\bar\sigma_{i,h}^{-2}\bphi(s_h^i,a_h^i)\vvalue_{k,h+1}(s_{h+1}^{i})$ \alglinelabel{algorithm:line0}
        \STATE $\check{\wb}_{k,h}=\bSigma_{k,h}^{-1}\sum_{i=1}^{k-1}\bar\sigma_{i,h}^{-2}\bphi(s_h^i,a_h^i)\check{\vvalue}_{k,h+1}(s_{h+1}^{i})$\alglinelabel{algorithm:line1}
        \IF {there exists a stage $h'\in[H]$ such that $\det(\bSigma_{k,h'})\ge 2\det(\bSigma_{k_{\text{last}},h'})$}\alglinelabel{algorithm:det}
        \STATE $\qvalue_{k,h}(s,a)=\min\Big\{\reward_h(s,a)+\hat{\wb}_{k,h}^{\top}\bphi(s,a)+\beta\sqrt{\bphi(s,a)^{\top}\bSigma_{k,h}^{-1}\bphi(s,a)} ,{\qvalue}_{k-1,h}(s,a),H\Big\}$ \alglinelabel{algorithm1:min}
         \STATE $\check{\qvalue}_{k,h}(s,a)=\max\Big\{\reward_h(s,a)+\check{\wb}_{k,h}^{\top}\bphi(s,a)-\bar{\beta}\sqrt{\bphi(s,a)^{\top}\bSigma_{k,h}^{-1}\bphi(s,a)} ,\check{\qvalue}_{k-1,h}(s,a),0\Big\}$
        \STATE Set the last updating episode $k_{\text{last}}=k$
        \ELSE 
        \STATE $\qvalue_{k,h}(s,a)=\qvalue_{k-1,h}(s,a)$
        \STATE $\check{\qvalue}_{k,h}(s,a)=\check{\qvalue}_{k-1,h}(s,a)$
        \ENDIF  
        \STATE $\vvalue_{k,h}(s)=\max_{a}\qvalue_{k,h}(s,a)$
        \STATE $\check{\vvalue}_{k,h}(s)=\max_{a}\check{\qvalue}_{k,h}(s,a)$
        \alglinelabel{algorithm:line3}
    \ENDFOR
    \FOR{stage $h=1,\ldots,H$}
    \STATE Take action $a_h^k\leftarrow \argmax_{a} \qvalue_{k,h}(s_h^k,a)$
    \STATE Set the estimated variance $\sigma_{k,h}$ as in \eqref{eq:variance} 
     \STATE $\bar\sigma_{k,h}\leftarrow \max\big\{\sigma_{k,h}, H,2d^3H^2\|\bphi(s_h^k,a_h^k)\|_{\bSigma_{k,h}^{-1}}^{1/2}\big\}$\alglinelabel{algorithm:def-variance}
    \STATE $\bSigma_{k+1,h}=\bSigma_{k,h}+\bar\sigma_{k,h}^{-2}\bphi(s_h^k,a_h^k)\bphi(s_h^k,a_h^k)^{\top}$ 
    \STATE Receive next state $s_{h+1}^k$\alglinelabel{algorithm:line4}
\alglinelabel{algorithm:line2}
    \ENDFOR
\ENDFOR
    \end{algorithmic}
\end{algorithm}
In this section, we propose a new algorithm $\algname$ to learn the linear MDPs (See Definition \ref{assumption:linear-MDP}). The main algorithm is illustrated in Algorithm \ref{algorithm1}. In the sequel, we introduce the key ideas of the proposed algorithm one by one.

\subsection{Weighted Ridge Regression}
The basic framework of our algorithm follows the LSVI-UCB algorithm proposed by \citet{jin2020provably}. Based on Proposition \ref{prop:linearq} that the expected value function $[\PP_h V_{h+1}^\pi](s,a) = \la \bphi(s,a), \wb_h^\pi\ra$, Algorithm~\ref{algorithm1} reduces the learning of the optimal action-value function into a series of linear regression problems. In order to have a good estimation for the vector $\wb_h^\pi$ and achieve the minimax-optimal regret result, Algorithm \ref{algorithm1} adapts the weighted ridge regression method \citep{henderson1975best}, which was used in heteroscedastic linear bandits \citep{lattimore2015linear,kirschner2018information} and more recently RL with linear function approximation \citep{zhou2021nearly} for linear mixture MDPs.
In detail, for each stage $h\in[H]$ and episode $k\in[K]$,
we construct the estimator $\hat{\wb}_{k,h}$ by solving the following weighted ridge regression
\begin{align}
    \hat{\wb}_{k,h}&\leftarrow \argmin_{\wb\in \RR^d}\lambda\|\wb\|_2^2\notag\\
    &\qquad +\textstyle{\sum_{i=1}^{k-1}}\bar{\sigma}_{i,h}^{-2}\big(\wb^{\top}\bphi(s_h^i,a_h^i)-V_{k,h+1}(s_{h+1}^i)\big)^2.\notag
\end{align} 
Here, we take the inverse of the estimated variances $\sigma_{k,h}^2$ as the weights for the regression problem and set $\sigma_{k,h}$ as
 \begin{align*}
     \bar\sigma_{k,h}= \max\big\{\sigma_{k,h}, H,2d^3H^2\|\bphi(s_h^k,a_h^k)\|_{\bSigma_{k,h}^{-1}}^{1/2}\big\}
 \end{align*}
 in Line \ref{algorithm:def-variance} of Algorithm \ref{algorithm1}, which depends on the uncertainty term $\|\bphi(s_h^k,a_h^k)\|_{\bSigma_{k,h}^{-1}}$. Note that the uncertainty-dependent weight has also been used in \citet{he2022nearly} to defend  the adversarial corruption in the linear bandits problem. The reason why we want to use an uncertainty-dependent weight can be explained by the following lemma. 
    \begin{lemma}[Theorem 4.3, \citealt{zhou2022computationally}]\label{lemma:concentration_variance} 
        Let $\{\cG_k\}_{k=1}^\infty$ be a filtration, and $\{\xb_k,\eta_k\}_{k\ge 1}$ be a stochastic process such that
        $\xb_k \in \RR^d$ is $\cG_k$-measurable and $\eta_k \in \RR$ is $\cG_{k+1}$-measurable.
        Let $L,\sigma>0$, $\bmu^*\in \RR^d$. 
        For $k\ge 1$, 
        let $y_k = \la \bmu^*, \xb_k\ra + \eta_k$ and
        suppose that $\eta_k, \xb_k$ also satisfy 
        \begin{align}
            \EE[\eta_k|\cG_k] = 0,\ \EE [\eta_k^2|\cG_k] \leq \sigma^2,\  |\eta_k| \leq R,\,\|\xb_k\|_2 \leq L.\notag
        \end{align}
        For $k\ge 1$, let $\Zb_k = \lambda\Ib + \sum_{i=1}^{k} \xb_i\xb_i^\top$, $\bbb_k = \sum_{i=1}^{k}y_i\xb_i$, $\bmu_k = \Zb_k^{-1}\bbb_k$, and $\beta_k=\tilde O\big(\sigma\sqrt{d}+\max_{1 \leq i \leq k} |\eta_i|\min\{1, \|\xb_i\|_{\Zb_{i-1}^{-1}}\}\big)$.
        Then, for any $0 <\delta<1$,  with probability at least $1-\delta$, for all $k\in[K]$, we have 
        \begin{align}
            \big\|\textstyle{\sum}_{i=1}^{k} \xb_i \eta_i\big\|_{\Zb_k^{-1}} \leq \beta_k,\ \|\bmu_k - \bmu^*\|_{\Zb_k} \leq \beta_k + \sqrt{\lambda}\|\bmu^*\|_2. \notag
        \end{align}
    \end{lemma}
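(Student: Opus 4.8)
The statement bundles two conclusions, but the second follows from the first by routine manipulation of the normal equations, so essentially all the work is in the self-normalized bound $\|\sum_{i=1}^k \xb_i\eta_i\|_{\Zb_k^{-1}} \le \beta_k$. To see the reduction, expand $\bbb_k = \sum_{i=1}^k y_i\xb_i = (\Zb_k-\lambda\Ib)\bmu^* + \sum_{i=1}^k\eta_i\xb_i$, so that $\bmu_k-\bmu^* = \Zb_k^{-1}\sum_{i=1}^k\eta_i\xb_i - \lambda\Zb_k^{-1}\bmu^*$. Taking the $\Zb_k$-norm, the first term contributes $\|\Zb_k^{-1}\sum_{i=1}^k\eta_i\xb_i\|_{\Zb_k} = \|\sum_{i=1}^k\eta_i\xb_i\|_{\Zb_k^{-1}}$ and the second contributes $\lambda\|\Zb_k^{-1}\bmu^*\|_{\Zb_k} = \lambda\|\bmu^*\|_{\Zb_k^{-1}} \le \sqrt{\lambda}\|\bmu^*\|_2$, where the last inequality uses $\Zb_k^{-1}\preceq\lambda^{-1}\Ib$. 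The triangle inequality then yields $\|\bmu_k-\bmu^*\|_{\Zb_k}\le\|\sum_{i=1}^k\eta_i\xb_i\|_{\Zb_k^{-1}}+\sqrt{\lambda}\|\bmu^*\|_2$, which is precisely the claimed estimator bound once the first inequality is established.

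For the self-normalized bound I would follow the method of mixtures, but upgrade the sub-Gaussian analysis of Abbasi-Yadkori et al.\ to a variance-aware (Bernstein-type) one so that the leading term scales with $\sigma\sqrt{d}$ rather than $R\sqrt{d}$. The plan is to fix an auxiliary vector $\btheta$ and form the exponential process $M_k(\btheta)=\exp\big(\la\btheta,\textstyle\sum_{i=1}^k\eta_i\xb_i\ra-\tfrac{c}{2}\sum_{i=1}^k\la\btheta,\xb_i\ra^2\,\EE[\eta_i^2|\cG_i]\big)$. Because $\eta_i$ is conditionally mean-zero, bounded by $R$, and has conditional variance at most $\sigma^2$, a one-step Bernstein estimate of $\EE[\exp(\la\btheta,\xb_i\ra\eta_i)|\cG_i]$ shows that $M_k(\btheta)$ is a supermartingale, provided $|\la\btheta,\xb_i\ra|R$ stays in the range where the sub-exponential moment-generating-function bound is valid. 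Integrating $M_k(\btheta)$ against a Gaussian prior on $\btheta$ and applying the maximal inequality converts this fixed-direction control into the self-normalized quantity $\|\sum_{i=1}^k\xb_i\eta_i\|_{\Zb_k^{-1}}$, with the variance proxy $\sigma^2\sum_i\la\btheta,\xb_i\ra^2$ producing the $\sigma\sqrt{d\log(\cdot)}$ scaling.

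The main obstacle is exactly the restricted validity of the Bernstein moment estimate: unlike the sub-Gaussian case, the exponential bound for a bounded, small-variance increment holds only when the exponential parameter times $R$ is small, so a single global supermartingale cannot be built directly. To handle this I would peel off the large-deviation contributions, splitting the indices according to the size of $|\eta_i|\|\xb_i\|_{\Zb_{i-1}^{-1}}$: the bulk is controlled by the supermartingale argument above, while the residual is bounded deterministically by $\max_{1\le i\le k}|\eta_i|\min\{1,\|\xb_i\|_{\Zb_{i-1}^{-1}}\}$ times an accumulated sum of self-normalized feature norms. The closing ingredient is the elliptical potential lemma, which bounds $\sum_i\min\{1,\|\xb_i\|_{\Zb_{i-1}^{-1}}^2\}$ by $O\big(d\log(1+kL^2/(d\lambda))\big)$ and thereby converts the per-step errors into the logarithmic factors hidden in $\tilde O$. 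Combining the two pieces and taking a union bound over $k\in[K]$ (and over a discretization of the mixture parameter) gives $\beta_k=\tilde O\big(\sigma\sqrt{d}+\max_{1\le i\le k}|\eta_i|\min\{1,\|\xb_i\|_{\Zb_{i-1}^{-1}}\}\big)$, as claimed.
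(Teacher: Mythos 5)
The paper does not actually prove this lemma: it is imported verbatim as Theorem 4.3 of \citet{zhou2022computationally}, so there is no in-paper argument to compare against, and your proposal must be judged as a reconstruction of that external proof. Your reduction of the second conclusion to the first is correct and standard: expanding $\bbb_k=(\Zb_k-\lambda\Ib)\bmu^*+\sum_{i=1}^k\eta_i\xb_i$ and using $\Zb_k^{-1}\preceq\lambda^{-1}\Ib$ gives exactly $\|\bmu_k-\bmu^*\|_{\Zb_k}\le\big\|\sum_{i=1}^k\xb_i\eta_i\big\|_{\Zb_k^{-1}}+\sqrt{\lambda}\|\bmu^*\|_2$.

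The gap is in the core self-normalized Bernstein bound. A method of mixtures with a Gaussian prior is fundamentally at odds with a Bernstein-type one-step moment estimate: the bound $\EE[\exp(s\eta_i)\mid\cG_i]\le\exp\big(s^2\sigma^2/(2(1-|s|R/3))\big)$ holds only for $|s|<3/R$, while the Gaussian mixture integrates over all $\btheta$, including directions where $|\la\btheta,\xb_i\ra|R$ is large; truncating the prior destroys the closed form that makes the mixture computation go through, and your ``peeling'' is not developed enough to show how it is restored. More concretely, the way you propose to close the peeled residual does not recover the stated $\beta_k$: bounding it by $\max_{i}|\eta_i|\min\{1,\|\xb_i\|_{\Zb_{i-1}^{-1}}\}$ times an accumulated sum of self-normalized feature norms and invoking the elliptical potential lemma leaves an extra factor of $d\log(1+kL^2/(d\lambda))$ on the max term (or a factor growing like $\sqrt{dk}$ if the unsquared norms are summed), whereas the lemma asserts only polylogarithmic overhead there. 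The actual proof in \citet{zhou2022computationally} avoids the mixture entirely: it expands $\big\|\sum_{i\le k}\xb_i\eta_i\big\|_{\Zb_k^{-1}}^2$ recursively via rank-one updates into $\sum_i\eta_i^2\|\xb_i\|_{\Zb_{i-1}^{-1}}^2$ plus the scalar martingale $\sum_i\eta_i\la\xb_i,\Zb_{i-1}^{-1}\sum_{j<i}\xb_j\eta_j\ra$, applies Freedman's inequality to the latter --- this is precisely where $\max_i|\eta_i|\min\{1,\|\xb_i\|_{\Zb_{i-1}^{-1}}\}$ enters, as the range parameter multiplied only by $\log(1/\delta)$ --- and closes the resulting recursion by induction over $k$. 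To complete your argument you would need either to adopt that recursion or to supply a peeling scheme that demonstrably keeps the max term free of dimension factors.
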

By Lemma \ref{lemma:concentration_variance}, one can easily verify that $|\la \hat\wb_{k,h}, \bphi(s,a)\ra - \PP_hV_{k,h+1}(s,a)| = O\Big(\beta \big\|\bSigma_{k,h}^{-1/2} \bphi(s,a)\big\|_2\Big)$, 
where $\beta=\tilde O(\sqrt{d})$. Such an $\tilde O(\sqrt{d})$ dependence is similar to that in \citet{zhou2022computationally}, which allows our algorithm to use a tighter confidence set than \citet{jin2020provably}. Therefore, we can construct the optimistic value function $Q_{k,h}$ with the linear function and an additional exploration bonus term (Line 7 in Algorithm \ref{algorithm1}), i.e.,
 \begin{align*}
     \qvalue_{k,h}(s,a) \approx \reward_h(s,a)+\hat{\wb}_{k,h}^{\top}\bphi(s,a)+\beta\big\|\bSigma_{k,h}^{-1/2} \bphi(s,a)\big\|_2.
 \end{align*}
 With the help of the exploration bonus, we can show that the optimistic value function $Q_{k,h}(s,a)$ is an upper bound of the  optimal value function $Q_{h}^*(s,a)$ and the summation of the sub-optimality gaps can be upper bounded by the summation
of exploration bonus $\sum_{h=1}^H \sum_{k=1}^K\beta\sqrt{\bphi(s,a)^{\top}\bSigma_{k,h}^{-1}\bphi(s,a)}$. 
By adapting the weighted ridge regression, \citet{zhou2022computationally} proposed HF-UCRL-VTR+, which is able to achieve a nearly minimax optimal regret for linear mixture MDPs. However, their algorithm and approach cannot be directly applied to linear MDPs, and we need to construct a pessimistic value function $\check{V}_{k,h}$ for the optimal value function $Q_{h}^*(s,a)$ to estimate the gap between $V_{k,h}(s)$ and $V_{h}^*(s)$, where we have $V_{k,h}(s)-V_{h}^*(s)\leq V_{k,h}(s)-\check{V}_{k,h}(s)$.
Similar to the optimistic value function, we construct the vector 
$\check{\wb}_{k,h}$ by solving the following weighted ridge regression,
\begin{align}
    \check{\wb}_{k,h}&\leftarrow \argmin_{\wb\in \RR^d}\lambda\|\wb\|_2^2\notag\\
    &\qquad +\textstyle{\sum_{i=1}^{k-1}}\bar{\sigma}_{i,h}^{-2}\big(\wb^{\top}\bphi(s_h^i,a_h^i)-\check{V}_{k,h+1}(s_{h+1}^i)\big)^2,\notag
\end{align} 
and compute the pessimistic value function $\check{Q}_{k,h}$ as:
 \begin{align*}
     \check{\qvalue}_{k,h}(s,a) &\approx \reward_h(s,a)\notag\\
     &\qquad +\check{\wb}_{k,h}^{\top}\bphi(s,a)-\bar{\beta}\sqrt{\bphi(s,a)^{\top}\bSigma_{k,h}^{-1}\bphi(s,a)},
 \end{align*}
 where $\bar{\beta}=\tilde O\Big(\sqrt{d^3H^2}\Big)$. We can show that the pessimistic value function $\check{V}_{k,h}(s)$ is a lower bound for the optimal value function $V_{h}^*(s)$.
\subsection{Variance Estimator}\label{sec:varest}
We compare our variance estimator and its counterparts in \citet{zhou2021nearly}. 
\citet{zhou2021nearly} first introduced variance estimators into RL with linear function approximation. They studied linear mixture MDPs, and their algorithm estimates the variance of the \emph{optimistic} value function $V_{k,h+1}(s)$ directly. In comparison, for linear MDPs, estimating the variance of the \emph{optimistic} value function $V_{k,h+1}(s)$ will encounter the dependence issue, which is discussed in \citet{jin2020provably} and will introduce an additional $\sqrt{d}$ factor in the regret due to the covering-based decoupling argument.
Inspired by the previous works \citep{azar2017minimax,hu2022nearly}, 
we decompose the \emph{optimistic} value function $V_{k,h+1}(s)$ into the \emph{optimal} value function $V_{h+1}^*(s)$ and the sub-optimality gap $V_{k,h+1}(s)-V_{h+1}^*(s)$, then estimate their variances $[\VV_h V_{h+1}^*](s,a)$ and  $\big[\VV_h (V_{k,h+1}-V_{h+1}^*)\big](s,a)$ separately.

For the variance of \emph{optimal} value function $[\VV_h V_{h+1}^*](s,a)$, since neither the variance operator $\VV_h$ nor the optimal value function $V_{h+1}^*$ is observable, Algorithm \ref{algorithm1} takes several steps to estimate these two quantities. In detail, Algorithm \ref{algorithm1} uses the optimistic value function $V_{k,h+1}$ to estimate the  optimal value function $V_{h+1}^*$ and introduce an error term $D_{k,h}$ to bound the difference between $\VV_h V_{k,h+1}$ and $\VV_h V_{h+1}^*$.
For the variance operator, Algorithm \ref{algorithm1} follows \citet{zhou2021nearly} to write the variance as the difference between the second-order moment and the square of the first-order moment of $V_{k,h}$, which is upper bounded by the bonus term $E_{k,h}$. 
More specifically, the variance of function $V_{k,h}$ can be denoted by
\begin{align*}
    [\VV_h \vvalue_{k,h}](s,a)=[\PP_h V_{k,h}^2](s,a)-\big([\PP_h V_{k,h}](s,a)\big)^2.
\end{align*}
According to the Proposition \ref{prop:linearq}, the expectation $\PP_h V_{k,h}(s,a)$ and $\PP_h V^2_{k,h}(s,a)$ are linear in the feature mapping $\bphi(s,a)$ and can be approximated as follows,
\begin{align}
    &[\VV_h \vvalue_{k,h}](s,a)
    \approx \bar\VV_h\vvalue_{k,h+1}(s_h^k,a_h^k)\notag\\
    &:= \big[\tilde{\wb}^{\top}_{k,h}\bphi(s_h^k,a_h^k)\big]_{[0,H^2]}-\big[\hat{\wb}^{\top}_{k,h}\bphi(s_h^k,a_h^k)\big]_{[0,H]}^2, \notag
\end{align}
where 
\begin{align*}
\tilde{\wb}_{k,h}&:=\argmin_{\wb\in \RR^d}\lambda\|\wb\|_2^2\notag\\
&\qquad +\textstyle{\sum_{i=1}^{k-1}}\bar{\sigma}_{i,h}^{-2}\big(\wb^{\top}\bphi(s_h^i,a_h^i)-V^2_{k,h+1}(s_{h+1}^i)\big)^2 \\
\end{align*}
is 
the solution to the weighted ridge regression problem for the squared value function. To summarize, $\algname$ constructs the estimated variance $\sigma_{k,h}$ as follows:
\begin{align}
\sigma_{k,h}=\sqrt{[\bar{\VV}_{k,h}\vvalue_{k,h+1}](s_h^k,a_h^k)+E_{k,h}+D_{k,h}+H},\label{eq:variance}
\end{align}
where $E_{k,h}$ and $D_{k,h}$ are defined as follows
\begin{align}
    E_{k,h}&=\min \Big\{\tilde{\beta}\big\|\bSigma_{k,h}^{-1/2}\bphi(s_h^k,a_h^k)\big\|_2,H^2\Big\}\notag\\
    &\qquad +\min \Big\{2H\bar{\beta}\big\|\bSigma_{k,h}^{-1/2}\bphi(s_h^k,a_h^k)\big\|_2,H^2\Big\},\notag\\
    D_{k,h}&=\min\bigg\{4d^3H^2\Big(\hat{\wb}_{k,h}^{\top}\bphi(s_h^k,a_h^k)-\check{\wb}_{k,h}^{\top}\bphi(s_h^k,a_h^k)\notag\\
&+2\bar{\beta}\sqrt{\bphi(s_h^k,a_h^k)^{\top}\bSigma_{k,h}^{-1}\bphi(s_h^k,a_h^k)}\Big),d^3H^3\bigg\}.\notag
\end{align}
Here $E_{k,h}$ is the error between the estimated variance and the true variance of $V_{k,h+1}$, and $D_{k,h}$ is the error between the variance of $V_{k,h+1}$ and the variance of the optimal value function $V_h^*$. For term $D_{k,h}$, 
we use the difference between the optimistic value function $V_{k,h}$ and the pessimistic value function $\check V_{k,h}$ to bound the difference between $V_{k,h}$ and $V_h^*$. 
More discussions on the decomposition and the variance estimator can be found in Section \ref{section:key-tech}.

\section{Main Results}
In this section, we provide the regret bound for our algorithm $\algname$.
\begin{theorem}\label{theorem-1}
    For any linear MDP $M$, if we set the parameters
    $\lambda =1/H^2$ and confidence radius $\beta, \bar\beta, \tilde\beta$ as
    \begin{align}
    &\beta=O\Big(H\sqrt{d\lambda}+\sqrt{d \log^2\big(1+dKH/(\delta\lambda)\big)}\Big),\notag \\
    &\bar{\beta}=O\Big(H\sqrt{d\lambda} +\sqrt{d^3H^2\log^2\big(dHK/(\delta\lambda)\big)}\Big),\notag \\
    &\tilde{\beta}= O \Big(H^2\sqrt{d\lambda} +\sqrt{d^3H^4\log^2\big(dHK/(\delta\lambda)\big)}\Big),\notag
    \end{align}
    then with high probability of at least $1-7\delta$, the regret of $\algname$ is upper bounded as follows:
    \begin{align}
       \text{Regret}(K)\leq \tilde{O}\Big(d\sqrt{H^3K}+d^7H^8\Big).\notag
    \end{align}
    In addition, the number of updates for $Q_{k,h}, \check Q_{k,h}$ is upper bounded by $O(dH \log(1+K/\lambda))$. 
\end{theorem}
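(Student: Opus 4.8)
The plan is to follow the standard optimism-based regret analysis, but to carefully track the variance-weighted structure that the algorithm introduces. First I would establish the high-probability concentration events. Applying Lemma~\ref{lemma:concentration_variance} to the three weighted regressions that produce $\hat\wb_{k,h}$, $\check\wb_{k,h}$, and $\tilde\wb_{k,h}$—with per-round noise variances controlled by $\bar\sigma_{i,h}^{2}$ and ranges bounded by $H$ or $H^{2}$—I would show that on an event of probability at least $1-O(\delta)$,
\begin{align}
\big|\la\hat\wb_{k,h},\bphi(s,a)\ra - [\PP_h \vvalue_{k,h+1}](s,a)\big| &\le \beta\,\|\bphi(s,a)\|_{\bSigma_{k,h}^{-1}},\notag
\end{align}
together with the analogous pessimistic and second-moment bounds carrying radii $\bar\beta$ and $\tilde\beta$. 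The point is that the weighting yields $\beta=\tilde O(\sqrt d)$ rather than $\tilde O(\sqrt{dH})$, and this is exactly where the $\sqrt H$ and $\sqrt d$ savings over \citet{jin2020provably} enter.

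Next I would prove optimism and pessimism by backward induction on $h$: on the concentration event, $\check\qvalue_{k,h}(s,a)\le \qvalue_h^*(s,a)\le \qvalue_{k,h}(s,a)$ for all $k,h$, since the exploration bonus dominates the regression error and the $\min/\max$ truncations in Lines~\ref{algorithm1:min} onward preserve both bounds. These same truncations force $\{\qvalue_{k,h}\}_k$ to be monotonically nonincreasing and $\{\check\qvalue_{k,h}\}_k$ nondecreasing in $k$; this monotonicity is the structural property enabled by the rare-switching rule, and it is what lets me treat the weight $\bar\sigma_{k,h}^{2}$ as a \emph{uniformly} valid upper bound on the true conditional variance $[\VV_h \vvalue_{k,h+1}](s_h^k,a_h^k)$ up to the additive corrections $E_{k,h}$ (second-moment estimation error) and $D_{k,h}$ (the gap between the variance of $\vvalue_{k,h+1}$ and that of $\vvalue_{h+1}^{*}$, controlled through $\vvalue_{k,h+1}-\check\vvalue_{k,h+1}$).

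With optimism in hand, I would bound the regret by the cumulative bonus, unrolling the Bellman recursion and absorbing the martingale-difference terms by Azuma's inequality:
\begin{align}
\regret(K)\le\sum_{k=1}^K\big(\vvalue_{k,1}(s_1^k)-\vvalue_1^{\pi_k}(s_1^k)\big)\le \sum_{k,h} 2\beta\,\|\bphi(s_h^k,a_h^k)\|_{\bSigma_{k,h}^{-1}} + \text{(mart.)}.\notag
\end{align}
Writing $\|\bphi\|_{\bSigma_{k,h}^{-1}}=\bar\sigma_{k,h}\cdot\big(\|\bphi\|_{\bSigma_{k,h}^{-1}}/\bar\sigma_{k,h}\big)$ and applying Cauchy--Schwarz gives
\begin{align}
\sum_{k,h}\|\bphi(s_h^k,a_h^k)\|_{\bSigma_{k,h}^{-1}}\le\Big(\sum_{k,h}\bar\sigma_{k,h}^{2}\Big)^{1/2}\Big(\sum_{k,h}\tfrac{\|\bphi(s_h^k,a_h^k)\|_{\bSigma_{k,h}^{-1}}^{2}}{\bar\sigma_{k,h}^{2}}\Big)^{1/2},\notag
\end{align}
where the second factor is $\tilde O(\sqrt{dH})$ by the elliptical-potential lemma applied to the weighted Gram matrices $\bSigma_{k+1,h}=\bSigma_{k,h}+\bar\sigma_{k,h}^{-2}\bphi\bphi^{\top}$.

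The hard part will be bounding the first factor $\sum_{k,h}\bar\sigma_{k,h}^{2}$. Using the variance decomposition it splits into the leading term $\sum_{k,h}[\VV_h\vvalue_{h+1}^{*}](s_h^k,a_h^k)$, the cumulative errors $\sum_{k,h}(E_{k,h}+D_{k,h})$, and the $\bar\sigma$ floor terms. For the leading piece I would invoke the law of total variance along each trajectory, $\sum_h[\VV_h\vvalue_{h+1}^{*}](s_h^k,a_h^k)=\tilde O(H^2)$ up to a martingale, yielding $\tilde O(KH^2)$; combined with the second factor this produces the target $\beta\cdot\sqrt{KH^2}\cdot\sqrt{dH}=\tilde O(d\sqrt{H^3K})$. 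The genuine obstacle is the $D_{k,h}$ contribution: it is proportional to $\vvalue_{k,h+1}-\check\vvalue_{k,h+1}\ge\vvalue_{k,h+1}-\vvalue_{h+1}^{*}$, so $\sum_{k,h}D_{k,h}$ is itself a regret-like quantity. I would close this with a self-bounding argument—bounding $\sum_{k,h}(\vvalue_{k,h}-\check\vvalue_{k,h})$ by unrolling both the optimistic and pessimistic Bellman recursions and showing it is dominated by the same bonus sum times $\mathrm{poly}(d,H)$—so that the regret reappears on both sides of the inequality and can be solved for, with the uniform variance validity from the monotonicity step preventing the loss of an extra $\sqrt d$. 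All residual pieces (the $E_{k,h}$ errors, the floor $2d^3H^2\|\bphi\|^{1/2}$ inside $\bar\sigma_{k,h}$, and the quadratic slack from the self-bounding step) collect into the $K$-independent additive term $\tilde O(d^7H^8)$. Finally, the low-switching guarantee follows from the determinant-doubling trigger in Line~\ref{algorithm:det}: since $\lambda=1/H^2$ and $\bar\sigma_{k,h}\ge H$ keep the weighted feature increments bounded, $\log\det(\bSigma_{K,h})-\log\det(\lambda\Ib)=\tilde O(d)$ for each $h$, so any determinant can double at most $O(d\log(1+K/\lambda))$ times per stage, giving $O(dH\log(1+K/\lambda))$ updates in total.
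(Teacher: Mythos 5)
Your overall architecture --- optimism and pessimism by backward induction, monotonicity of $\qvalue_{k,h}$ and $\check\qvalue_{k,h}$ from the $\min/\max$ truncations, regret bounded by the weighted bonus sum via Cauchy--Schwarz and the elliptical potential, the law of total variance for the leading $\sum_{k,h}[\VV_h\vvalue^*_{h+1}]$ term, a self-bounding inequality to absorb the $D_{k,h}$ contribution, and determinant-doubling for the switching count --- matches the paper's proof. But there is one genuine gap, and it sits at the center of the argument: you claim the refined radius $\beta=\tilde O(\sqrt d)$ follows from ``applying Lemma~\ref{lemma:concentration_variance} to the weighted regression that produces $\hat\wb_{k,h}$.'' That application is invalid as stated, because the regression noise $\vvalue_{k,h+1}(s_{h+1}^i)-[\PP_h\vvalue_{k,h+1}](s_h^i,a_h^i)$ is not a martingale difference sequence: $\vvalue_{k,h+1}$ is itself constructed from the transitions $(s_h^i,a_h^i,s_{h+1}^i)$ appearing in the sum, so the conditional mean of the noise is not zero. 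This is exactly the dependence issue identified by \citet{jin2020provably}; repairing it by a uniform covering argument over the value-function class costs $\sqrt{\log\mathcal{N}_\epsilon}=\tilde O(\sqrt{d^3H^2})$ and only yields the coarse radius $\bar\beta$, which would lose a $\mathrm{poly}(d,H)$ factor in the final regret and fail to match the lower bound. The weighting alone buys the $\sqrt H$; it does not buy the $\sqrt d$.

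The missing idea is to decompose the regression \emph{noise} itself (not only the variance proxy, which you do decompose) as the contribution of $\vvalue^*_{h+1}$ plus that of $\Delta\vvalue_{k,h+1}=\vvalue_{k,h+1}-\vvalue^*_{h+1}$. The $\vvalue^*_{h+1}$ part is a fixed, data-independent function, so Lemma~\ref{lemma:concentration_variance} applies directly with conditional variance at most $\bar\sigma_{i,h}^2$ and gives $\tilde O(\sqrt d)$ with no covering. The gap part still requires covering, but its conditional variance is at most $D_{i,h}/(d^3H)\le\bar\sigma_{i,h}^2/(d^3H)$ --- this is precisely what the $D$-component of $\bar\sigma_{i,h}$ together with the monotonicity of $\vvalue_{k,h+1}$ buys --- so the $\tilde O(\sqrt{d^3H^2})$ covering cost is offset and the term stays $\tilde O(\sqrt d)$. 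Moreover, the variance-domination needed at stage $h$ relies on optimism and pessimism at stage $h+1$, which rely on the refined concentration at stage $h+1$; hence the refined event must be established by backward induction over $h$ (Lemma~\ref{lemma:002}), not as a one-shot concentration statement. Without this piece the rest of your outline goes through but only with radius $\bar\beta$ throughout, and the claimed $\tilde O(d\sqrt{H^3K})$ bound is not reached.
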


\begin{remark}
When the number of episodic $K$ satisfies that $K $ is large, the regret can be simplified as $\tilde{O}(d\sqrt{H^3K})$. Compared with the lower bound $\Omega(d\sqrt{H^3K})$ proved in \citet{zhou2021nearly}, our regret bound matches the lower bound up to logarithmic factors, which suggests that $\algname$ is near-optimal for linear MDPs.
\end{remark}
\noindent\textbf{Computational Complexity} As shown in \citet{jin2020provably}, the computational complexity of the original LSVI-UCB is $O(d^2|\cA|HK^2)$, where $\cA$ is a finite action space and $|\cA|$ is the size of the action set.
Compared with the LSVI-UCB algorithm, Algorithm \ref{algorithm1} uses the ``rare-switching'' technique, where the algorithm only updates the estimated value functions if the determinant of the covariance matrix $\det(\bSigma_{h,k})$ doubles (Line \ref{algorithm:det}).
According to Lemma \ref{lemma:update-number}, the number of episodes that triggers the updating criterion is at most $dH\log (1+K/\lambda)$ and the action-value function $Q_{k,h}(s, a)$ can be represented as a minimum over $dH\log (1+K/\lambda)$ quadratic functions. 
Therefore, given all previous optimistic weight vectors $\wb_{i,h}$ and covariance matrices $\bSigma_{i,h}$, computing the optimistic value function $Q_{k,h}(s,a)$ needs $\tilde O(d^3H)$ computational complexity. Thus, for each episode $k\in[K]$, calculating the value function $Q_{k,h}(s_h^k,a)$, choosing the action $a_h^k\leftarrow \arg\max_{a}Q_{k,h}(s_h^k,a)$ and 
estimating the variance $\bar{\sigma}_{k,h}$ will only lead to $\tilde O(d^3H^2|\cA|)$ computational complexity. 

For computing the linear regression weight vectors (Line \ref{algorithm:line0} to Line \ref{algorithm:line1}), if the updating criterion is not triggered in episode $k$, then  $\algname$ only needs to update the weight vectors $\hat\wb_{k,h}$ and $\check\wb_{k,h}$. Since the value functions $V_{k,h+1}$ and $\check{V}_{k,h+1}$ remain unchanged, we only need to compute the new terms $\sigma_{k-1,h}^{-2}\bphi(s_h^{k-1},a_h^{k-1})V_{k,h+1}(s_{h+1}^{k-1})$ and $\sigma_{k-1,h}^{-2}\bphi(s_h^{k-1},a_h^{k-1})\check{V}_{k,h+1}(s_{h+1}^{k-1})$,
which has an $O(d^3H|\cA|)$ computational complexity. 
On the other hand, if the updating criterion is triggered in episode $k$, then $\algname$ needs to update the value function and recalculate the weight vectors $\hat{\wb}_{k,h}, \check{\wb}_{k,h}$, which incurs an $\tilde O(d^4H^2|\cA|K)$ computational complexity. Combining the computational complexity for all episodes and noticing that the number of episodes that trigger the updating criterion is at most $\tilde O(dH)$, the total computational complexity of $\algname$ is $\tilde O(d^4H^3|\cA|K)$, which improves the original LSVI-UCB algorithm by a factor of $K$. 

\section{Overview of Key Techniques}\label{section:key-tech}
In this section, we provide an overview of the key techniques in our algorithm design and analysis.

\subsection{ Decompose $V_{k,h+1}$ to $V^*_{h+1}$ and $V_{k,h+1}-V^*_{h+1}$} \label{section:keyidea-1}
We start with estimating the Bellman backup $[\PP_hV_{k,h+1}](s,a)$, which is the main difficulty in almost all existing analyses of algorithms for linear MDPs. According to Proposition \ref{prop:linearq}, for any value function $V$ and state-action pair $(s,a)$, the Bellman backup $[\PP_h V](s,a)$ is always a linear function of the feature mapping $\phi(s,a)$ 
can be approximated as follows
\begin{align*}
    &[\hat{\PP}_{k,h}  V_{k,h+1}](s,a)\notag\\
    &\approx \bphi(s,a)^{\top}\bSigma_{k,h}^{-1}\sum_{i=1}^{k-1}\sigma_{i,h}^{-2}\bphi(s_h^i,a_h^i)V_{k,h+1}(s_{h+1}^i),
\end{align*}
which utilizes all the past observations $\phi(s_h^i, a_h^i)$ and the associated values $V(s_{h+1}^k)$.
In addition, the estimation error for this estimator can be measured by
\begin{align}
    &[\hat{\PP}_{k,h} V_{k,h+1}](s,a)-[\PP_{h}  V_{k,h+1}](s,a)\notag\\
&\approx\bphi(s,a)^{\top}\bSigma_{k,h}^{-1}\sum_{i=1}^{k-1}\sigma_{i,h}^{-2}\bphi(s_h^i,a_h^i)\eta_{i,h}(V_{k,h+1}),\label{eq:0-1}
\end{align}
where $\eta_{i,h}(V)=V(s_{h+1}^i)-[\PP_hV](s_{h+1}^i)$ denotes the stochastic transition noise at episode $i$ with value function $V$.
According to the Bernstein-type self-normalized martingale inequality (Lemma \ref{lemma:concentration_variance}), the summation of stochastic noise can be bounded by a small value (e.g., $\big|[\hat{\PP}_{k,h} V](s,a)-[\PP_{h} V](s,a)\big|\leq \beta\sqrt{\bphi(s,a)^{\top}\bSigma_{k,h}^{-1}\bphi(s,a)}$). 
However, \citet{jin2020provably} noticed that the estimation of the optimistic value function $V_{k,h+1}$ \emph{depends} on the past observations $(s_h^k, a_h^k, s_{h+1}^k)$, which violates the conditional independence required by the martingale concentration inequality, i.e., $\EE\big[\eta_{i,h}(V_{k,h+1})\big]\ne 0$.

To deal with this problem, \citet{jin2020provably} applied the uniform convergence argument based on covering number for all possible value functions and introduce a fixed covering set to replace $V_{k,h}$ in their analysis. In detail, the function class considered in \citet{jin2020provably} is denoted by
\begin{align}
    \mathcal{V}&=\bigg\{V\bigg|V(\cdot)=\max_{a}\min\bigg(H,\wb_{i}^\top\bphi(\cdot,a)\notag\\
    &\qquad +\beta\sqrt{\bphi(\cdot,a)^\top \bSigma_{i}^{-1}\bphi(\cdot,a)}\bigg),\|\wb_i\|\leq L,\bSigma_i \succeq \lambda \Ib\bigg\}.\notag
\end{align}
We can cover $\mathcal{V}$ by an $\epsilon$-net denoted by $\cN_\epsilon$, and its covering entropy $\log \mathcal{N}_\epsilon$ satisfies $\log \mathcal{N}_\epsilon=\tilde O(d^2)$.
Such an approach, although fixing the dependency issue in \eqref{eq:0-1}, 
introduces an additional $\sqrt{d}$ factor to their final regret since each $V$ belongs to a quadratic function class by their optimistic construction, which prevents them from achieving the optimal $d$ dependency in the regret. 

In comparison, our approach gets around the covering number issue by decomposing the value function $V_{k,h}$ into the optimal value function $V^*_{h+1}$ and the sub-optimality gap $V_{k,h+1}-V^*_{h+1}$. Such an analysis approach has been firstly considered in the tabular MDPs \citet{azar2017minimax, zhang2021reinforcement} and later in the linear MDP by \citet{hu2022nearly}. More specifically, we have
\begin{align}
    &[\hat{\PP}_{k,h} V_{k,h+1}](s,a)-[\PP_{h}  V_{k,h+1}](s,a)\notag\\
    &\approx\bphi(s,a)^{\top}\bSigma_{k,h}^{-1}\sum_{i=1}^{k-1}\sigma_{i,h}^{-2}\bphi(s_h^i,a_h^i)\eta_{i,h}(V_{k,h+1})\notag\\
    &=\underbrace{\bphi(s,a)^{\top}\bSigma_{k,h}^{-1}\sum_{i=1}^{k-1}\sigma_{i,h}^{-2}\bphi(s_h^i,a_h^i)\eta_{i,h}(V_{h+1}^*)}_{I_1}\notag\\
    & + \underbrace{\bphi(s,a)^{\top}\bSigma_{k,h}^{-1}\sum_{i=1}^{k-1}\sigma_{i,h}^{-2}\bphi(s_h^i,a_h^i)\eta_{i,h}(\Delta V_{k,h+1})}_{I_2},\label{eq:0-2}
\end{align}
where $\Delta V_{k,h+1}=V_{k,h+1}-V^*_{h+1}$ denotes the estimation error for value function $V_{k,h+1}$. For the first term $I_1$, as discussed in Section \ref{sec:varest}, we can use $V_{k,h+1}$ to approximate the optimal value function $V_{h+1}^*$ and the estimation error for the variance can be bounded by:
    \begin{align}
    \big|[\bar\VV_h\vvalue_{k,h+1}](s_h^k,a_h^k)         -[\VV_h\vvalue_{h+1}^*](s_h^k,a_h^k)\big|\leq E_{k,h}+D_{k,h}.\notag
\end{align}
Since the optimal value function $V_{h+1}^*$ is fixed across all episodes $k\in[K]$ and does not depend on the past observations, such an approach can prevent the covering number argument and save a $\sqrt{d}$ factor in the regret compared with \citet{jin2020provably}. For the second term $I_2$, the sub-optimality gap $\Delta V_{k,h+1}=V_{k,h+1}-V^*_{h+1}$ depends on the past observations and we still need to use the covering number argument. However, the magnitude of the sub-optimality gap $\Delta V_{k,h+1}$ is small provided that $V_{k,h+1}$ is an accurate estimate for $V_{h+1}^*$. In this case, term $I_2$ will be dominated by term $I_1$ even with the extra factors from the covering number argument. 
With the help of the decomposition, we have the following 
Bernstein-type error bound between the estimated $\hat{\PP}_{k,h} V_{k,h+1}$ and its true value:
 \begin{align}
    \big|\hat{\wb}_{k,h}^{\top}\bphi(s,a)-\PP_h\vvalue_{k,h+1}(s,a)\big| \leq \beta \|\bphi(s,a)\|_{\bSigma_{k,h}^{-1}}.\notag
\end{align}
 This analysis also explains why Algorithm \ref{algorithm1} needs to estimate the variance of $V_{h+1}^*$ and $\Delta V_{k,h+1}$ instead of $V_{k,h+1}$. 

\subsection{ Monotonic Variance Estimator}\label{sec-montinoc}
Here we provide more details about the 
variance estimator $\sigma_{k,h}$. According to our previous discussion, we decompose the value function $V_{k,h}$, and only need to control the estimation errors $I_1$ and $I_2$ in~\eqref{eq:0-2} separately. In order to derive a Bernstein-type
 error bound, we use Lemma \ref{lemma:concentration_variance} for both the optimal value function $V_{h+1}^*$ and $\Delta V_{k,h+1}$, which require an estimation for the variance $\VV_h V_{h+1}^*$ and $\VV_h [V_{k,h+1} - V_{h+1}^*]$. For the variance $\VV_h V_{h+1}^*$, as we discussed in Section \ref{sec:varest}, we approximate it with the following empirical variance:
\begin{align}
&\bar\VV_h\vvalue_{k,h+1}(s_h^k,a_h^k) - [\VV_h V_{h+1}^*](s_h^k,a_h^k) \notag \\
&= \underbrace{\bar\VV_h\vvalue_{k,h+1}(s_h^k,a_h^k) - [\VV_h V_{k,h+1}](s_h^k,a_h^k)}_{J_1}\notag \\
&\quad + \underbrace{[\VV_h V_{k,h+1}](s_h^k,a_h^k) - [\VV_h V_{h+1}^*](s_h^k,a_h^k)}_{J_2},\notag
\end{align}
 where the estimation error $J_1$ can be controlled by a Hoeffding-type bound (term $E_{k,h}$) and the estimation error $J_2$ can be upper bounded by 
\begin{align}
     &\big|[\VV_h\vvalue_{k,h+1}](s_h^k,a_h^k)         -[\VV_h\vvalue_{h+1}^*](s_h^k,a_h^k)\big|\notag\\
     &\leq 4H [\PP_h(\vvalue_{k,h+1}-\vvalue_{h+1}^*)](s_h^k,a_h^k).\notag
\end{align}
 For this error bound, the optimal value function $V^*_{h+1}$ is not observable and we replace it by the \emph{pessimistic} value function $\check V_{k,h}$, which gives an upper bound
\begin{align*}
    &4H [\PP_h(\vvalue_{k,h+1}-\vvalue_{h+1}^*)](s_h^k,a_h^k)\notag\\
    &\leq 4H [\PP_h(\vvalue_{k,h+1}-\check{\vvalue}_{k,h+1})](s_h^k,a_h^k).
\end{align*}
The above term is further dominated by the term $D_{k,h}$.
With a similar approach, for the variance $\VV_h [V_{k,h+1} - V_{h+1}^*]$, it can be upper bound by
 \begin{align}
        &[\VV_h(\vvalue_{k,h+1}-\vvalue_{h+1}^*)](s_h^k,a_h^k)\notag\\
        &\leq 2H [\PP_h(\vvalue_{k,h+1}-\vvalue_{h+1}^*)](s_h^k,a_h^k)\notag\\
        &\leq 2H [\PP_h(\vvalue_{k,h+1}-\check{\vvalue}_{k,h+1})](s_h^k,a_h^k) \approx D_{k,h}/(d^3H),\label{eq:0-3}
    \end{align}
where we approximate the optimal value function $V^*_{h+1}$ by the \emph{pessimistic} value function $\check V_{k,h}$ and introduce an extra $d^3H$-factor in the $D_{k,h}$ to offset the error caused by the covering number argument.

 However, there exists another difficulty in our algorithm and analysis when we extend the result in \eqref{eq:0-3} to future episode $i>k$. In detail, while the value function $V_{i,h+1}(s_{h+1}^k)$ and corresponding variance $[\VV_h(\vvalue_{i,h+1}-\vvalue_{h+1}^*)](s_h^k,a_h^k)$ 
 will change across different episodes, the estimated variance $\sigma_{k,h}$ is chosen at episode $k$ and cannot be changed in the subsequent episode. Therefore, $\sigma_{k,h}$ should be a uniform variance upper bound for all subsequent episodes. To achieve such a uniform variance upper bound, it suffices to have the 
 sub-optimality gap ${\vvalue}_{k,h+1}-V_{h+1}^*$ to be monotonically decreasing. Our solution is to set $V_{k,h+1}(s)$ to be a monotonically decreasing sequence in $k$ given any state $s$, by setting it as the minimum between its current estimate and its predecessor $V_{k-1, h+1}$ (Line \ref{algorithm1:min} of Algorithm \ref{algorithm1}). A similar approach is also applied to $\check V_{k,h+1}$ to guarantee the estimate sequence is monotonically increasing in $k$. Then, the following property shows that the estimated variance of the sub-optimality at episode $k$ $D_{k,h}$ holds for all the subsequent episodes.
 \begin{align*}
         [\VV_h(\vvalue_{i,h+1}-\vvalue_{h+1}^*)](s_h^k,a_h^k)\leq D_{k,h}/(d^3H), \forall i>k.
    \end{align*}
 This idea was firstly introduced by \citet{azar2017minimax} for tabular MDPs. \citet{hu2022nearly} adopted a similar idea to guarantee the monotonicity for linear MDPs, while their approach is to construct another sequence of  ``over-optimistic'' value functions, which turns out to be flawed as we will discuss in Appendix \ref{section:wrong}.



\subsection{Rare-Switching Value Function Update}
As we discussed in Section \ref{sec-montinoc}, we ensure the monotonicity and construct the variance estimation, by taking minimization with its predecessor $V_{k-1, h+1}$.
However, this approach will introduce an extra issue for the augmented value function class. In detail, the optimistic value function $V_{k,h}$ can be denoted by the minimum over several quadratic functions and belongs to the following function class,
\begin{small}
\begin{align}
    \mathcal{V}_h&=\Bigg\{V\bigg|V(\cdot)=\max_{a}\min_{1\leq i\leq l}\min\bigg(H,r_h(\cdot,a)+\wb_{i}^\top\bphi(\cdot,a)\notag\\
    &\qquad +\beta\sqrt{\bphi(\cdot,a)^\top \bSigma_{i}^{-1}\bphi(\cdot,a)}\bigg),\|\wb_i\|\leq L,\bSigma_i \succeq \lambda \Ib\Bigg\},\notag
\end{align}
\end{small}
where $l$ is the number of quadratic functions and equals to the number of policy updates in Algorithm \ref{algorithm1}. Here, we denote the covering number of that function class by $\cN$, and the covering number of a quadratic function class by $\cN_{q}$. 
We are specifically interested in the covering entropy $\log \cN$, which is a standard complexity measure of the function class, and it will directly affect the regret of our algorithm. The standard approach to computing the covering entropy suggests that $\log \cN = l\log \cN_q$. In this case, if we update the value function at each episode $k\in[K]$ and minimize with its predecessor, then there will be an extra $K$ factor in the covering number, which is unacceptable.
Inspired by the ``rare-switching'' technique \citep{AbbasiYadkori2011ImprovedAF,wang2021provably}, it is not necessary or efficient to update the value functions $V_{k,h+1}$ and $\check V_{k,h+1}$ at each episode. Instead, we only need to update the value function when the determinant of the covariance matrix grows much larger than before (Line \ref{algorithm:det} in Algorithm \ref{algorithm1}), which requires at most $\tilde O(dH)$ updates.
Such an update strategy reduces the covering entropy from $\log \cN = K\log \cN_q$ to $\log \cN = \tilde O(dH)\log \cN_q$, which makes the regret of Algorithm \ref{algorithm1} tight. In addition, the ``rare-switching'' nature also reduces the computational complexity from $\Omega(K^2)$ to $\Omega(K)$, which makes $\algname$ more efficient.




\section{Conclusions and Future Work}

In this paper, we propose a near-optimal algorithm $\algname$ for linear MDPs. $\algname$ is based on weighted ridge regression, where the weights are constructed from a novel variance estimator that comes from a direct estimation of the variance of the true value function, and a ``rare-switching'' updating rule to update the value function estimator. We prove that with high probability, $\algname$ obtains an $\tilde{O}(d\sqrt{H^3K})$ regret, which matches the lower bound in \citet{zhou2021nearly} up to logarithmic factors. Our algorithm is also computationally efficient. In the future, we will study how to design computationally efficient near-optimal RL algorithms with general nonlinear function approximation with misspecification.

\section*{Acknowledgements}

We thank the anonymous reviewers for their helpful comments. JH, HZ, DZ and QG are supported in part by the National Science Foundation CAREER Award 1906169 and the Sloan Research Fellowship.  The views and conclusions contained in this paper are those of the authors and should not be interpreted as representing any funding agencies.

\bibliography{ref.bib}
\bibliographystyle{ims}
\newpage
\appendix
\onecolumn
\section{Comparison with \citet{hu2022nearly}}\label{section:wrong}
In this section, we give a detailed comparison with \citet{hu2022nearly}. We first elaborate on the importance of the monotonic property, then discuss the issue on the over-optimistic value function $\dot{\widehat{V}}_{k,h}(s)$ proposed in \citet{hu2022nearly} and finally illustrate the difference in the algorithm design between the algorithm in \citet{hu2022nearly} and our algorithm. 

As we discuss in the Section \ref{section:key-tech}, both our $\algname$ algorithm and LSVI-UCB+ algorithm in \citet{hu2022nearly} get rid of the covering number issue by decomposing the value function $V_{k,h+1}(s)$ to $V_{h+1}^*(s)$ and $V_{k,h+1}(s)-V_{h+1}^*(s)$. In detail, from the proof of Lemma \ref{lemma:002}, we have shown in \eqref{eq:0003} that the estimation error can be decomposed as
\begin{align}
    &\bigg\|\bSigma_{k,h}^{-1}\sum_{i=1}^{k-1}\bar\sigma_{i,h}^{-2}\bphi(s_h^i,a_h^i)\big(\vvalue_{k,h+1}(s_{h+1}^{i})-\PP_h\vvalue_{k,h+1}(s_h^i,a_h^i)\big)\bigg\|_{\bSigma_{k,h}}\notag\\
    &=\bigg\|\sum_{i=1}^{k-1}\bar\sigma_{i,h}^{-2}\bphi(s_h^i,a_h^i)\big(\vvalue_{k,h+1}(s_{h+1}^{i})-\PP_h\vvalue_{k,h+1}(s_h^i,a_h^i)\big)\bigg\|_{\bSigma_{k,h}^{-1}}\notag\\
    &\leq \underbrace{\bigg\|\sum_{i=1}^{k-1}\bar\sigma_{i,h}^{-2}\bphi(s_h^i,a_h^i)\big(\vvalue^*_{h+1}(s_{h+1}^{i})-\PP_h\vvalue^*_{h+1}(s_h^i,a_h^i)\big)\bigg\|_{\bSigma_{k,h}^{-1}}}_{J_1}\notag\\
    &\qquad +\underbrace{\bigg\|\sum_{i=1}^{k-1}\bar\sigma_{i,h}^{-2}\bphi(s_h^i,a_h^i)\big(\Delta{\vvalue}_{k,h+1}(s_{h+1}^{i})-[\PP_h(\Delta{\vvalue}_{k,h+1})](s_h^i,a_h^i)\big)\bigg\|_{\bSigma_{k,h}^{-1}}}_{J_2}.\notag
\end{align}
To control the concentration error on the term $J_2$, we use Lemma \ref{lemma:concentration_variance} and only need to estimate the variance $\VV_h[\Delta V_{k,h+1}](s,a)=\VV_h\big[ (V_{k,h+1}-V_{h+1}^*)\big](s,a)$, which is trivially upper bounded by $2H\cdot [\PP_h \Delta V_{k,h+1}](s,a)$. In order to make the upper bound of variance at episode $i$ hold for all subsequent episode $k>i$, we need to guarantee that the trivial upper bound $2H\cdot [\PP_h \Delta V_{k,h+1}](s,a)$ is decreasing in $k$, which requires the optimistic value function $V_{k,h+1}(s)$ to be monotonically decreasing.

To satisfy this requirement, \citet{hu2022nearly} constructs an over-optimistic value function $\dot{\widehat{V}}_{k,h}(s)$, which has the following monotonicity property.
\begin{lemma}[Lemma D.2, \citealt{hu2022nearly}]\label{lemma:hu}
    For any stage $h\in[H]$ and episodes $i< j$, the over-optimistic value function $\dot{\widehat{V}}_{j,h}(s)$ satisfies:
    \begin{align*}
        \dot{\widehat{V}}_{i,h}(s)\ge \hat{V}_{j,h}(s),
    \end{align*}
    where $\hat{V}_{j,h}(s)$ is the optimistic value function.
\end{lemma}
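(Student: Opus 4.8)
The plan is to prove the inequality by backward induction on the stage $h$, reducing the value-function comparison to an action-value comparison at each stage. The base case $h=H+1$ is immediate, since $\dot{\widehat{V}}_{i,H+1}(s)=\hat{V}_{j,H+1}(s)=0$. For the inductive step, suppose $\dot{\widehat{V}}_{i,h+1}(s)\ge \hat{V}_{j,h+1}(s)$ for all $s$ and all $i<j$. Because $\dot{\widehat{V}}_{i,h}(s)=\max_a \dot{\widehat{Q}}_{i,h}(s,a)$ and $\hat{V}_{j,h}(s)=\max_a \hat{Q}_{j,h}(s,a)$, and the maximum over actions is monotone, it suffices to establish the pointwise bound $\dot{\widehat{Q}}_{i,h}(s,a)\ge \hat{Q}_{j,h}(s,a)$ for every $(s,a)$ (the truncation at $H$ is harmless, since capping both sides at $H$ preserves the ordering).

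First I would cancel the common reward term and decompose the difference of the two linear predictors through the true Bellman backups:
\begin{align}
    \la \dot{\widehat{\wb}}_{i,h}, \bphi(s,a)\ra - \la \hat{\wb}_{j,h}, \bphi(s,a)\ra
    &= \big(\la \dot{\widehat{\wb}}_{i,h}, \bphi(s,a)\ra - [\PP_h \dot{\widehat{V}}_{i,h+1}](s,a)\big)\notag\\
    &\quad + \big([\PP_h \dot{\widehat{V}}_{i,h+1}](s,a) - [\PP_h \hat{V}_{j,h+1}](s,a)\big)\notag\\
    &\quad + \big([\PP_h \hat{V}_{j,h+1}](s,a) - \la \hat{\wb}_{j,h}, \bphi(s,a)\ra\big).\notag
\end{align}
The middle term is nonnegative: $\PP_h$ is an expectation operator, hence monotone, and the inductive hypothesis gives $\dot{\widehat{V}}_{i,h+1}\ge \hat{V}_{j,h+1}$ pointwise. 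The first and third terms are the weighted-regression estimation errors at episodes $i$ and $j$, which I would bound in magnitude by the confidence bonuses $\beta\,\|\bphi(s,a)\|_{\bSigma_{i,h}^{-1}}$ and $\beta\,\|\bphi(s,a)\|_{\bSigma_{j,h}^{-1}}$ via the Bernstein-type self-normalized inequality (Lemma \ref{lemma:concentration_variance}).

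It then remains to verify that the over-optimistic bonus at episode $i$ absorbs both residual errors together with the optimistic bonus carried inside $\hat{Q}_{j,h}$, i.e. schematically $\dot{\beta}\,\|\bphi(s,a)\|_{\bSigma_{i,h}^{-1}} \ge \beta\,\|\bphi(s,a)\|_{\bSigma_{i,h}^{-1}} + 2\beta\,\|\bphi(s,a)\|_{\bSigma_{j,h}^{-1}}$. Here I would invoke the monotonicity of the covariance matrices, $\bSigma_{j,h}\succeq \bSigma_{i,h}$ for $j>i$ (each update only adds a rank-one PSD term), so that $\|\bphi(s,a)\|_{\bSigma_{j,h}^{-1}}\le \|\bphi(s,a)\|_{\bSigma_{i,h}^{-1}}$; choosing the over-optimistic radius $\dot{\beta}\ge 3\beta$ would then close the inequality and complete the induction.

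The main obstacle is the concentration step for the first residual error, $\la \dot{\widehat{\wb}}_{i,h}, \bphi\ra - [\PP_h \dot{\widehat{V}}_{i,h+1}]$. Applying Lemma \ref{lemma:concentration_variance} requires the regression target $\dot{\widehat{V}}_{i,h+1}$ either to be conditionally independent of the transition noise or to lie in a value-function class of controlled covering entropy, and it needs a valid variance proxy to instantiate the Bernstein bound uniformly. Precisely because the over-optimistic value function is reconstructed afresh at every episode rather than frozen through a min-with-predecessor rule, the uniform variance upper bound across all subsequent episodes that the bonus must dominate cannot be taken for granted, and its gap $\dot{\widehat{V}}_{i,h+1}-\vvalue_{h+1}^*$ re-enters the estimate self-referentially; establishing this bound is exactly the delicate point, and is where the argument is vulnerable (this is the gap that our rare-switching monotone construction circumvents, as discussed in Section \ref{section:key-tech} and Appendix \ref{section:wrong}).
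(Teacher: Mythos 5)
You should first be aware that the paper does not prove this statement at all: Lemma \ref{lemma:hu} is reproduced from \citet{hu2022nearly} precisely so that the authors can point out that its proof there is \emph{incorrect}, and Appendix \ref{section:wrong} explicitly concludes that the monotonic property of the over-optimistic value function does not hold. So there is no correct proof to match, and any blind attempt must either fail or silently assume something unavailable. Your proposal is, in that sense, appropriately honest: you sketch the natural backward induction and then concede in the final paragraph that the concentration step for $\la \dot{\widehat{\wb}}_{i,h},\bphi\ra - [\PP_h\dot{\widehat{V}}_{i,h+1}]$ cannot be closed, because the variance proxy needed to instantiate the Bernstein bound (Lemma \ref{lemma:concentration_variance}) depends self-referentially on the very monotonicity being proved. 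That diagnosis is consistent with the spirit of the paper's critique and with the fix the paper actually adopts (the min-with-predecessor, rare-switching construction).

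That said, the specific flaw the paper identifies sits elsewhere in your decomposition, namely in the ``middle term.'' The paper states that the last inequality of the proof in \citet{hu2022nearly} justifies $[\PP_h V_{i,h+1}](s,a)\ge[\PP_h V_{j,h+1}](s,a)$ by appealing to $\dot{\widehat{V}}_{i,h}(s)\ge \hat{V}_{j,h}(s)$, which is not a valid implication: the Bellman-backup comparison one actually needs is monotonicity of the \emph{plain} optimistic targets in the episode index, whereas the inductive hypothesis only supplies the mixed comparison $\dot{\widehat{V}}_{i,h+1}\ge \hat{V}_{j,h+1}$, and $\dot{\widehat{V}}_{i,h+1}\ge \hat{V}_{i,h+1}$ points in the wrong direction for deducing it. Your write-up assumes $\dot{\widehat{\wb}}_{i,h}$ regresses onto $\dot{\widehat{V}}_{i,h+1}$ so that the inductive hypothesis plugs in cleanly; if the construction in \citet{hu2022nearly} does not supply exactly that, the induction fails to close even before the concentration issue arises. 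Either way the conclusion is the same: the lemma is not established, your sketch should not be presented as a proof, and the correct deliverable here is the counter-analysis that the paper gives in Appendix \ref{section:wrong}.
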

Based on this monotonically decreasing property, the estimation error
$2H\cdot \big[\PP_h(\dot{\widehat{V}}_{i,h+1}-V_{h+1}^*)\big](s,a)$ is a uniform variance upper bound for all subsequent episodes.
Unfortunately, the last inequality in the proof of Lemma \ref{lemma:hu} claims that $[\PP_h V_{i,h+1}](s,a)-[\PP_h V_{j,h+1}](s,a)$ holds due to $\dot{\widehat{V}}_{i,h}(s)\ge \hat{V}_{j,h}(s)$, which is not true. Thus, the monotonic property of over-optimistic value function $\dot{\widehat{V}}_{k,h}(s)$ does not hold and the estimated variance $\sigma_{i,h}$ may no longer be a variance upper bound for the subsequent episodes.

In comparison, our $\algname$ ensures the monotonic property by choosing the minimum of the optimistic value functions in the first $k$ episodes (See Line \ref{algorithm1:min}). As the cost of ensuring monotonic property, the resulting value function class can be regarded as a minimum over $K$ quadratic function classes, and the covering number grows exponentially in $K$. To overcome this problem, we utilize the 
`rare-switching' technique from previous works \citep{AbbasiYadkori2011ImprovedAF, wang2021provably}, which reduces the number of updates to 
$\tilde O(dH)$ and thus controls the complexity growth of the resulting value function class.

\section{Proof Sketch}\label{section:sketch}

This section is devoted to provide a proof sketch of Theorem \ref{theorem-1}. 

\subsection{High-Probability Events}\label{section-events}

We first introduce the following high-probability events: 
\begin{enumerate}[leftmargin = *]
    \item We define $\cE$ as the event that the following inequalities hold for all $s, a, k, h \in \cS \times \cA \times [K] \times [H]$. 
    \begin{align} 
            \big|\hat{\wb}_{k,h}^{\top}\bphi(s,a)-[\PP_h\vvalue_{k,h+1}](s,a)\big| \leq \bar{\beta} \sqrt{\bphi(s,a)^{\top}\bSigma_{k,h}^{-1}\bphi(s,a)},\notag\\
            \big|\tilde{\wb}_{k,h}^{\top}\bphi(s,a)-[\PP_h\vvalue^2_{k,h+1}](s,a)\big| \leq \tilde{\beta} \sqrt{\bphi(s,a)^{\top}\bSigma_{k,h}^{-1}\bphi(s,a)},\notag\\
            \big|\check{\wb}_{k,h}^{\top}\bphi(s,a)-[\PP_h\check{\vvalue}_{k,h+1}](s,a)\big| \leq \bar{\beta} \sqrt{\bphi(s,a)^{\top}\bSigma_{k,h}^{-1}\bphi(s,a)},\notag
    \end{align}
    where $\tilde{\beta}= O \Big(H^2\sqrt{d\lambda} +\sqrt{d^3H^4\log^2\big(dHK/(\delta\lambda)\big)}\Big)$ and $\bar{\beta}=O\Big(H\sqrt{d\lambda} +\sqrt{d^3H^2\log^2\big(dHK/(\delta\lambda)\big)}\Big)$.
    
    \item We define $\tilde \cE_h$ as the event such that for all episode $k\in[K]$, stage $h\leq h'\leq H$ and state-action pair $(s, a) \in \cS \times \cA$, the weight vector $\hat{\wb}_{k,h}$ satisfies that
    \begin{align}
    \big|\hat{\wb}_{k,h'}^{\top}\bphi(s,a)-[\PP_h\vvalue_{k,h'+1}](s,a)\big| \leq \beta \sqrt{\bphi(s,a)^{\top}\bSigma_{k,h'}^{-1}\bphi(s,a)},\label{eq:00000} 
\end{align}
where $\beta=O\Big(H\sqrt{d\lambda}+\sqrt{d \log^2\big(1+dKH/(\delta\lambda)\big)}\Big)$.
For simplicity, we further define events $\tilde {\cE}=\tilde {\cE_1}$ that \eqref{eq:00000} holds for all stage $h\in[H]$.
\end{enumerate}
Our ultimate goal is to show that  $\tilde\cE$ holds with high probability. Intuitively speaking, $\cE$ serves as a `coarse' event where the concentration results hold with a larger confidence radius $\tilde\beta$ and $\bar\beta$, and $\tilde\cE$ serves as a `refined' event where the confidence radius $\beta$ is tighter than $\tilde\beta$ and $\bar\beta$. To start with, the following lemma shows that $\cE$ holds with high probability. 


\begin{lemma}\label{lemma:hoeffding-type}
    Event $\cE$ holds with probability at least $1-7\delta$.
\end{lemma}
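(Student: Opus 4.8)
The plan is to establish each of the three inequalities defining $\cE$ by the same template and then take a union bound. All three concern weighted-ridge-regression estimators run against a value-function target ($\hat{\wb}_{k,h}$ against $\vvalue_{k,h+1}$, $\tilde{\wb}_{k,h}$ against $\vvalue_{k,h+1}^2$, and $\check{\wb}_{k,h}$ against $\check{\vvalue}_{k,h+1}$), so the core tool is a self-normalized martingale concentration bound combined with a covering argument. Fix a stage $h$ and consider the first inequality. The regression residual for $\hat{\wb}_{k,h}$ admits the standard split into a regularization bias of order $\sqrt{\lambda}\,\|\wb_h^{\pi}\|_2$ and a stochastic term $\big\|\sum_{i=1}^{k-1}\bar\sigma_{i,h}^{-2}\bphi(s_h^i,a_h^i)\eta_{i,h}(V)\big\|_{\bSigma_{k,h}^{-1}}$, where $\eta_{i,h}(V)=V(s_{h+1}^i)-[\PP_h V](s_h^i,a_h^i)$. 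The difficulty, as emphasized in Section \ref{section:keyidea-1}, is that the target $\vvalue_{k,h+1}$ depends on the samples indexed by $i<k$, so $\EE[\eta_{i,h}(\vvalue_{k,h+1})\mid\cG_i]\neq 0$ and Lemma \ref{lemma:concentration_variance} cannot be invoked with $V=\vvalue_{k,h+1}$.

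To get around this I would first fix an arbitrary \emph{deterministic} $V$ in the value-function class $\cV_h$ of Section \ref{section:key-tech}; for such a $V$, Proposition \ref{prop:linearq} gives $[\PP_h V](\cdot,\cdot)=\la\bphi(\cdot,\cdot),\wb_V\ra$, the condition $\EE[\eta_{i,h}(V)\mid\cG_i]=0$ holds, and Lemma \ref{lemma:concentration_variance} applies. Because these are the ``coarse'' (Hoeffding-type) bounds, I would only use the crude range bounds $|\eta_{i,h}(V)|\le H$ and $\EE[\eta_{i,h}(V)^2\mid\cG_i]\le H^2$ (and $H^2$, $H^4$ for the squared target), rather than the tight variance estimate reserved for the refined event $\tilde\cE$; this crude range, together with the weights $\bar\sigma_{i,h}\ge H$, produces the $H$-dependence in $\bar\beta$ and the $H^2$-dependence in $\tilde\beta$. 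Next I would pass from a single deterministic $V$ to all of $\cV_h$ by a union bound over an $\epsilon$-net $\cN_\epsilon$ of $\cV_h$: the net element closest to $\vvalue_{k,h+1}$ differs from it by at most $\epsilon$ pointwise, and the induced discretization error in the $\bSigma_{k,h}^{-1}$-norm is killed by choosing $\epsilon$ polynomially small in $1/(dHK)$, contributing only logarithmic factors. The covering entropy $\log|\cN_\epsilon|$ is controlled via the rare-switching structure: $\vvalue_{k,h+1}$ is a maximum over actions of a minimum over at most $l=\tilde O(dH)$ truncated quadratic functions (Lemma \ref{lemma:update-number}), each with covering entropy $\tilde O(d^2)$, giving $\log|\cN_\epsilon|=\tilde O(d^3H)$. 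Collecting the concentration and discretization terms and checking that $\bar\beta,\tilde\beta$ dominate their sum establishes each inequality for all $(s,a,k)$ at the fixed $h$; a final union bound over $h\in[H]$ and over the constituent concentration events (at most seven), each of failure probability $\delta$, yields that $\cE$ holds with probability at least $1-7\delta$.

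The main obstacle is exactly the data dependence of the regression targets, which is what forces the covering argument; the crucial quantitative point is that $\log|\cN_\epsilon|$ must not scale with $K$, and this is precisely what the rare-switching update buys by capping the number of distinct quadratic pieces at $\tilde O(dH)$ instead of $K$. A secondary technical point is the squared-value estimator $\tilde{\wb}_{k,h}$: its target $\vvalue_{k,h+1}^2$ lives in $[0,H^2]$, so both its range bound and the covering of the induced squared-function class must be treated separately, which is the source of the larger radius $\tilde\beta=\tilde O(\sqrt{d^3H^4})$. Finally, I would verify that the min/max truncations in Lines \ref{algorithm1:min}--\ref{algorithm:line3} keep $\vvalue_{k,h+1}$ and $\check{\vvalue}_{k,h+1}$ inside the function class $\cV_h$ to which the covering estimate is applied, so that the uniform bound over $\cN_\epsilon$ indeed covers the estimators actually produced by the algorithm.
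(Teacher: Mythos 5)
Your proposal is correct and follows essentially the same route as the paper: decompose the estimation error into a $\sqrt{\lambda}\|\wb\|_2$ regularization bias plus a self-normalized stochastic term, handle the data dependence of the regression targets by a union bound over an $\epsilon$-net of the rare-switching value-function classes (whose covering entropy is $\tilde O(d^3H)$ rather than scaling with $K$), and repeat for the pessimistic and squared classes before a final union bound. The only cosmetic difference is that the paper packages the net-plus-concentration step into Lemma \ref{lemma:uni-coverge} built on the Hoeffding-type confidence ellipsoid, whereas you invoke the Bernstein-type Lemma \ref{lemma:concentration_variance} with crude range/variance bounds, which yields the same radii $\bar\beta,\tilde\beta$.
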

Next, we prove $\tilde\cE = \tilde\cE_1$ holds with high probability. Since $\hat\wb_{k,h}$'s are obtained from weighted linear regression whose weights depend on the variances of $V_{k,h+1}$,  the key technical challenge is to show that our adapted weights $\sigma_{k,h}$'s are indeed upper bounds of these variances for all $h\in[H]$. We use backward induction to prove such a statement. In detail, the following two lemmas provide estimation error bounds at stage $h$ conditioned on $\tilde\cE_{h+1}$. 


\begin{lemma}\label{lemma:varaince}
      On the event $\cE$ and $\tilde{\cE}_{h+1}$, for each episode $k\in[K]$ and stage $h$, the estimated variance satisfies
      \begin{align}
          &\big|[\bar\VV_h\vvalue_{k,h+1}](s_h^k,a_h^k)         -[\VV_h\vvalue_{k,h+1}](s_h^k,a_h^k)\big|\leq E_{k,h},\notag\\
          &\big|[\bar\VV_h\vvalue_{k,h+1}](s_h^k,a_h^k)         -[\VV_h\vvalue_{h+1}^*](s_h^k,a_h^k)\big|\leq E_{k,h}+D_{k,h}.\notag
      \end{align}
\end{lemma}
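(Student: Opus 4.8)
The plan is to prove the two claimed bounds in Lemma~\ref{lemma:varaince} by carefully tracking the two sources of error separately, exactly matching the two error terms $E_{k,h}$ and $D_{k,h}$ defined in the algorithm. The empirical variance estimate is
\begin{align}
[\bar\VV_h\vvalue_{k,h+1}](s_h^k,a_h^k) = \big[\tilde{\wb}^{\top}_{k,h}\bphi(s_h^k,a_h^k)\big]_{[0,H^2]}-\big[\hat{\wb}^{\top}_{k,h}\bphi(s_h^k,a_h^k)\big]_{[0,H]}^2,\notag
\end{align}
while the true variance is $[\VV_h\vvalue_{k,h+1}](s_h^k,a_h^k) = [\PP_h V_{k,h+1}^2](s_h^k,a_h^k) - ([\PP_h V_{k,h+1}](s_h^k,a_h^k))^2$. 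For the \textbf{first bound}, I would subtract these two expressions and control each piece on the event $\cE$. The second-order term is handled by the concentration inequality for $\tilde\wb_{k,h}$ (the second inequality defining $\cE$), giving $\big|[\tilde{\wb}^{\top}_{k,h}\bphi]_{[0,H^2]} - [\PP_h V_{k,h+1}^2]\big| \leq \tilde\beta\|\bphi\|_{\bSigma_{k,h}^{-1}}$, where truncation into $[0,H^2]$ only helps since $[\PP_h V^2_{k,h+1}]$ lies in that range. The first-order term needs the factorization $a^2 - b^2 = (a+b)(a-b)$ applied to $a = [\hat\wb_{k,h}^\top\bphi]_{[0,H]}$ and $b = [\PP_h V_{k,h+1}]$; since both lie in $[0,H]$ the factor $(a+b)$ is at most $2H$, and $(a-b)$ is bounded by $\bar\beta\|\bphi\|_{\bSigma_{k,h}^{-1}}$ from the first inequality defining $\cE$. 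Combining these and taking the $\min$ with the trivial bound $H^2$ (valid since both variances are bounded by $H^2$) recovers exactly $E_{k,h}$.

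For the \textbf{second bound}, I would use the triangle inequality to write
\begin{align}
\big|[\bar\VV_h\vvalue_{k,h+1}] - [\VV_h\vvalue_{h+1}^*]\big| \leq \big|[\bar\VV_h\vvalue_{k,h+1}] - [\VV_h\vvalue_{k,h+1}]\big| + \big|[\VV_h\vvalue_{k,h+1}] - [\VV_h\vvalue_{h+1}^*]\big|,\notag
\end{align}
where the first summand is bounded by $E_{k,h}$ from the first part. The key estimate for the second summand is the variance-difference inequality stated in the overview, namely $\big|[\VV_h\vvalue_{k,h+1}] - [\VV_h\vvalue_{h+1}^*]\big| \leq 4H[\PP_h(\vvalue_{k,h+1}-\vvalue_{h+1}^*)](s_h^k,a_h^k)$, which follows from expanding both variances and using that $V_{k,h+1}, V^*_{h+1} \in [0,H]$ together with $V_{k,h+1} \ge V^*_{h+1}$ (optimism). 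I would then replace the unobservable $V^*_{h+1}$ by the pessimistic estimate $\check V_{k,h+1}$ via the monotone sandwich $\check V_{k,h+1} \le V^*_{h+1} \le V_{k,h+1}$, yielding $[\PP_h(\vvalue_{k,h+1}-\vvalue_{h+1}^*)] \le [\PP_h(\vvalue_{k,h+1}-\check{\vvalue}_{k,h+1})]$. Finally I would bound this transition expectation by the regression estimates: applying the concentration bounds for $\hat\wb_{k,h}$ and $\check\wb_{k,h}$ on $\cE$ gives $[\PP_h(\vvalue_{k,h+1}-\check{\vvalue}_{k,h+1})] \le \hat\wb_{k,h}^\top\bphi - \check\wb_{k,h}^\top\bphi + 2\bar\beta\|\bphi\|_{\bSigma_{k,h}^{-1}}$, so that $4H$ times this quantity is exactly $D_{k,h}/(d^3H)\cdot 4d^3H^2$ up to the $\min$ with $d^3H^3$, matching the definition of $D_{k,h}$.

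The main obstacle I anticipate is the careful handling of truncation in the factorization step for the first-order term: one must verify that truncating $\hat\wb_{k,h}^\top\bphi$ into $[0,H]$ does not increase the distance to $[\PP_h V_{k,h+1}]$ (which is automatic since the target lies in the truncation interval), and simultaneously that the factor $(a+b)\le 2H$ bound is not lost. A second delicate point is establishing the optimism/pessimism sandwich $\check V_{k,h+1} \le V^*_{h+1} \le V_{k,h+1}$, which is needed both to sign the variance-difference inequality and to justify replacing $V^*_{h+1}$ by $\check V_{k,h+1}$; this relies on the inductive hypothesis $\tilde\cE_{h+1}$ and on event $\cE$, and must be invoked cleanly. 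I would note that these optimism bounds are presumably established as a separate monotonicity lemma and I would cite that rather than reprove it here. Everything else reduces to routine applications of the concentration bounds defining $\cE$ and the elementary inequality $|a^2-b^2|\le |a+b|\,|a-b|$.
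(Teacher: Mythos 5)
Your proposal is correct and follows essentially the same route as the paper's proof: the first bound via the $|a^2-b^2|\le|a+b|\,|a-b|$ factorization together with the two concentration inequalities defining $\cE$, and the second bound via the triangle inequality, the $4H[\PP_h(\vvalue_{k,h+1}-\vvalue_{h+1}^*)]$ variance-difference estimate, the sandwich $\check{\vvalue}_{k,h+1}\le \vvalue_{h+1}^*\le \vvalue_{k,h+1}$ from the optimism lemma under $\tilde\cE_{h+1}$, and the regression bounds on $\cE$, with the trivial $H^2$ bound absorbing the $\min$ with $d^3H^3$. The delicate points you flag (truncation being a projection onto an interval containing the target, and citing the optimism/pessimism lemma rather than reproving it) are handled exactly as in the paper.
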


\begin{lemma}\label{lemma:variance-dec}
    On the event $\cE$ and $\tilde{\cE}_{h+1}$, for any episode $k$ and $i>k$, we have
    \begin{align*}
        [\VV_h(\vvalue_{i,h+1}-\vvalue_{h+1}^*)](s_h^k,a_h^k)\leq D_{k,h}/(d^3H). 
    \end{align*}
\end{lemma}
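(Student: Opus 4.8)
The plan is to prove the bound pointwise by combining the structural ordering of the value iterates (optimism, pessimism, and episodewise monotonicity) with the elementary variance--versus--expectation inequality, and then to match the resulting estimate against the two branches of the minimum defining $D_{k,h}$.

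First I would record the pointwise relations that hold on $\cE\cap\tilde\cE_{h+1}$. Optimism gives $V_{i,h+1}(s)\ge V_{h+1}^*(s)$ for every episode $i$ and state $s$; pessimism gives $\check V_{k,h+1}(s)\le V_{h+1}^*(s)$; and the ``min-with-predecessor'' update in Line \ref{algorithm1:min} makes $k\mapsto V_{k,h+1}(s)$ nonincreasing. Hence, for $i>k$, every state $s$ satisfies the chain
\[
0\le V_{i,h+1}(s)-V_{h+1}^*(s)\le V_{k,h+1}(s)-V_{h+1}^*(s)\le V_{k,h+1}(s)-\check V_{k,h+1}(s),
\]
with all terms lying in $[0,H]$. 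Setting $g:=V_{i,h+1}-V_{h+1}^*$, the boundedness $0\le g\le H$ yields $g^2\le Hg$ pointwise, so $[\VV_h g](s,a)\le[\PP_h g^2](s,a)\le H[\PP_h g](s,a)\le 2H[\PP_h g](s,a)$. Transferring the pointwise ordering above through the monotone linear operator $\PP_h$ then gives
\[
[\VV_h(V_{i,h+1}-V_{h+1}^*)](s_h^k,a_h^k)\le 2H[\PP_h(V_{k,h+1}-\check V_{k,h+1})](s_h^k,a_h^k).
\]

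Next I would estimate the right-hand expectation using the confidence inequalities of event $\cE$ for $\hat\wb_{k,h}$ and $\check\wb_{k,h}$: upper bounding $[\PP_h V_{k,h+1}]$ by $\hat\wb_{k,h}^\top\bphi+\bar\beta\|\bphi\|_{\bSigma_{k,h}^{-1}}$ and lower bounding $[\PP_h\check V_{k,h+1}]$ by $\check\wb_{k,h}^\top\bphi-\bar\beta\|\bphi\|_{\bSigma_{k,h}^{-1}}$ produces
\[
[\PP_h(V_{k,h+1}-\check V_{k,h+1})](s_h^k,a_h^k)\le \hat\wb_{k,h}^\top\bphi(s_h^k,a_h^k)-\check\wb_{k,h}^\top\bphi(s_h^k,a_h^k)+2\bar\beta\|\bphi(s_h^k,a_h^k)\|_{\bSigma_{k,h}^{-1}}.
\]
Finally I would match against $D_{k,h}$. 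If the minimum in $D_{k,h}$ is attained by its first argument $4d^3H^2(\cdots)$, then $D_{k,h}/(d^3H)=4H(\cdots)$ dominates the $2H(\cdots)$ bound just derived; if the minimum is the clipping level $d^3H^3$, then $D_{k,h}/(d^3H)=H^2$ dominates trivially, since a function valued in $[0,H]$ has variance at most $H^2$. Either way the claimed inequality follows.

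The main obstacle I anticipate is justifying the three pointwise relations \emph{uniformly in the future episode index} $i>k$: optimism of $V_{i,h+1}$, pessimism of $\check V_{k,h+1}$, and the episodewise monotonicity $V_{i,h+1}\le V_{k,h+1}$. These are precisely the structural properties that the algorithm's construction (the minimization in Line \ref{algorithm1:min}) and the conditioning on $\tilde\cE_{h+1}$ are designed to guarantee, and the monotonicity in particular is essential because the variance estimator is frozen at episode $k$ yet must remain a valid upper bound for all later $i$. Some additional care is needed to keep the bracketed quantity $\hat\wb_{k,h}^\top\bphi-\check\wb_{k,h}^\top\bphi+2\bar\beta\|\bphi\|_{\bSigma_{k,h}^{-1}}$ nonnegative (it upper bounds the nonnegative $[\PP_h(V_{k,h+1}-\check V_{k,h+1})]$), since that is what lets the comparison with the first branch of $D_{k,h}$ go through; once these facts are in place the remaining estimates are routine.
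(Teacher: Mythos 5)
Your proposal is correct and follows essentially the same route as the paper's proof: bound the variance by the second moment, use $0\le V_{i,h+1}-V_{h+1}^*\le H$ to get the factor $2H$, pass to $V_{k,h+1}-\check V_{k,h+1}$ via monotonicity and pessimism, invoke the event $\cE$ concentration for $\hat\wb_{k,h},\check\wb_{k,h}$, and handle the clipped branch of $D_{k,h}$ by the trivial bound $H^2=d^3H^3/(d^3H)$. The only cosmetic difference is that you carry the pointwise ordering before applying $\PP_h$ while the paper applies $\PP_h$ first, which changes nothing.
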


We also have the following lemma, which shows that our constructed value functions $ \qvalue,  \vvalue$, and $\check \qvalue, \check\vvalue$ are optimistic and pessimistic estimators of the true value functions under the events we defined before. 
\begin{lemma}\label{lemma:optimistic}
    On the event $\cE$ and $\tilde{\cE}_h$, for all episode $k\in[K]$ and stage $h\leq h'\leq H$, we have $\qvalue_{k,h}(s,a)\ge \qvalue_{h}^*(s,a) \ge \check{\qvalue}_{k,h}(s,a)$. In addition, we have $\vvalue_{k,h}(s)\ge \vvalue_{h}^*(s) \ge \check{\vvalue}_{k,h}(s)$.
\end{lemma}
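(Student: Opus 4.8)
The plan is to prove Lemma \ref{lemma:optimistic} by backward induction on the stage $h'$ from $H$ down to $h$, establishing simultaneously that $\qvalue_{k,h'}(s,a) \ge \qvalue_{h'}^*(s,a) \ge \check{\qvalue}_{k,h'}(s,a)$ for all episodes $k$ and all state-action pairs, and then deducing the value-function inequality by taking maxima over actions. The base case $h'=H+1$ is trivial since $\vvalue_{k,H+1} = \vvalue_{H+1}^* = \check{\vvalue}_{k,H+1} = 0$. For the inductive step, I would assume the claim holds at stage $h'+1$ (so in particular $\vvalue_{k,h'+1}(s) \ge \vvalue_{h'+1}^*(s) \ge \check{\vvalue}_{k,h'+1}(s)$ pointwise) and prove it at stage $h'$.

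The heart of the optimism argument is the concentration guarantee afforded by the event $\tilde{\cE}_{h'}$, namely $|\hat{\wb}_{k,h'}^\top\bphi(s,a) - [\PP_{h'}\vvalue_{k,h'+1}](s,a)| \le \beta\|\bphi(s,a)\|_{\bSigma_{k,h'}^{-1}}$. First I would show the one-step optimism: using the Bellman optimality equation $\qvalue_{h'}^*(s,a) = \reward_{h'}(s,a) + [\PP_{h'}\vvalue_{h'+1}^*](s,a)$ together with the induction hypothesis $\vvalue_{k,h'+1} \ge \vvalue_{h'+1}^*$ (so that $[\PP_{h'}\vvalue_{k,h'+1}](s,a) \ge [\PP_{h'}\vvalue_{h'+1}^*](s,a)$), I would bound
\begin{align*}
\reward_{h'}(s,a) + \hat{\wb}_{k,h'}^\top\bphi(s,a) + \beta\|\bphi(s,a)\|_{\bSigma_{k,h'}^{-1}} &\ge \reward_{h'}(s,a) + [\PP_{h'}\vvalue_{k,h'+1}](s,a) \ge \qvalue_{h'}^*(s,a).
\end{align*}
The symmetric computation using the third inequality in $\cE$ and the induction hypothesis $\check{\vvalue}_{k,h'+1} \le \vvalue_{h'+1}^*$ shows the pessimistic linear estimate minus its bonus lies below $\qvalue_{h'}^*$. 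Note also $\qvalue_{h'}^* \le H$ and $\qvalue_{h'}^* \ge 0$ always hold, which will be needed to handle the truncations.

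The delicate point is that $\qvalue_{k,h'}$ is defined by Line \ref{algorithm1:min} as a minimum over the current linear estimate, the previous-episode value $\qvalue_{k-1,h'}$, and $H$ (with the rare-switching clause possibly just copying $\qvalue_{k-1,h'}$). I would therefore fold an induction on the episode index $k$ into the argument: assuming $\qvalue_{k-1,h'} \ge \qvalue_{h'}^*$, each of the three terms in the minimum dominates $\qvalue_{h'}^*$ — the linear term by the one-step optimism just shown, the predecessor by the episode-induction hypothesis, and $H$ by the global bound $\qvalue_{h'}^* \le H$ — so their minimum still dominates $\qvalue_{h'}^*$. The rare-switching branch that simply sets $\qvalue_{k,h'} = \qvalue_{k-1,h'}$ is handled immediately by the episode-induction hypothesis. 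The pessimistic side is symmetric, using a $\max$ with $0$ and the predecessor $\check{\qvalue}_{k-1,h'}$. I expect this careful bookkeeping of the nested minima/maxima (across both the stage induction and the episode induction, and across the three cases in the min) to be the main obstacle, rather than any deep inequality; once optimism holds at the $\qvalue$ level, the $\vvalue$-level statements follow by $\vvalue_{k,h'}(s) = \max_a \qvalue_{k,h'}(s,a) \ge \max_a \qvalue_{h'}^*(s,a) = \vvalue_{h'}^*(s)$ and analogously $\vvalue_{h'}^*(s) = \max_a \qvalue_{h'}^*(s,a) \ge \max_a \check{\qvalue}_{k,h'}(s,a) = \check{\vvalue}_{k,h'}(s)$.
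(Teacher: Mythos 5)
Your proposal is correct and follows essentially the same route as the paper's proof: backward induction on the stage, one-step optimism/pessimism from the concentration events $\tilde{\cE}_h$ and $\cE$ combined with the stage-level induction hypothesis $\vvalue_{k,h'+1}\ge \vvalue_{h'+1}^*\ge\check{\vvalue}_{k,h'+1}$, and the bounds $0\le \qvalue_{h'}^*\le H$ to handle the truncations. The only cosmetic difference is that you handle the running minimum over predecessors by an explicit inner induction on the episode index $k$, whereas the paper unrolls that recursion into a single $\min_{1\le i\le k}$ over the past linear estimates; these are equivalent.
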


Equipped with Lemmas \ref{lemma:varaince}, \ref{lemma:variance-dec} and \ref{lemma:optimistic}, one can easily prove $\sigma_{k,h}$ indeed serves as an upper bound of the true variance of $V_{k,h+1}$ at stage $h$. Therefore, by the backward induction, we can prove the following lemma.


\begin{lemma}\label{lemma:002}
    On the events $\cE$, event $\tilde{\cE}$ holds with probability at least $1-\delta$.
\end{lemma}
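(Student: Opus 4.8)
The plan is to establish $\tilde\cE = \tilde\cE_1$ by \emph{backward induction on the stage} $h$, running from $h=H+1$ down to $h=1$ and conditioning throughout on $\cE$. The base case $\tilde\cE_{H+1}$ is vacuous, since $V_{k,H+1}=V_{H+1}^*\equiv 0$ and there is no inequality to check. For the inductive step I would assume $\tilde\cE_{h+1}$ and prove that the refined bound \eqref{eq:00000} holds at stage $h$ with probability at least $1-\delta/H$; intersecting this with $\tilde\cE_{h+1}$ yields $\tilde\cE_h$, and iterating down to $h=1$ gives $\tilde\cE$.

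\textbf{The inductive step.} Conditioned on $\cE\cap\tilde\cE_{h+1}$, I would decompose the weighted regression error for $\hat\wb_{k,h}$ exactly as in \eqref{eq:0-2}, writing $\eta_{i,h}(V_{k,h+1})=\eta_{i,h}(V_{h+1}^*)+\eta_{i,h}(\Delta V_{k,h+1})$ and splitting the self-normalized sum into $J_1$ (the fixed-target part) and $J_2$ (the gap part). Each is controlled via Lemma \ref{lemma:concentration_variance} after normalizing, i.e.\ taking $\xb_i=\bar\sigma_{i,h}^{-1}\bphi(s_h^i,a_h^i)$ and $\eta_i=\bar\sigma_{i,h}^{-1}\eta_{i,h}(\cdot)$, for which $\|\xb_i\|_2\le 1/H$ and $|\eta_i|\le 1$ because $\bar\sigma_{i,h}\ge H$. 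For $J_1$ the crucial point is $\bar\sigma_{i,h}^2\ge\sigma_{i,h}^2\ge[\VV_h V_{h+1}^*](s_h^i,a_h^i)+H$, which is exactly the second inequality of Lemma \ref{lemma:varaince} together with the definition \eqref{eq:variance} of $\sigma_{k,h}$, so the normalized conditional variance is at most $1$. Since $V_{h+1}^*$ is a single fixed function not depending on the data, Lemma \ref{lemma:concentration_variance} applies with $\sigma=1$ in one shot, with no covering, giving $J_1=\tilde O(\sqrt d)$ uniformly over $k$.

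\textbf{Controlling $J_2$.} The term $J_2$ involves $\Delta V_{k,h+1}=V_{k,h+1}-V_{h+1}^*$, which depends on the data, so I would pass to an $\epsilon$-net of the value-function class $\mathcal{V}_h$ (and the companion class for $\check V$); by the rare-switching update the covering entropy is only $\log\cN=\tilde O(dH)\log\cN_q$. Here the monotonicity from the minimum in Line \ref{algorithm1:min} enters through Lemma \ref{lemma:variance-dec}: for $i<k$ it gives $[\VV_h\Delta V_{k,h+1}](s_h^i,a_h^i)\le D_{i,h}/(d^3H)\le\bar\sigma_{i,h}^2/(d^3H)$, so the normalized per-term variance is at most $1/(d^3H)$. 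Applying Lemma \ref{lemma:concentration_variance} over the net with $\sigma^2=1/(d^3H)$ and union-bounding produces $J_2=\tilde O\big(\sqrt{d\log\cN/(d^3H)}\big)=\tilde O(1/\sqrt d)$, which is dominated by $\beta=\tilde O(\sqrt d)$. This is precisely why the extra $d^3H$ factor is baked into $D_{k,h}$: it cancels both the covering entropy $\tilde O(dH)$ and the $\sqrt d$ from the Bernstein bound. Summing $J_1+J_2\le\beta\|\bphi(s,a)\|_{\bSigma_{k,h}^{-1}}$ establishes \eqref{eq:00000} at stage $h$.

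\textbf{Bookkeeping and main obstacle.} For the probability accounting, each stage contributes a single Bernstein event for $J_1$ and a union over the net for $J_2$; choosing $\beta$ to absorb $\log(H\cN/\delta)$ makes each stage fail with probability at most $\delta/H$, and a union bound over $h\in[H]$ gives $\Pr[\tilde\cE^c\cap\cE]\le\delta$, i.e.\ $\tilde\cE$ holds on $\cE$ with probability at least $1-\delta$. I expect the main obstacle to be verifying that the weights $\bar\sigma_{i,h}$ are \emph{simultaneous} variance upper bounds, for the fixed target $V_{h+1}^*$ and for the moving gap $\Delta V_{k,h+1}$ at every future episode $k>i$, even though the weights are frozen at episode $i$ while the gap keeps changing. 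This is exactly where monotonicity (so the gap only shrinks) and rare-switching (so the covering entropy stays $\tilde O(dH)$) are indispensable, and where the apparent circularity — the weights depend on variance estimates that themselves rely on the refined bound one stage higher — is broken by carrying $\tilde\cE_{h+1}$ as the induction hypothesis into Lemmas \ref{lemma:varaince} and \ref{lemma:variance-dec}.
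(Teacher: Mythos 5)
Your proposal follows essentially the same route as the paper's proof: backward induction on $h$ carrying $\tilde{\cE}_{h+1}$ into Lemmas \ref{lemma:varaince} and \ref{lemma:variance-dec}, splitting the self-normalized sum into the fixed-target part $J_1$ (handled by Lemma \ref{lemma:concentration_variance} with no covering, since $V_{h+1}^*$ is data-independent) and the gap part $J_2$ (handled over an $\epsilon$-net whose entropy is kept at $\tilde O(dH)\log \cN_q$ by rare switching, with the per-term variance deflated by the $d^3H$ factor built into $D_{k,h}$). The one refinement the paper adds, which resolves the circularity you flag at the end, is to insert an indicator such as $\ind\{[\VV_h V^*_{h+1}](s_h^i,a_h^i)\le\bar\sigma_{i,h}^2\}$ into the noise $\eta_i$ so that the conditional-variance hypothesis of Lemma \ref{lemma:concentration_variance} holds unconditionally (rather than only on $\cE\cap\tilde{\cE}_{h+1}$, which would break the martingale structure), and then to verify separately that all indicators equal one on that event.
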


\subsection{Regret Decomposition}
Now, we prove the regret bound based on the high-probability events defined before. Based on Lemma \ref{lemma:optimistic}, for all stage $h\in[H]$ and episode $k\in[K]$, we have $Q_{k, h}(s_h^k, a_h^k) = V_{k, h}(s_h^k) \ge V^*_h(s_h^k)$. 
Thus, we can bound the regret as follows, \begin{align} 
\textbf{Regret}(K)&=\sum_{k=1}^K\big(\vvalue_{1}^*(s_1^k)-\vvalue_{k,1}^{\pi^k}(s_1^k)\big) \notag\\
&\leq\sum_{k=1}^K\big(\vvalue_{k,1}(s_1^k)-\vvalue_{k,1}^{\pi^k}(s_1^k)\big)\notag\\
        &\leq \sum_{k = 1}^K \sum_{h = 1}^H \left\{\big[\PP_h(\vvalue_{k,h+1}-\vvalue_{k,h+1}^{\pi^k})\big](s_h^k,a_h^k)-\big(\vvalue_{k,h+1}(s_{h+1}^{k})-\vvalue_{k,h+1}^{\pi^k}(s_{h+1}^{k})\big)\right\}\notag \\&\quad+O\left(\sum_{k = 1}^K \sum_{h = 1}^H \min\Big(\beta\sqrt{\bphi(s_h^k,a_h^k)^{\top}\bSigma_{k,h}^{-1}\bphi(s_h^k,a_h^k)},H\Big)\right),\notag
\end{align}
where the last inequality holds due to the decomposition of the difference of value functions and the high-probability events defined in Section \ref{section-events}. 
Using standard regret analysis, we can bound the first term as the sum of a martingale difference sequence. Then it remains to bound the summation of bonus terms, 
    $\sum_{k=1}^K\sum_{h=1}^H \beta \|\bphi(s_h^k, a_h^k)\|_{\bSigma_{k, h}^{-1}}$. 
By Cauchy-Schwartz inequality, this summation can be bounded by \begin{align}
\sum_{k=1}^K\sum_{h=1}^H \beta \|\bphi(s_h^k, a_h^k)\|_{\bSigma_{k, h}^{-1}} \leq \tilde{O}\left(d^4H^8 + \beta d^7H^5 + \beta \sqrt{d HK + d H \sum_{k=1}^K\sum_{h=1}^H\sigma_{k, h}^2}\right), \notag 
\end{align} 
where the calculation details are deferred to Lemma \ref{lemma:sum-bonus}. According to the definition of $\sigma_{k, h}^2$, we have $\sigma_{k, h}^2 \le O\left([\bar{\VV}_{k,h}\vvalue_{k,h+1}](s_h^k,a_h^k)+E_{k,h}+D_{k,h}+H\right)$. By carefully bounding the summation of $[\bar{\VV}_{k,h}\vvalue_{k,h+1}](s_h^k,a_h^k)$ by relating them to the summation of $[{\VV}_{h}\vvalue^{\pi^k}_{k,h+1}](s_h^k,a_h^k)$ and using the total variance lemma (Lemma C.5, \citealt{jin2018q}), we have
\begin{align} 
    \sum_{k=1}^K\sum_{h=1}^H \sigma_{k, h}^2 \le \tilde{O}\left(H^2 K + d^{10.5}H^{16}\right). \notag
\end{align}
Putting all pieces together, we can obtain the high-probability regret bound $$\regret(K) \le \tilde{O}\left( d\sqrt{H^3K}+d^7H^8\right).$$ 





\section{Detailed Proof of Theorem \ref{theorem-1}}\label{appendix-1}
In this section, we provide the proof of Theorem \ref{theorem-1}.\cite{} Firstly, for the stochastic transition noises, we define the following high-probability events:
\begin{align}
    \cE_1=\bigg\{\forall h\in[H], &\sum_{k=1}^K\sum_{h'=h}^{H}  [\PP_h(\vvalue_{k,h+1}-\vvalue_{k,h+1}^{\pi^k})\big](s_h^k,a_h^k)\notag\\
    &\qquad - \sum_{k=1}^K\sum_{h'=h}^{H}\big(\vvalue_{k,h+1}(s_{h+1}^{k})-\vvalue_{k,h+1}^{\pi^k}(s_{h+1}^{k})\big) \leq 2\sqrt{2H^3K\log(H/\delta)} \bigg\},\notag\\
    \cE_2=\bigg\{\forall h\in[H], &\sum_{k=1}^K\sum_{h'=h}^{H}[\PP_h(\vvalue_{k,h+1}-\check{\vvalue}_{k,h+1})\big](s_h^k,a_h^k)\notag\\
    &\qquad -\sum_{k=1}^K\sum_{h'=h}^{H}\big(\vvalue_{k,h+1}(s_{h+1}^{k})-\check{\vvalue}_{k,h+1}(s_{h+1}^{k})\big)\leq 2\sqrt{2H^3K\log(H/\delta)}\bigg\}.\notag
\end{align}
Then according to the Azuma–Hoeffding inequality (Lemma \ref{lemma:azuma}), we have $\Pr(\cE_1)\ge 1-\delta$ and $\Pr(\cE_2)\ge 1-\delta$. Based on the definition of events $\cE_1,\cE_2$ and events $\cE,\tilde{\cE}$ in Section \ref{section-events}, the regret in the first $K$ episodes can be upper bounded by the summation of estimated variance $\sum_{k=1}^K \sum_{h=1}^H \sigma_{k,h}^2$ and we have the following lemma. 
\begin{lemma}\label{lemma:transition}
    On the events $\tilde{\cE}$, $\cE$ and $\cE_1$, for all stage $h\in[H]$, the regret in the first $K$ episodes is upper bounded by:
    \begin{align}
        &\sum_{k=1}^K\big(\vvalue_{k,h}(s_h^k)-\vvalue_{k,h}^{\pi^k}(s_h^k)\big)\leq 16d^4H^8\iota +40\beta d^7H^5\iota+ 8\beta\sqrt{2d H\iota\sum_{h=1}^H\sum_{k=1}^K(\sigma_{k,h}^2 + H)}+4\sqrt{H^3K\log(H/\delta)},\notag
    \end{align}
    and for all stage $h\in[H]$, we further have
\begin{align}
    &\sum_{k=1}^K\sum_{h=1}^H \big[\PP_h(\vvalue_{k,h+1}-\vvalue_{k,h+1}^{\pi^k})\big](s_h^k,a_h^k)\notag\\
    &\leq 16d^4H^9\iota +40\beta d^7H^6\iota+ 8H\beta\sqrt{2d H\iota\sum_{h=1}^H\sum_{k=1}^K(\sigma_{k,h}^2 + H)}+4\sqrt{H^5K\log(H/\delta)},\notag
\end{align}
where $\iota=\log\big(1+K/(d\lambda)\big)$.
\end{lemma}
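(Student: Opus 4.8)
The plan is to prove Lemma~\ref{lemma:transition} by a standard optimism-based regret decomposition, carefully tracking how the weighted bonus terms accumulate and relating the sum of bonuses to the sum of estimated variances. First I would start from the chain of inequalities already sketched in the Regret Decomposition section: under the optimism guarantee of Lemma~\ref{lemma:optimistic} we have $\vvalue_{k,h}(s_h^k)\ge \vvalue_h^*(s_h^k)\ge \vvalue_{k,h}^{\pi^k}(s_h^k)$ is not quite what we want, so instead I use the recursive Bellman identity. Writing $\delta_{k,h}:=\vvalue_{k,h}(s_h^k)-\vvalue_{k,h}^{\pi^k}(s_h^k)$, I expand one step of the Bellman backup using the event $\tilde\cE$ (the refined concentration bound with radius $\beta$) to control $|\hat\wb_{k,h}^\top\bphi(s_h^k,a_h^k)-[\PP_h\vvalue_{k,h+1}](s_h^k,a_h^k)|\le \beta\|\bphi(s_h^k,a_h^k)\|_{\bSigma_{k,h}^{-1}}$, obtaining
\begin{align}
\delta_{k,h}\le [\PP_h(\vvalue_{k,h+1}-\vvalue_{k,h+1}^{\pi^k})](s_h^k,a_h^k)+O\!\left(\min\Big(\beta\|\bphi(s_h^k,a_h^k)\|_{\bSigma_{k,h}^{-1}},H\Big)\right).\notag
\end{align}
Summing over $h$ and $k$ and invoking the martingale event $\cE_1$ to convert $\sum[\PP_h(\vvalue_{k,h+1}-\vvalue_{k,h+1}^{\pi^k})](s_h^k,a_h^k)-\sum(\vvalue_{k,h+1}(s_{h+1}^k)-\vvalue_{k,h+1}^{\pi^k}(s_{h+1}^k))$ into the additive $4\sqrt{H^3K\log(H/\delta)}$ term telescopes the value-difference contribution and reduces everything to controlling the total bonus $\sum_{k,h}\beta\|\bphi(s_h^k,a_h^k)\|_{\bSigma_{k,h}^{-1}}$.

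Next I would bound this bonus sum. The key point is that the covariance matrices $\bSigma_{k,h}$ are built with the \emph{reweighted} features $\bar\sigma_{k,h}^{-1}\bphi(s_h^k,a_h^k)$, so the natural elliptical-potential quantity is $\sum_{k,h}\bar\sigma_{k,h}^{-2}\|\bphi(s_h^k,a_h^k)\|_{\bSigma_{k,h}^{-1}}^2$, which the elliptical potential lemma bounds by $\tilde O(dH)$. To relate $\sum_{k,h}\beta\|\bphi\|_{\bSigma_{k,h}^{-1}}$ to this, I apply Cauchy--Schwarz in the form $\sum_{k,h}\|\bphi\|_{\bSigma_{k,h}^{-1}}=\sum_{k,h}\bar\sigma_{k,h}\cdot(\bar\sigma_{k,h}^{-1}\|\bphi\|_{\bSigma_{k,h}^{-1}})\le\sqrt{\sum_{k,h}\bar\sigma_{k,h}^2}\cdot\sqrt{\sum_{k,h}\bar\sigma_{k,h}^{-2}\|\bphi\|_{\bSigma_{k,h}^{-1}}^2}$, giving a factor $\sqrt{dH\sum_{k,h}\bar\sigma_{k,h}^2}$. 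Here I must unpack the definition $\bar\sigma_{k,h}=\max\{\sigma_{k,h},H,2d^3H^2\|\bphi\|_{\bSigma_{k,h}^{-1}}^{1/2}\}$: the first two options contribute $\sum_{k,h}(\sigma_{k,h}^2+H^2)$, i.e. the $\beta\sqrt{dH\sum(\sigma_{k,h}^2+H)}$ term in the target bound, while the third ``uncertainty floor'' option $2d^3H^2\|\bphi\|^{1/2}_{\bSigma_{k,h}^{-1}}$ must be handled separately. This is where the anomalous polynomial factors $d^4H^8$ and $\beta d^7 H^5$ come from: when the floor is active, $\bar\sigma_{k,h}^2\ge 4d^6H^4\|\bphi\|_{\bSigma_{k,h}^{-1}}$, so $\bar\sigma_{k,h}^{-2}\|\bphi\|^2_{\bSigma_{k,h}^{-1}}\ge \|\bphi\|_{\bSigma_{k,h}^{-1}}/(4d^6H^4)$, and combining with the $\tilde O(dH)$ potential bound forces $\|\bphi\|_{\bSigma_{k,h}^{-1}}$ on floor-active rounds to be summably small, which after another elliptical-potential/Cauchy--Schwarz step yields the lower-order additive terms.

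The main obstacle I anticipate is the bookkeeping around the uncertainty-dependent floor in $\bar\sigma_{k,h}$. Unlike the clean heteroscedastic case where weights are bounded below by $H$, the extra $2d^3H^2\|\bphi\|^{1/2}_{\bSigma_{k,h}^{-1}}$ term creates a self-referential dependence: the weight depends on the very uncertainty $\|\bphi\|_{\bSigma_{k,h}^{-1}}$ that we are trying to bound, so I cannot simply pull the weights out of the Cauchy--Schwarz. The resolution is to split each round into two cases according to whether the floor is active, bound each case by a separate elliptical-potential argument, and verify that the floor was precisely engineered so that its contribution is a lower-order $\mathrm{poly}(d,H)$ additive constant rather than scaling with $K$; the factor $d^3H^2$ is chosen to dominate the per-step estimation error in $D_{k,h}$ from Lemmas~\ref{lemma:varaince} and \ref{lemma:variance-dec}. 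The second statement of the lemma, bounding $\sum_{k,h}[\PP_h(\vvalue_{k,h+1}-\vvalue_{k,h+1}^{\pi^k})](s_h^k,a_h^k)$, then follows from the first by re-summing the one-step Bellman expansion and absorbing an extra factor of $H$ into each term, which explains why every coefficient in the second display is exactly the corresponding coefficient in the first multiplied by $H$.
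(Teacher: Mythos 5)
Your proposal follows essentially the same route as the paper: the one-step Bellman expansion under $\tilde\cE$ plus the martingale event $\cE_1$ reduces everything to the bonus sum, which is then controlled by Cauchy--Schwarz and an elliptical-potential argument with a separate treatment of the uncertainty-dependent floor in $\bar\sigma_{k,h}$ (the paper packages this last step as Lemma~\ref{lemma:sum-bonus} via Lemma~\ref{lemma:keysum:temp}, whereas you re-derive it, and it also handles the rounds where the bonus exceeds $1$ by a counting argument that produces the $d^4H^8\iota$ term). The one detail you gloss over is that, because of rare switching, $Q_{k,h}$ is built from $\bSigma_{k_{\text{last}},h}$ rather than $\bSigma_{k,h}$, so the expansion initially yields $\beta\|\bphi(s_h^k,a_h^k)\|_{\bSigma_{k_{\text{last}},h}^{-1}}$ and one must invoke the determinant-doubling criterion together with Lemma~\ref{lemma:det} to pass to $\bSigma_{k,h}$ at the cost of a constant factor; this is where the factor $4$ in front of the $\min$ terms comes from.
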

In addition, for the sub-optimality gap between the optimistic value function $V_{k,h}(s)$ and pessimistic value function $\check{V}_{k,h}(s)$, we have the following lemma.
\begin{lemma}\label{lemma:transition1}
    On the events $\tilde{\cE}$, $\cE$ and $\cE_2$, the difference between the optimistic value function $V_{k,h}$ and the pessimistic value function $\check{V}_{k,h}$ is upper bounded by:
    \begin{align*}
        &\sum_{k=1}^K\sum_{h=1}^H \big[\PP_h(\vvalue_{k,h+1}-\check{\vvalue}_{k,h+1})\big](s_h^k,a_h^k)\notag\\
        &\leq 32d^4H^9\iota +40(\beta+\bar{\beta}) d^7H^6\iota+ 8H(\beta+\bar{\beta})\sqrt{2d H\iota\sum_{h=1}^H\sum_{k=1}^K(\sigma_{k,h}^2 + H)}+4\sqrt{H^5K\log(H/\delta)},
    \end{align*}
    where $\iota=\log\big(1+K/(d\lambda)\big)$.
\end{lemma}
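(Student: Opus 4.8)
The plan is to establish Lemma \ref{lemma:transition1} by mirroring the argument behind the second inequality of Lemma \ref{lemma:transition}, with the policy value function $\vvalue^{\pi^k}_{k,h}$ replaced by the pessimistic estimate $\check\vvalue_{k,h}$ and the martingale event $\cE_1$ replaced by $\cE_2$. The essential difference is that, whereas $\vvalue^{\pi^k}$ obeys the exact Bellman equation, $\check\vvalue_{k,h}$ is itself a weighted-regression estimate carrying the pessimistic bonus $\bar\beta$, so each per-stage step accrues an \emph{additional} $\bar\beta$-sized uncertainty term. This is exactly what turns the coefficient $\beta$ of Lemma \ref{lemma:transition} into $\beta+\bar\beta$ here.

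First I would prove the per-stage recursion: on $\cE\cap\tilde\cE$, for every $k$ and $h$,
$$\vvalue_{k,h}(s_h^k)-\check\vvalue_{k,h}(s_h^k)\leq [\PP_h(\vvalue_{k,h+1}-\check\vvalue_{k,h+1})](s_h^k,a_h^k)+2(\beta+\bar\beta)\|\bphi(s_h^k,a_h^k)\|_{\bSigma_{k,h}^{-1}}.$$
Since $a_h^k$ is the greedy action for $\qvalue_{k,h}$ while $\check\vvalue_{k,h}(s_h^k)\geq\check\qvalue_{k,h}(s_h^k,a_h^k)$, the left side is at most $\qvalue_{k,h}(s_h^k,a_h^k)-\check\qvalue_{k,h}(s_h^k,a_h^k)$. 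Upper-bounding $\qvalue_{k,h}$ by the first argument of its defining minimum and lower-bounding $\check\qvalue_{k,h}$ by the first argument of its defining maximum, the $\reward_h$ terms cancel and the two bonuses add, leaving $(\hat\wb_{k,h}-\check\wb_{k,h})^\top\bphi+(\beta+\bar\beta)\|\bphi\|_{\bSigma_{k,h}^{-1}}$; applying the concentration bounds of $\cE$ and $\tilde\cE$ to $\hat\wb_{k,h}$ and $\check\wb_{k,h}$ respectively converts this into the claimed form. The one subtlety is rare-switching: when no update fires, $\qvalue_{k,h}=\qvalue_{k_{\text{last}},h}$ was built from $\hat\wb_{k_{\text{last}},h}$ and $\bSigma_{k_{\text{last}},h}$, so I would use that values are frozen between updates ($\vvalue_{k,h+1}=\vvalue_{k_{\text{last}},h+1}$) together with the determinant-doubling criterion (Line \ref{algorithm:det}), which keeps $\|\bphi\|_{\bSigma_{k_{\text{last}},h}^{-1}}$ within a constant factor of $\|\bphi\|_{\bSigma_{k,h}^{-1}}$, exactly as in Lemma \ref{lemma:transition}. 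Non-negativity of $\vvalue_{k,h}-\check\vvalue_{k,h}$ is guaranteed by Lemma \ref{lemma:optimistic}.

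Second, I would split each Bellman backup into the realized next-stage gap plus a martingale increment,
$$[\PP_h(\vvalue_{k,h+1}-\check\vvalue_{k,h+1})](s_h^k,a_h^k)=\big(\vvalue_{k,h+1}(s_{h+1}^k)-\check\vvalue_{k,h+1}(s_{h+1}^k)\big)+\zeta_{k,h},$$
and telescope the recursion downward in $h$, using $\vvalue_{k,H+1}=\check\vvalue_{k,H+1}=0$, so that each backup term is dominated by the tail sums $\sum_{h'>h}$ of the bonuses and of $\zeta_{k,h'}$. Summing first over $h$ and then over $k$, the double summation over stages yields one extra factor of $H$ in front of both the bonus sum and the martingale sum. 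I would then invoke Lemma \ref{lemma:sum-bonus} to control $\sum_{k,h}\|\bphi(s_h^k,a_h^k)\|_{\bSigma_{k,h}^{-1}}$ (now multiplied by $\beta+\bar\beta$), which, after the $H$-factor, produces the $d^4H^9\iota$, $(\beta+\bar\beta)d^7H^6\iota$, and $(\beta+\bar\beta)\sqrt{\cdots}$ terms; and the definition of $\cE_2$ to bound $\sum_h\sum_k\sum_{h'\geq h}\zeta_{k,h'}$ by $\tilde O(\sqrt{H^5K\log(H/\delta)})$, matching the final martingale term.

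The main obstacle I anticipate is not the telescoping, which is routine once the per-stage recursion is in hand, but making that recursion airtight under rare-switching: I must verify that freezing the value functions between updates is compatible with the stale covariance and bonus, and that the determinant criterion keeps the stale bonus within a constant factor of the current one, so that the coefficient $2(\beta+\bar\beta)$ survives. Once this is secured, tracking the constants (the $16\to 32$ change and $\beta\to\beta+\bar\beta$ relative to Lemma \ref{lemma:transition}) is immediate.
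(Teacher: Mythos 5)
Your proposal is correct and follows essentially the same route as the paper's proof: the per-stage bound $\vvalue_{k,h}(s_h^k)-\check\vvalue_{k,h}(s_h^k)\leq \qvalue_{k,h}(s_h^k,a_h^k)-\check\qvalue_{k,h}(s_h^k,a_h^k)$ via the greedy action, the combined $(\beta+\bar\beta)$ bonus from the two concentration events, the determinant-doubling argument (Lemma \ref{lemma:det}) to replace the stale covariance $\bSigma_{k_{\text{last}},h}$ by $\bSigma_{k,h}$ at the cost of a constant factor, the telescoping with martingale increments controlled by $\cE_2$, and Lemma \ref{lemma:sum-bonus} plus Cauchy--Schwarz for the bonus sums. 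The constants and the final conversion from realized gaps back to the Bellman backups match the paper's treatment, so no gaps remain.
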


For the  summation of variance  $\sum_{k=1}^K\sum_{h=1}^H [\VV_h \vvalue_{k,h+1}^{\pi^k}](s_h^k,a_h^k)$, we denote the following high probability events $\cE_3$:
\begin{align*}
    \cE_3=\bigg\{\sum_{k=1}^K\sum_{h=1}^H [\VV_h \vvalue_{k,h+1}^{\pi^k}](s_h^k,a_h^k)\leq 3H^2K+3H^3\log(1/\delta)\bigg\}.
\end{align*}
Then Lemma C.5 in \citet{jin2018q} shows that the probability of events $\cE_3$ is lower bounded by $\Pr(\cE_3)\ge 1-\delta$. Furthermore, on the event $\cE\cap\tilde{\cE}\cap\cE_1\cap\cE_2\cap\cE_3$, the following lemma gives an upper bound of the total estimated variance $\sum_{h=1}^H \sum_{k=1}^K\sigma_{k,h}^2$.
\begin{lemma}\label{LEMMA: TOTAL-ESTIMATE-VARIANCE}
On the event $\cE\cap\tilde{\cE}\cap\cE_1\cap\cE_2\cap\cE_3$, the total estimated variance is upper bounded by:
\begin{align}
    \sum_{k=1}^K\sum_{h=1}^H\sigma_{k,h}^{2}&\leq O\big(H^2K+d^{10.5}H^{16}\log^{1.5}(1+dKH/\delta)\big).\notag
\end{align}
\end{lemma}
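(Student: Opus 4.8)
The plan is to bound the total estimated variance $\sum_{k,h}\sigma_{k,h}^2$ by unpacking its definition \eqref{eq:variance}, namely $\sigma_{k,h}^2 = [\bar\VV_{k,h}V_{k,h+1}](s_h^k,a_h^k) + E_{k,h} + D_{k,h} + H$, and bounding each of the four summands separately. The summations of $E_{k,h}$ and $D_{k,h}$ over $k,h$ are elliptical-potential-type sums: each contains a $\min\{\cdot\|\bphi\|_{\bSigma_{k,h}^{-1}}, \text{cap}\}$ term, so I would invoke the standard potential lemma (the elliptical potential / Lemma bounding $\sum_k \min\{1,\|\bphi\|_{\bSigma_{k,h}^{-1}}^2\}$) together with the crucial fact that the weights satisfy $\bar\sigma_{k,h}\ge 2d^3H^2\|\bphi(s_h^k,a_h^k)\|_{\bSigma_{k,h}^{-1}}^{1/2}$ from Line \ref{algorithm:def-variance}; this weighting is exactly what keeps these potential sums at the low-order $\mathrm{poly}(d,H)$ scale rather than scaling with $K$. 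The $\sum_{k,h} H$ term is trivially $H^2K$.

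The main work is the leading term $\sum_{k,h}[\bar\VV_{k,h}V_{k,h+1}](s_h^k,a_h^k)$. First I would use Lemma \ref{lemma:varaince} to replace the empirical variance $[\bar\VV_{k,h}V_{k,h+1}]$ by the true variance of the optimal value function $[\VV_h V_{h+1}^*](s_h^k,a_h^k)$ up to the additive errors $E_{k,h}+D_{k,h}$, whose sums are already controlled above. Next I would relate $[\VV_h V_{h+1}^*]$ to the variance of the policy value function $[\VV_h V_{k,h+1}^{\pi^k}]$: writing $\VV_h V_{h+1}^* - \VV_h V_{k,h+1}^{\pi^k}$ and bounding the difference by $O(H)\cdot[\PP_h(V_{h+1}^* - V_{k,h+1}^{\pi^k})]$ (using that both functions lie in $[0,H]$ and the elementary factorization of a difference of variances), the cross term becomes a sum of per-step suboptimality gaps. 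That suboptimality-gap sum is precisely what Lemma \ref{lemma:transition} bounds, giving a bound that itself depends on $\sqrt{\sum_{k,h}\sigma_{k,h}^2}$ — so this produces a self-referential inequality. Meanwhile $\sum_{k,h}[\VV_h V_{k,h+1}^{\pi^k}](s_h^k,a_h^k)$ is controlled by the total-variance law, i.e.\ the event $\cE_3$ giving $\le 3H^2K + 3H^3\log(1/\delta)$.

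Assembling these pieces yields an inequality of the schematic form
\begin{align}
\sum_{k,h}\sigma_{k,h}^2 \le O\big(H^2K\big) + \text{poly}(d,H) + C\cdot\beta H\sqrt{dH\,\iota\,\textstyle\sum_{k,h}(\sigma_{k,h}^2+H)} + (\text{lower order}),\notag
\end{align}
where the square-root term arises by feeding Lemma \ref{lemma:transition} (and the $\VV_h V_{h+1}^*$ vs.\ $\VV_h V_{k,h+1}^{\pi^k}$ comparison) back in. The final step is to solve this self-bounding inequality for $X := \sum_{k,h}\sigma_{k,h}^2$: an inequality of the form $X \le A + B\sqrt{X}$ implies $X \le 2A + B^2$ (completing the square / applying the quadratic formula), which collapses the $\sqrt{X}$ dependence and delivers the stated bound $O(H^2K + d^{10.5}H^{16}\log^{1.5}(1+dKH/\delta))$.

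I expect the self-referential inequality to be the main obstacle, for two reasons. First, one must verify that the coefficient $B$ of the $\sqrt{X}$ term is genuinely a lower-order quantity (scaling as $\beta H\sqrt{dH\iota} = \tilde O(d^{1.5}H^{1.5})$) so that squaring it, $B^2 = \tilde O(d^3H^3)$, contributes only to the additive $\mathrm{poly}(d,H)$ term and not to the $H^2K$ leading order — otherwise the resolution would blow up the dominant term. Second, bookkeeping the interlocking polynomial factors of $d$ and $H$ through the chain $D_{k,h}\to$ suboptimality gap $\to$ Lemma \ref{lemma:transition} $\to X$ is delicate: the extra $d^3H$ inflation built into $D_{k,h}$ (see \eqref{eq:0-3}) and the potential-sum caps conspire to produce the final $d^{10.5}H^{16}$ constant, and one must track these carefully to ensure no factor of $K$ leaks into the additive term.
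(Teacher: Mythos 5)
Your plan matches the paper's proof essentially step for step: the same decomposition of $\sigma_{k,h}^2$ into the empirical-variance term, $E_{k,h}$, $D_{k,h}$, and $H$; the same use of Lemma \ref{lemma:varaince} to pass to true variances, the weighted elliptical-potential bound (Lemma \ref{lemma:sum-bonus}) for the bonus sums, Lemmas \ref{lemma:transition} and \ref{lemma:transition1} for the suboptimality-gap sums, the event $\cE_3$ total-variance bound, and the same resolution of the self-bounding inequality $X \le A + B\sqrt{X}$. The only cosmetic deviations are that you route the variance comparison through $[\VV_h V_{h+1}^*]$ while the paper compares $[\VV_h V_{k,h+1}]$ to $[\VV_h V^{\pi^k}_{k,h+1}]$ directly, and that the $E_{k,h}$ and $D_{k,h}$ sums themselves also contribute $B\sqrt{X}$-type terms (not purely additive $\mathrm{poly}(d,H)$ ones), which are absorbed exactly as you describe.
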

With all previous lemma, we start to prove our main Theorem \ref{theorem-1}.
\begin{proof}[Proof of Theorem \ref{theorem-1}]
On the event $\cE\cap\tilde{\cE}\cap\cE_1\cap\cE_2\cap\cE_3$,, the regret is upper bounded by:
\begin{align}
\textbf{Regret}(K)&=\sum_{k=1}^K\big(\vvalue_{1}^*(s_1^k)-\vvalue_{k,1}^{\pi^k}(s_1^k)\big)\notag\\
        &\leq\sum_{k=1}^K\big(\vvalue_{k,1}(s_1^k)-\vvalue_{k,1}^{\pi^k}(s_1^k)\big)\notag\\
        &\leq 16d^4H^8\iota +40\beta d^7H^5\iota+ 8\beta\sqrt{2d H\iota\sum_{h=1}^H\sum_{k=1}^K(\sigma_{k,h}^2 + H)}+4\sqrt{H^3K\log(H/\delta)}\notag\\
        &= \tilde{O}\Big(d^7H^8+d\sqrt{H^3K\log^2(1+dKH/\delta)}\Big),\label{eq:thm}
    \end{align}
    where $\iota=\log\big(1+K/(d\lambda)\big)$, the first inequality holds due to Lemma \ref{lemma:optimistic},  the second inequality holds due to Lemma \ref{lemma:transition1} and the last inequality holds due to Lemma \ref{LEMMA: TOTAL-ESTIMATE-VARIANCE}.
    Since the event $\cE\cap\tilde{\cE}\cap\cE_1\cap\cE_2\cap\cE_3$ holds with probability at least $1-7\delta$, \eqref{eq:thm} holds. In addition, according to Lemma \ref{lemma:update-number}, the number of updates for $Q_{k,h}, \check Q_{k,h}$ is upper bounded by $O(dH \log(1+K/\lambda))$. Thus, we complete the proof of Theorem \ref{theorem-1}.
 updates for $Q_{k,h}, \check Q_{k,h}$ is upper bounded by $O(dH \log(1+K/\lambda))$.
\end{proof}

\section{Proof of Lemmas in Section \ref{section:sketch}}\label{appendix-2}
In this section, we provide the proof of Lemmas in Section \ref{section:sketch} and we need the following lemma, which extends Lemma D.4 in \citet{jin2020provably} to weighted ridge regression.
\begin{lemma}(Lemma D.4, \citealt{jin2020provably} with weighted linear regression)\label{lemma:uni-coverge}
Let $\{x_k\}_{k=1}^{\infty}$ be a real-valued stochastic process on state space $\cS$ with corresponding filtration $ \{\mathcal{F}_k\}_{k=1}^{\infty}$. Let $\{\bphi_k\}_{k=1}^{\infty}$ be an $\RR^d$-valued stochastic process, where $\bphi_k \in \mathcal{F}_{k-1}$ and $\|\bphi_k\|_2\leq 1$. Let $\{w_k\}_{k=1}^{\infty}$ be an real-valued stochastic process where $w_k \in \mathcal{F}_{k-1}$ and $0\leq w_k\leq C$
. For any $k\ge 0$, we define $\bSigma_k= \lambda \Ib+\sum_{i=1}^kw_k^2\bphi_i\bphi_i^{\top}$.
Then with probability at least $1-\delta$, for all $k \in \NN$ and all function $V\in \mathcal{V}$ with $\max_{s}|V(x)|\leq H$, we have
\begin{align}
    \bigg\| \sum_{i=1}^k w_i^2\bphi_i \Big\{ V(x_i)-\EE\big[V(x_i)|\mathcal{F}_{i-1}\big]\Big\}\bigg\|_{\bSigma_k^{-1}}^2\leq 4C^2H^2\bigg[\frac d2\log(1+kC^2/\lambda) +\log \frac{\mathcal{N}_\epsilon}{\delta}\bigg]+{8k^2C^4\epsilon^2}/\lambda,\notag
\end{align}
where $\mathcal{N}_\epsilon$ is the $\epsilon$-covering number of the function class $\mathcal{V}$ with respect to the distance function $\text{dist}(V_1,V_2)=\max_{s}|V_1(s)-V_2(s)|$. 
\end{lemma}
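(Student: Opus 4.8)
The plan is to follow the standard covering-plus-self-normalized-martingale template of Lemma D.4 in \citet{jin2020provably}, but carried out for the \emph{weighted} design matrix by folding the weights into both the features and the noise. The key observation is that if we set $\tilde{\bphi}_i := w_i\bphi_i$ and $\eta_i(V) := w_i\big(V(x_i)-\EE[V(x_i)\mid\mathcal{F}_{i-1}]\big)$, then $\bSigma_k=\lambda\Ib+\sum_{i=1}^k\tilde{\bphi}_i\tilde{\bphi}_i^\top$ exactly, and the quantity to be controlled becomes $\big\|\sum_{i=1}^k\tilde{\bphi}_i\eta_i(V)\big\|_{\bSigma_k^{-1}}^2$, which is precisely the self-normalized form. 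Since $\bphi_i,w_i\in\mathcal{F}_{i-1}$ and $x_i$ is $\mathcal{F}_i$-measurable, $\{\eta_i(V)\}$ is a bounded martingale difference sequence with $\EE[\eta_i(V)\mid\mathcal{F}_{i-1}]=0$ and $|\eta_i(V)|\le 2CH$, while $\|\tilde{\bphi}_i\|_2\le C$.

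First I would fix a single function $V$ and apply the anytime self-normalized concentration inequality for vector-valued martingales (\citet{AbbasiYadkori2011ImprovedAF}) to $\sum_i\tilde{\bphi}_i\eta_i(V)$. Because $\eta_i(V)$ has conditional mean zero and magnitude $O(CH)$, and $\|\tilde{\bphi}_i\|_2\le C$, this yields simultaneously for all $k$ a bound of the form $\big\|\sum_i\tilde{\bphi}_i\eta_i(V)\big\|_{\bSigma_k^{-1}}^2\le 2C^2H^2\big[\tfrac{d}{2}\log(1+kC^2/\lambda)+\log(1/\delta)\big]$ with probability $1-\delta$, where the $\log(1+kC^2/\lambda)$ factor is the determinant/potential term produced by the trace bound $\det(\bSigma_k)\le(\lambda+kC^2/d)^d$ together with $\|\tilde{\bphi}_i\|_2\le C$, and the $C^2H^2$ prefactor is the variance proxy of $\eta_i(V)$. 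This handles one fixed $V$ but not the data-dependent $V$ we ultimately care about.

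Next I would upgrade to a uniform bound over the whole class $\mathcal{V}$ via a covering argument. Let $\mathcal{N}_\epsilon$ be a deterministic $\epsilon$-net of $\mathcal{V}$ under $\text{dist}(V_1,V_2)=\max_s|V_1(s)-V_2(s)|$. Applying the single-function bound at confidence level $\delta/\mathcal{N}_\epsilon$ to each net point and taking a union bound replaces $\log(1/\delta)$ by $\log(\mathcal{N}_\epsilon/\delta)$ and gives concentration simultaneously for every $V'\in\mathcal{N}_\epsilon$ and every $k$. For arbitrary $V\in\mathcal{V}$, pick $V'\in\mathcal{N}_\epsilon$ with $\max_s|V(s)-V'(s)|\le\epsilon$ and split $\sum_i w_i^2\bphi_i\{V(x_i)-\EE[V(x_i)\mid\mathcal{F}_{i-1}]\}$ into the net-point term plus the residual $\sum_i w_i^2\bphi_i\{(V-V')(x_i)-\EE[(V-V')(x_i)\mid\mathcal{F}_{i-1}]\}$. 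The residual needs no probabilistic control: since $|(V-V')(x_i)|\le\epsilon$ pointwise (so its centered version is at most $2\epsilon$), $\|w_i^2\bphi_i\|_2\le C^2$, and $\bSigma_k^{-1}\preceq\lambda^{-1}\Ib$, its squared $\bSigma_k^{-1}$-norm is deterministically at most $(2kC^2\epsilon)^2/\lambda$. Combining the two pieces with $\|a+b\|^2\le 2\|a\|^2+2\|b\|^2$ produces the stated bound: the triangle-inequality doubling turns the $2C^2H^2$ concentration prefactor into $4C^2H^2$ and the residual into the $8k^2C^4\epsilon^2/\lambda$ term.

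The main obstacle, and the only place where care beyond bookkeeping is needed, is verifying that the weighting is handled correctly: one must confirm that absorbing $w_i$ into the features preserves both the exact identity $\bSigma_k=\lambda\Ib+\sum_i\tilde{\bphi}_i\tilde{\bphi}_i^\top$ and the measurability/boundedness hypotheses ($\tilde{\bphi}_i\in\mathcal{F}_{i-1}$, $\|\tilde{\bphi}_i\|_2\le C$, $|\eta_i(V)|\le 2CH$) that the self-normalized inequality requires, so that the $C^2$ factors land in the right places (the $C^2$ inside the logarithm and the $C^2H^2$ in the prefactor, with the residual inheriting $C^4$). A secondary point is that the self-normalized inequality must be invoked in its anytime form, so the conclusion holds for all $k\in\NN$ simultaneously rather than for a single fixed horizon, and that $\mathcal{N}_\epsilon$ is fixed in advance so that the union bound over net points is legitimate.
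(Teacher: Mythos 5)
Your proposal is correct and follows essentially the same route as the paper's proof: absorb the weights into the features and noise, apply the anytime self-normalized bound of \citet{AbbasiYadkori2011ImprovedAF} to each point of a fixed $\epsilon$-net with a union bound, and control the net-approximation residual deterministically via $\bSigma_k \succeq \lambda\Ib$, combining the two pieces with $\|\ab+\bbb\|_{\bSigma}^2 \le 2\|\ab\|_{\bSigma}^2 + 2\|\bbb\|_{\bSigma}^2$. The only cosmetic difference is in bounding the residual (you use the triangle inequality on the sum of norms where the paper uses Cauchy--Schwartz), and both yield the same $8k^2C^4\epsilon^2/\lambda$ term.
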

\begin{proof}[Proof of Lemma \ref{lemma:uni-coverge}]
   For any function $V\in \cV$, based on the definition of $\epsilon$-covering number, there exists a function $\tilde{V}$ in the $\epsilon$-net, such that
   \begin{align}
       \text{dist}(V,\tilde{V})\leq \epsilon.\label{eq:05}
   \end{align}
   Therefore, the concentration error for the value function can be decomposed as 
   \begin{align}
        &\bigg\| \sum_{i=1}^k w_i^2\bphi_i \Big\{ V(x_i)-\EE\big[V(x_i)|\mathcal{F}_{i-1}\big]\Big\}\bigg\|_{\bSigma_k^{-1}}^2\notag\\
        &\leq   \underbrace{2\bigg\| \sum_{i=1}^k w_i^2\bphi_i \Big\{ \tilde{V}(x_i)-\EE\big[\tilde{V}(x_i)|\mathcal{F}_{i-1}\big]\Big\}\bigg\|_{\bSigma_k^{-1}}^2}_{I_1}+ \underbrace{2\bigg\| \sum_{i=1}^k w_i^2\bphi_i \Big\{ \Delta_V(x_i)-\EE\big[\Delta_V(x_i)|\mathcal{F}_{i-1}\big]\Big\}\bigg\|_{\bSigma_k^{-1}}^2}_{I_2},\label{eq:06}
   \end{align}
   where $\Delta_V=V-\tilde{V}$ and the inequality holds due to $\|\ab+\bbb \|^2_{\bSigma}\leq 2\|\ab\|^2_{\bSigma}+2\|\bbb\|^2_{\bSigma}$.
   For any fixed value function $\tilde V$, we apply Lemma \ref{lemma:hoeffding} with $\xb_i=w_i\bphi_i, \eta_t=w_i\tilde{V}(x_i)-w_i\EE[\tilde{V}(x_i)]$. According to the definition of $\xb_i,\eta_i$, we have following property 
   \begin{align*}
\|\xb_i\|_2&=w_i\bphi_i\leq C,\\
\EE[\eta_i|\mathcal{F}_i]&=0,|\eta_i|=\big|w_i\tilde{V}(x_i)-w_i\EE[\tilde{V}(x_i)]\big|\leq HC.
   \end{align*}
   Therefore, according to Lemma \ref{lemma:hoeffding}, after taking an union bound over the $\epsilon$-net of the function class $\mathcal{V}$, with probability at least $1-\delta/H$, the first term $I_1$ is upper bounded by:
   \begin{align}
       I_1&=\bigg\| 2\sum_{i=1}^k w_i^2\bphi_i \Big\{ \tilde{V}(x_i)-\EE\big[\tilde{V}(x_i)|\mathcal{F}_{i-1}\big]\Big\}\bigg\|_{\bSigma_k^{-1}}^2\notag\\
       &\leq 4C^2H^2\bigg[\frac d2\log(1+kC^2/\lambda) +\log \frac{\mathcal{N}_\epsilon}{\delta}\bigg].\label{eq:08}
   \end{align}
For the second term, it can be upper bounded by
\begin{align}
    I_2&=2\bigg\| \sum_{i=1}^k w_i^2\bphi_i \Big\{ \Delta_V(x_i)-\EE\big[\Delta_V(x_i)|\mathcal{F}_{i-1}\big]\Big\}\bigg\|_{\bSigma_k^{-1}}^2\notag\\
    &\leq 2k\sum_{i=1}^k  \bigg\|  w_i^2\bphi_i \Big\{ \Delta_V(x_i)-\EE\big[\Delta_V(x_i)|\mathcal{F}_{i-1}\big]\Big\}\bigg\|_{\bSigma_k^{-1}}^2\notag\\
    &\leq 8k^2C^4\epsilon^2/\lambda,\label{eq:09}
\end{align}
where the first inequality holds due to the Cauchy-Schwartz inequality and the last inequality holds due to the facts that $|\Delta V(x_i)|\leq \epsilon, 0\leq w_i\leq C, \|\bphi_i\|_2\leq 1, \bSigma_k \succeq \lambda \Ib$.
Substituting the results in~\eqref{eq:08} and \eqref{eq:09} into \eqref{eq:06}, we finish the proof of Lemma \ref{lemma:uni-coverge}.
\end{proof}

\subsection{Proof of Lemma \ref{lemma:hoeffding-type}}
In this subsection, we provide the proof of Lemma \ref{lemma:hoeffding-type}, which suggests a Hoeffding-type upper bound for the estimation error.

\begin{proof}[Proof of Lemma \ref{lemma:hoeffding-type}]
    Firstly, for any fixed stage $h\in[H]$ and the optimistic value function $\vvalue_{k,h+1}$, according to Lemma \ref{lemma:qvalue-linear}, there exists a vector $\wb_{k,h}$ such that $\PP_h\vvalue_{k,h+1}(s,a)$ can be represented by $\wb_{k,h}^{\top}\bphi(s,a)$ and $\|\wb_{k,h}\|_2\leq H\sqrt{d}$.
Therefore, the estimation error can be decomposed as
    \begin{align}
        &\|\hat{\wb}_{k,h}-{\wb}_{k,h}\|_{\bSigma_{k,h}}\notag\\
        &=\bigg\|\bSigma_{k,h}^{-1}\sum_{i=1}^{k-1}\bar\sigma_{i,h}^{-2}\bphi(s_h^i,a_h^i)\vvalue_{k,h+1}(s_{h+1}^{i})-\bSigma_{k,h}^{-1}\Big(\lambda \Ib+\sum_{i=1}^{k-1}\bar\sigma_{i,h}^{-2}\bphi(s_h^i,a_h^i)\bphi(s_h^i,a_h^i)^{\top}\Big){\wb}_{k,h}\bigg\|_{\bSigma_{k,h}}\notag\\
        &=\bigg\|\bSigma_{k,h}^{-1}\sum_{i=1}^{k-1}\bar\sigma_{i,h}^{-2}\bphi(s_h^i,a_h^i)\big(\vvalue_{k,h+1}(s_{h+1}^{i})-[\PP_h\vvalue_{k,h+1}](s_h^i,a_h^i)\big)-\lambda \bSigma_{k,h}^{-1}\wb_{k,h}\bigg\|_{\bSigma_{k,h}}
        \notag\\
        &\leq \underbrace{\|\lambda \bSigma_{k,h}^{-1}\wb_{k,h}\|_{\bSigma_{k,h}^{}}}_{I_1}+\underbrace{\bigg\|\bSigma_{k,h}^{-1}\sum_{i=1}^{k-1}\bar\sigma_{i,h}^{-2}\bphi(s_h^i,a_h^i)\big(\vvalue_{k,h+1}(s_{h+1}^{i})-[\PP_h\vvalue_{k,h+1}](s_h^i,a_h^i)\big)\bigg\|_{\bSigma_{k,h}}}_{I_2},\label{eq:11}
    \end{align}
    where the first inequality holds due to the fact that $\|\ab+\bbb\|_{\bSigma}\leq \|\ab\|_{\bSigma}+\|\bbb\|_{\bSigma}$.
For the first term $I_1$, since $\bSigma_{k,h}\succeq \lambda \Ib$ and $\|\wb_{k,h}\|_2\leq H\sqrt{d}$,
it is upper bounded by
\begin{align}
    I_1=\|\lambda\wb_{k,h}\|_{\bSigma_{k,h}^{-1}}\leq \sqrt{\lambda}\cdot \|\wb_{k,h}\|_2\leq H\sqrt{d\lambda}.\label{eq:12}
\end{align}
For the second term $I_2$, we apply Lemma \ref{lemma:uni-coverge} with the optimistic value function class $\mathcal{V}_h$ and $\epsilon =H\sqrt{\lambda}/K$, then for any fixed stage $h\in[H]$, with probability at least $1-\delta/H$, for all episode $k\in[K]$, we have 
\begin{align}
    I_2&=\bigg\|\bSigma_{k,h}^{-1}\sum_{i=1}^{k-1}\bar\sigma_{i,h}^{-2}\bphi(s_h^i,a_h^i)\big(\vvalue_{k,h+1}(s_{h+1}^{i})-[\PP_h\vvalue_{k,h+1}](s_h^i,a_h^i)\big)\bigg\|_{\bSigma_{k,h}}\notag\\
    &\leq \sqrt{4C^2H^2\bigg[\frac d2\log(1+kC^2/\lambda) +\log \frac{H\mathcal{N}_\epsilon}{\delta}\bigg]+{8k^2C^4\epsilon^2}/\lambda}\notag\\
    &\leq \sqrt{4H\bigg[\frac d2\log\big(1+k/(\lambda H)\big) +\log \frac{H\mathcal{N}_\epsilon}{\delta}\bigg]+{8k^2\epsilon^2}/(\lambda H^2)}\notag\\
    &\leq  \sqrt{4H\bigg[\frac d2\log\big(1+k/(\lambda H)\big) +\log \frac{H\mathcal{N}_\epsilon}{\delta}\bigg]+8}\notag\\
    &=O\bigg(\sqrt{d^3H^2\log^2\big(dHK/(\delta\lambda)\big)}\bigg),\label{eq:13}
    \end{align}
where the first inequality holds due to Lemma \ref{lemma:uni-coverge}, the second inequality holds due to $0\leq \bar\sigma_{i,h}^{-1}\leq 1/\sqrt{H}$, the third inequality holds due to Lemma \ref{lemma:covering-number} and $\epsilon=H\sqrt{\lambda}/K$. Substituting \eqref{eq:12} and \eqref{eq:13} into \eqref{eq:11}, we have
\begin{align}
    \|\hat{\wb}_{k,h}-{\wb}_{k,h}\|_{\bSigma_{k,h}}\leq I_1+I_2= O\Big(H\sqrt{d\lambda} +\sqrt{d^3H^2\log^2\big(dHK/(\delta\lambda)\big)}\Big) =\bar{\beta}.\label{eq:14}
\end{align}
Therefore, the estimation error is upper bounded by 
\begin{align}
   \big|\hat{\wb}_{k,h}^{\top}\bphi(s,a)-[\PP_h\vvalue_{k,h+1}](s,a)\big|&=|\hat{\wb}_{k,h}^{\top}\bphi(s,a)-{\wb}_{k,h}^{\top}\bphi(s,a)|\notag\\
   &\leq \|\hat{\wb}_{k,h}-{\wb}_{k,h}\|_{\bSigma_{k,h}} \cdot \|\bphi(s,a)\|_{\bSigma^{-1}_{k,h}}\notag\\
   &\leq \bar{\beta} \sqrt{\bphi(s,a)^{\top}\bSigma_{k,h}^{-1}\bphi(s,a)},\label{eq:15}
\end{align}
where the first inequality holds due to Cauchy-Schwartz inequality and the last inequality holds due to \eqref{eq:14}.
Replacing the value function class by the pessimistic value function class $\check{\mathcal{V}}_h$ (or squared value function class $\mathcal{V}^2_h$) and following the same proof of \eqref{eq:15}, we can derive the following upper bound for the estimation errors:
\begin{align*}
    \big|\tilde{\wb}_{k,h}^{\top}\bphi(s,a)-[\PP_h\vvalue^2_{k,h+1}](s,a)\big| \leq \tilde{\beta} \sqrt{\bphi(s,a)^{\top}\bSigma_{k,h}^{-1}\bphi(s,a)},\notag\\
    \big|\check{\wb}_{k,h}^{\top}\bphi(s,a)-[\PP_h\check{\vvalue}_{k,h+1}](s,a)\big| \leq \bar{\beta} \sqrt{\bphi(s,a)^{\top}\bSigma_{k,h}^{-1}\bphi(s,a)},
\end{align*}
where $\tilde{\beta}= O \Big(H\sqrt{d\lambda} +\sqrt{d^3H^4\log^2\big(dHK/(\delta\lambda)\big)}\Big)$ and $\bar{\beta}=O\Big(H\sqrt{d\lambda} +\sqrt{d^3H^2\log^2\big(dHK/(\delta\lambda)\big)}\Big)$
Thus, we finish the proof of Lemma \ref{lemma:hoeffding-type}.    
\end{proof}

\subsection{Proof of Lemma \ref{lemma:varaince}}
In this subsection, we provide the proof of Lemma \ref{lemma:varaince} for the variance estimator.

\begin{proof}[Proof of Lemma \ref{lemma:varaince}]
    Firstly, according to Lemma \ref{lemma:hoeffding-type}, we have
    \begin{align}
        &\big|[\bar\VV_h\vvalue_{k,h+1}](s_h^k,a_h^k)         -[\VV_h\vvalue_{k,h+1}](s_h^k,a_h^k)\big| \notag\\
        &=\Big| \big[\tilde{\wb}^{\top}_{k,h}\bphi(s_h^k,a_h^k)\big]_{[0,H^2]}-\big[\hat{\wb}^{\top}_{k,h}\bphi(s_h^k,a_h^k)\big]_{[0,H]}^2- [\PP_h\vvalue_{k,h+1}^2](s_h^k,a_h^k)+\big([\PP_h\vvalue_{k,h+1}](s_h^k,a_h^k)\big)^2\Big|\notag\\
        &\leq \Big|\big[\tilde{\wb}^{\top}_{k,h}\bphi(s_h^k,a_h^k)\big]_{[0,H^2]}-[\PP_h\vvalue_{k,h+1}^2](s_h^k,a_h^k)\Big|+\Big|\big[\hat{\wb}^{\top}_{k,h}\bphi(s_h^k,a_h^k)\big]_{[0,H]}^2-\big([\PP_h\vvalue_{k,h+1}](s_h^k,a_h^k)\big)^2\Big|\notag\\
        &=\Big|\big[\tilde{\wb}^{\top}_{k,h}\bphi(s_h^k,a_h^k)\big]_{[0,H^2]}-[\PP_h\vvalue_{k,h+1}^2](s_h^k,a_h^k)\Big| \notag\\
        &\qquad +\Big|\big[\hat{\wb}^{\top}_{k,h}\bphi(s_h^k,a_h^k)\big]_{[0,H]}+[\PP_h\vvalue_{k,h+1}](s_h^k,a_h^k)\Big|\cdot \Big|\big[\hat{\wb}^{\top}_{k,h}\bphi(s_h^k,a_h^k)\big]_{[0,H]}-[\PP_h\vvalue_{k,h+1}](s_h^k,a_h^k)\Big|\notag\\
        &\leq \min \Big\{\tilde{\beta}_k\big\|\bSigma_{k,h}^{-1/2}\bphi(s_h^k,a_h^k)\big\|_2,H^2\Big\}+\min \Big\{2H\bar{\beta}_k\big\|\bSigma_{k,h}^{-1/2}\bphi(s_h^k,a_h^k)\big\|_2,H^2\Big\}\notag\\
        &=E_{k,h},\label{eq:20}
    \end{align}
    where the first inequality holds due to $|a+b|\leq |a|+|b|$ and the last inequality holds due to Lemma~\ref{lemma:hoeffding-type} with the fact that $0\leq \big[\hat{\wb}^{\top}_{k,h}\bphi(s_h^k,a_h^k)\big]_{[0,H]}+[\PP_h\vvalue_{k,h+1}](s_h^k,a_h^k)\leq 2H$.
In addition, for the variance $\big[\VV_h\vvalue_{h+1}^*\big](s_h^k,a_h^k)$, we have
\begin{align}
     &\big|[\VV_h\vvalue_{k,h+1}](s_h^k,a_h^k)         -[\VV_h\vvalue_{h+1}^*](s_h^k,a_h^k)\big|\notag\\
     &=\Big|  [\PP_h\vvalue_{k,h+1}^2](s_h^k,a_h^k)-\big([\PP_h\vvalue_{k,h+1}](s_h^k,a_h^k)\big)^2- [\PP_h(\vvalue_{h+1}^*)^2](s_h^k,a_h^k)+\big([\PP_h\vvalue_{h+1}^*](s_h^k,a_h^k)\big)^2\Big|\notag\\
     &\leq \big|[\PP_h\vvalue_{k,h+1}^2](s_h^k,a_h^k)-[\PP_h(\vvalue_{h+1}^*)^2](s_h^k,a_h^k)
     \big|+\Big|\big([\PP_h\vvalue_{k,h+1}](s_h^k,a_h^k)\big)^2-\big([\PP_h\vvalue_{h+1}^*](s_h^k,a_h^k)\big)^2\Big|\notag\\
     &= \big|[\PP_h(\vvalue_{k,h+1}-\vvalue_{h+1}^*)(\vvalue_{k,h+1}+\vvalue_{h+1}^*)](s_h^k,a_h^k)
     \big|\notag\\
     &\qquad+\Big|\big([\PP_h\vvalue_{k,h+1}](s_h^k,a_h^k)-[\PP_h\vvalue_{h+1}^*](s_h^k,a_h^k)\big)\cdot \big([\PP_h\vvalue_{k,h+1}](s_h^k,a_h^k)+[\PP_h\vvalue_{h+1}^*](s_h^k,a_h^k)\big)\Big|\notag\\
     &\leq 4H \big([\PP_h\vvalue_{k,h+1}](s_h^k,a_h^k)-[\PP_h\vvalue_{h+1}^*](s_h^k,a_h^k)\big),\label{eq:21}
\end{align}
where the first inequality holds due to $|a+b|\leq |a|+|b|$ and the last inequality holds due to Lemma~\ref{lemma:optimistic} $(V_{k,h+1}(s')\ge V_{h+1}^*(s'))$ with the fact that $0\leq V_{h+1}^*(s'),V_{k,h+1}(s')\leq H$. Based on the event $\cE$ and $\tilde{\cE}_{h+1}$, \eqref{eq:21} can be further bounded by
\begin{align}
    &\big([\PP_h\vvalue_{k,h+1}](s_h^k,a_h^k)-[\PP_h\vvalue_{h+1}^*](s_h^k,a_h^k)\big)\notag\\
    &\leq \big([\PP_h\vvalue_{k,h+1}](s_h^k,a_h^k)-[\PP_h\check{\vvalue}_{k,h+1}](s_h^k,a_h^k)\big)\notag\\
    &\leq \hat{\wb}_{k,h}^{\top}\bphi(s,a)+\bar{\beta}\sqrt{\bphi(s,a)^{\top}\bSigma_{k,h}^{-1}\bphi(s,a)}-\check{\wb}_{k,h}^{\top}\bphi(s,a)+\bar{\beta}\sqrt{\bphi(s,a)^{\top}\bSigma_{k,h}^{-1}\bphi(s,a)},\label{eq:22}
\end{align}
where the first inequality holds due to Lemma \ref{lemma:optimistic} ($V_{h+1}^*(s')\ge \check{V}_{k,h+1}(s')$) and the last inequality holds due to the definition of events $\cE$. Combining the results in \eqref{eq:20}, \eqref{eq:21} and \eqref{eq:20}, we have
\begin{align*}
    &\big|[\bar\VV_h\vvalue_{k,h+1}](s_h^k,a_h^k)         -[\VV_h\vvalue_{h+1}^*](s_h^k,a_h^k)\big|\notag\\
    &\leq \big|[\bar\VV_h\vvalue_{k,h+1}](s_h^k,a_h^k)         -[\VV_h\vvalue_{k,h+1}](s_h^k,a_h^k)\big|+\big|[\VV_h\vvalue_{k,h+1}](s_h^k,a_h^k)         -[\VV_h\vvalue_{h+1}^*](s_h^k,a_h^k)\big|\notag\\
    &\leq E_{k,h}+4H\Big(\hat{\wb}_{k,h}^{\top}\bphi(s,a)+\bar{\beta}\sqrt{\bphi(s,a)^{\top}\bSigma_{k,h}^{-1}\bphi(s,a)}-\check{\wb}_{k,h}^{\top}\bphi(s,a)+\bar{\beta}\sqrt{\bphi(s,a)^{\top}\bSigma_{k,h}^{-1}\bphi(s,a)}\Big)\notag.
\end{align*}
In addition, since the value functions $V_{k,h}(s)$ and $V_{h+1}^*(s)$ is upper bounded by $H$, we have $\big|[\VV_h\vvalue_{k,h+1}](s_h^k,a_h^k)         -[\VV_h\vvalue_{h+1}^*](s_h^k,a_h^k)\big|$ which implies that
\begin{align}
        &\big|[\bar\VV_h\vvalue_{k,h+1}](s_h^k,a_h^k)         -[\VV_h\vvalue_{h+1}^*](s_h^k,a_h^k)\big|\notag\\
    &\leq \big|[\bar\VV_h\vvalue_{k,h+1}](s_h^k,a_h^k)         -[\VV_h\vvalue_{k,h+1}](s_h^k,a_h^k)\big|+\big|[\VV_h\vvalue_{k,h+1}](s_h^k,a_h^k)         -[\VV_h\vvalue_{h+1}^*](s_h^k,a_h^k)\big|\notag\\
    &\leq E_{k,h}+H^2.\notag
\end{align}
Thus, we finish the proof of Lemma \ref{lemma:varaince}.
\end{proof}

\subsection{Proof of Lemma \ref{lemma:variance-dec}}
In this subsection, we provide the proof of Lemma \ref{lemma:variance-dec} for the variance estimator. 
\begin{proof}[Proof of Lemma \ref{lemma:variance-dec}]

    On the event $\cE$ and $\tilde{\cE}_{h+1}$, we have
    \begin{align}
        &\big[\VV_h(\vvalue_{i,h+1}-\vvalue_{h+1}^*)\big](s_h^k,a_h^k)\notag\\
        &\leq [\PP_h(\vvalue_{i,h+1}-\vvalue_{h+1}^*)^2](s_h^k,a_h^k)\notag\\
        &\leq 2H\big[\PP_h(\vvalue_{i,h+1}-\vvalue_{h+1}^*)\big](s_h^k,a_h^k)\notag\\
        &\leq 2H\big([\PP_h\vvalue_{i,h+1}](s_h^k,a_h^k)-[\PP_h\check{\vvalue}_{k,h+1}](s_h^k,a_h^k)\big)\notag\\
        &\leq 2H\big([\PP_h\vvalue_{k,h+1}](s_h^k,a_h^k)-[\PP_h\check{\vvalue}_{k,h+1}](s_h^k,a_h^k)\big)\notag\\
    &\leq 2H \Big(\hat{\wb}_{k,h}^{\top}\bphi(s_h^k,a_h^k)+\bar{\beta}\sqrt{\bphi(s_h^k,a_h^k)^{\top}\bSigma_{k,h}^{-1}\bphi(s_h^k,a_h^k)}-\check{\wb}_{k,h}^{\top}\bphi(s_h^k,a_h^k)+\bar{\beta}\sqrt{\bphi(s_h^k,a_h^k)^{\top}\bSigma_{k,h}^{-1}\bphi(s_h^k,a_h^k)}\Big),\notag
    \end{align}
    where the first inequality holds due to $\Var(x)\leq \EE[x^2]$, the second and third inequality holds due to Lemma \ref{lemma:optimistic} with the fact that $0\leq V_{i,h+1}(s'),\vvalue_{h+1}^*(s')\leq H$, the fourth inequality the fact $\vvalue_{k,h+1}\ge \vvalue_{i,h+1}$ from the update-rule in Algorithm \ref{algorithm1} and the fifth inequality holds due to Lemma \ref{lemma:hoeffding-type}. On the other hand, since value function $0\leq V_{i,h+1}(s'),\vvalue_{h+1}^*(s')\leq H$, we have
    \begin{align*}
        \big[\VV_h(\vvalue_{i,h+1}-\vvalue_{h+1}^*)\big](s_h^k,a_h^k)\leq H^2 =(d^3H^3)/(d^3H).
    \end{align*}
    Thus, we finish the proof of Lemma \ref{lemma:variance-dec}.
\end{proof}
\subsection{Proof of Lemma \ref{lemma:optimistic}}
In this subsection, we provide proof of optimistic property.

\begin{proof}[Proof of Lemma \ref{lemma:optimistic}]
    We prove this lemma by induction. First, we prove the base case for the last stage $H+1$. Under this situation, for all state $s\in \cS$ and action $a\in \cA$, we have $Q_{k,H+1}(s,a)=\qvalue_{h}^*(s,a)=\check{\qvalue}_{k,h}(s,a)=0$ and $\vvalue_{k,h}(s)\ge \vvalue_{h}^*(s) \ge \check{\vvalue}_{k,h}(s)=0$. Thus, the results in Lemma \ref{lemma:optimistic} holds for stage $H+1$.
    
    Now, we focus on stage $h+1$.
    Since events $\tilde{\cE}_h$ directly implies the events $\tilde{\cE}_{h+1}$, according to the reduction assumption, we have
    \begin{align}
         \vvalue_{k,h+1}(s)\ge \vvalue_{h+1}^*(s) \ge \check{\vvalue}_{k,h}(s).\label{eq:001}
    \end{align}
    Thus, for all episode $k\in[K]$, we have
    \begin{align}
\reward_h(s,a)+\hat{\wb}_{k,h}^{\top}\bphi(s,a)+\beta\sqrt{\bphi(s,a)^{\top}\bSigma_{k,h}^{-1}\bphi(s,a)}-Q_h^*(s,a)\ge \big[\PP_h (V_{k,h+1}-\vvalue_{h+1}^*)\big](s,a)\ge 0,\notag
    \end{align}
where the first inequality holds due to the definition of events $\tilde{\cE}_h$ and the second inequality holds due to \eqref{eq:001}.
    Furthermore, the optimal value function is upper bounded by $\qvalue_{h}^{*}(s,a)\leq H$ and it implies that
    \begin{align}
        Q_h^*(s,a)\leq \min\Big\{\min_{1\leq i\leq k} \reward_h(s,a)+\hat{\wb}_{i,h}^{\top}\bphi(s,a)+\beta\sqrt{\bphi(s,a)^{\top}\bSigma_{i,h}^{-1}\bphi(s,a)} ,H\Big\}\leq Q_{k,h}(s,a).\label{eq:002}
    \end{align}
    With a similar argument, for the pessimistic action-value function $\check{Q}_{k,h}(s,a)$, we have
    \begin{align}
\reward_h(s,a)+\check{\wb}_{k,h}^{\top}\bphi(s,a)-\bar{\beta}\sqrt{\bphi(s,a)^{\top}\bSigma_{k,h}^{-1}\bphi(s,a)}-Q_h^*(s,a)\leq \big[\PP_h (\check{V}_{k,h+1}-\vvalue_{h+1}^*)\big](s,a)\leq 0.\notag 
    \end{align}
    Since the optimal value function is lower bounded by $\qvalue_{h}^{*}(s,a)\ge 0$, the result further implies that
    \begin{align}
        Q_h^*(s,a)\ge \max\Big\{ \max_{1\leq i\leq k}\reward_h(s,a)+\hat{\wb}_{k_{\text{last}},h}^{\top}\bphi(s,a)+\beta\sqrt{\bphi(s,a)^{\top}\bSigma_{k_{\text{last}},h}^{-1}\bphi(s,a)} ,0\Big\}\ge\check{Q}_{k,h}(s,a).\label{eq:003}
    \end{align}
In addition, for the value function $V$, we have
\begin{align*}
    V_{k,h}(s)&=\max_a Q_{i,h}(s,a)\ge \min_{1\leq i\leq k}  \max_a Q_h^*(s,a)=V_h^*(s),\notag\\
    \check{V}_{k,h}(s)&=  \max_a Q_{i,h}(s,a)\leq \max_{1\leq i\leq k}  \max_a Q_h^*(s,a)=V_h^*(s),\notag\\
\end{align*}
where the first inequality holds due to \eqref{eq:002} and the second inequality holds due to \eqref{eq:003}. Thus, by induction, we finish the proof of Lemma \ref{lemma:optimistic}.
\end{proof}

\subsection{Proof of Lemma \ref{lemma:002}}
In this subsection, we provide the proof of Lemma \ref{lemma:002}, which suggests a Bernstein-type upper bound for the estimation error.
\begin{proof}[Proof of Lemma \ref{lemma:002}]
We prove Lemma \ref{lemma:002} by induction.
First, we prove the base case for the last stage $H$. Under this situation, the weight vector $\hat{\wb}_{k,h}=0$ and $\vvalue_{k,h+1}(s,a)=0$. Thus, the result in Lemma \ref{lemma:002} holds for stage $H$.  

For stage $h\in[H]$ and $k\in[K]$, according to Lemma \ref{lemma:qvalue-linear}, there exists a vector $\wb_{k,h}$ such that $\PP_h\vvalue_{k,h+1}(s,a)$ can be represented by $\wb_{k,h}^{\top}\bphi(s,a)$ and $\|\wb_{k,h}\|_2\leq H\sqrt{d}$.  Conditioned on the event $\tilde{\cE}_{h+1}$, the estimation error can be decomposed as
    \begin{align}
        &\|\hat{\wb}_{k,h}-{\wb}_{k,h}\|_{\bSigma_{k,h}}\notag\\
        &=\bigg\|\bSigma_{k,h}^{-1}\sum_{i=1}^{k-1}\bar\sigma_{i,h}^{-2}\bphi(s_h^i,a_h^i)\vvalue_{k,h+1}(s_{h+1}^{i})-\bSigma_{k,h}^{-1}\Big(\lambda \Ib+\sum_{i=1}^{k-1}\bar\sigma_{i,h}^{-2}\bphi(s_h^i,a_h^i)\bphi(s_h^i,a_h^i)^{\top}\Big){\wb}_{k,h}\bigg\|_{\bSigma_{k,h}}\notag\\
        &=\bigg\|\bSigma_{k,h}^{-1}\sum_{i=1}^{k-1}\bar\sigma_{i,h}^{-2}\bphi(s_h^i,a_h^i)\big(\vvalue_{k,h+1}(s_{h+1}^{i})-[\PP_h\vvalue_{k,h+1}](s_h^i,a_h^i)\big)-\lambda \bSigma_{k,h}^{-1}\wb_{k,h}\bigg\|_{\bSigma_{k,h}}
        \notag\\
        &\leq \underbrace{\|\lambda \bSigma_{k,h}^{-1}\wb_{k,h}\|_{\bSigma_{k,h}^{}}}_{I_1}+\underbrace{\bigg\|\bSigma_{k,h}^{-1}\sum_{i=1}^{k-1}\bar\sigma_{i,h}^{-2}\bphi(s_h^i,a_h^i)\big(\vvalue_{k,h+1}(s_{h+1}^{i})-[\PP_h\vvalue_{k,h+1}](s_h^i,a_h^i)\big)\bigg\|_{\bSigma_{k,h}}}_{I_2},\label{eq:0001}
    \end{align}
    where the first inequality holds due to the fact that $\|\ab+\bbb\|_{\bSigma}\leq \|\ab\|_{\bSigma}+\|\bbb\|_{\bSigma}$.
For the first term $I_1$, since $\bSigma_{k,h}\succeq \lambda \Ib$ and $\|\wb_{k,h}\|_2\leq H\sqrt{d}$,
it is upper bounded by
\begin{align}
    I_1=\|\lambda\wb_{k,h}\|_{\bSigma_{k,h}^{-1}}\leq \sqrt{\lambda}\cdot \|\wb_{k,h}\|_2\leq H\sqrt{d\lambda}.\label{eq:0002}
\end{align}
For the second term $I_2$, we have
\begin{align}
    I_2&=\bigg\|\bSigma_{k,h}^{-1}\sum_{i=1}^{k-1}\bar\sigma_{i,h}^{-2}\bphi(s_h^i,a_h^i)\big(\vvalue_{k,h+1}(s_{h+1}^{i})-[\PP_h\vvalue_{k,h+1}](s_h^i,a_h^i)\big)\bigg\|_{\bSigma_{k,h}}\notag\\
    &=\bigg\|\sum_{i=1}^{k-1}\bar\sigma_{i,h}^{-2}\bphi(s_h^i,a_h^i)\big(\vvalue_{k,h+1}(s_{h+1}^{i})-[\PP_h\vvalue_{k,h+1}](s_h^i,a_h^i)\big)\bigg\|_{\bSigma_{k,h}^{-1}}\notag\\
    &\leq \underbrace{\bigg\|\sum_{i=1}^{k-1}\bar\sigma_{i,h}^{-2}\bphi(s_h^i,a_h^i)\big(\vvalue^*_{h+1}(s_{h+1}^{i})-[\PP_h\vvalue^*_{h+1}](s_h^i,a_h^i)\big)\bigg\|_{\bSigma_{k,h}^{-1}}}_{J_1}\notag\\
    &\qquad +\underbrace{\bigg\|\sum_{i=1}^{k-1}\bar\sigma_{i,h}^{-2}\bphi(s_h^i,a_h^i)\Big(\Delta{\vvalue}_{k,h+1}(s_{h+1}^{i})-\big[\PP_h(\Delta{\vvalue}_{k,h+1})\big](s_h^i,a_h^i)\Big)\bigg\|_{\bSigma_{k,h}^{-1}}}_{J_2},\label{eq:0003}
\end{align}
where $\Delta{\vvalue}_{k,h+1}=V_{k,h+1}-V_{h+1}^*$.

For the term $J_1$, we apply Lemma \ref{lemma:concentration_variance} with $\xb_i=\bar{\sigma}^{-1}_{i,h}\bphi(s_h^i,a_h^i) $ and $\eta_i=\ind\big\{[\VV_{h}\vvalue_{h+1}^*](s_h^i,a_h^i)\leq \bar{\sigma}_{i,h}^2\big\}\cdot \bar{\sigma}^{-1}_{i,h}\big(\vvalue^*_{h+1}(s_{h+1}^{i})-[\PP_h\vvalue^*_{h+1}](s_h^i,a_h^i)\big)$. For $\xb_t,\eta_t$, we have the following property:
\begin{align}
    \|\xb_i\|_2&=\big\|\bar{\sigma}^{-1}_{i,h}\bphi(s_h^i,a_h^i)\big\|_2\leq \big\|\bphi(s_h^i,a_h^i)\big\|_2/\sqrt{H}\leq 1/\sqrt{H},\notag\\
    \EE[\eta_i|\mathcal{F}_i]&=0,|\eta_t|\leq\Big| \bar{\sigma}^{-1}_{i,h}\big(\vvalue^*_{h+1}(s_{h+1}^{i})-[\PP_h\vvalue^*_{h+1}](s_h^i,a_h^i)\big)\Big|\leq \sqrt{H},\notag\\
    \EE[\eta^2_i|\mathcal{F}_i]&=\EE\Big[\ind\big\{[\VV_{h}\vvalue_{h+1}^*](s_h^i,a_h^i)\leq \bar{\sigma}_{i,h}^2\big\}\cdot\bar{\sigma}^{-2}_{i,h}[\VV_h \vvalue_{h+1}^*](s_h^i,a_h^i)\Big]\leq 1,\notag\\
    \max_{i}&\big\{|\eta_i|\cdot \min\{1,\|\xb_i\|_{\bSigma_{i,h}^{-1}}\}\big\}\leq 2H\bar{\sigma}_{i,h}^{-1}\|\xb_i\|_{\bSigma_{i,h}^{-1}}\leq \sqrt{d}.\notag
\end{align}
Thus, with probability at least $1-\delta/H$, for all $k\in[K]$, we have
\begin{align*}
    \bigg\|\sum_{i=1}^{k-1}\xb_i\eta_i\bigg\|_{\bSigma_{k,h}^{-1}}\leq O\Big(\sqrt{d \log^2\big(1+dKH/(\delta\lambda)\big)}\Big).
\end{align*}
In addition, on the event $\tilde{\cE}_{h+1}$ and $\cE$, according to Lemma \ref{lemma:varaince}, we have 
\begin{align*}
    \bar{\sigma}_{k,h}^2\ge [\bar{\VV}_{k,h}\vvalue_{k,h+1}](s_h^k,a_h^k)+E_{k,h}+D_{k,h} \ge [\VV_h\vvalue_{h+1}^*](s_h^k,a_h^k),
\end{align*}
which further implies that
\begin{align}
    J_1&=\bigg\|\sum_{i=1}^{k-1}\bar\sigma_{i,h}^{-2}\bphi(s_h^i,a_h^i)\big(\vvalue^*_{h+1}(s_{h+1}^{i})-[\PP_h\vvalue^*_{h+1}](s_h^i,a_h^i)\big)\bigg\|_{\bSigma_{k,h}^{-1}}\notag\\
    &= \bigg\|\sum_{i=1}^{k-1}\xb_i\eta_i\bigg\|_{\bSigma_{k,h}^{-1}}\notag\\
    &\leq O\Big(\sqrt{d \log^2\big(1+dKH/(\delta\lambda)\big)}\Big).\label{eq:0004}
\end{align}

For the term $J_2$, we can not directly use Lemma \ref{lemma:concentration_variance}, Since the stochastic noise $\big(\Delta{\vvalue}_{k,h+1}(s_{h+1}^{i})-[\PP_h(\Delta{\vvalue}_{k,h+1})](s_h^i,a_h^i)\big)$ is not $\mathcal{F}_{i+1}$ measurable. Thus, we need to use the $\epsilon$-net covering argument. In detail, for each episode, $i$, the value function $\vvalue_{i,h}$ belongs to the optimistic value function class $\mathcal{V}$. If we set $\epsilon=\sqrt{\lambda}/(4H^2d^2K)$, then according to Lemma \ref{lemma:covering-number}, the covering entropy for function class  $\mathcal{V}-\vvalue_{h+1}^*$ is upper bounded by
\begin{align}
    \log \mathcal{N}_\epsilon \leq O\big(d^3H^2\log^2(dHK/\lambda)\big).\label{eq:0005}
\end{align}
Then for function $\vvalue_{k,h}$, there must exists a function $\tilde{V}$ in the $\epsilon$-net, such that
   \begin{align}
       \text{dist}(\Delta \vvalue_{k,h},\tilde{V})\leq \epsilon.\label{eq:0006}
   \end{align}
   Therefore, the variance of function $\tilde{V}$ is upper bounded by
   \begin{align}
       &[\VV_h{\tilde{V}}](s_h^i,a_h^i)-\big[\VV_h{(\Delta \vvalue_{k,h+1})}\big](s_h^i,a_h^i)\notag\\
       &=[\PP_h{\tilde{V}^2}](s_h^i,a_h^i)-\big[\PP_h {(\Delta \vvalue_{k,h+1})^2}\big](s_h^i,a_h^i)+\Big(\big[\PP_h{(\Delta \vvalue_{k,h+1})\big]}(s_h^i,a_h^i)\Big)^2-\big(\PP_h{\tilde{V}}(s_h^i,a_h^i)\big)^2\notag\\
       &\leq 2\text{dist}(\Delta \vvalue_{k,h},\tilde{V}) \cdot \max_{s'}|\Delta \vvalue_{k,h+1}+\tilde{V}|(s')
       \notag\\
       &\leq 4H\cdot \text{dist}(\Delta \vvalue_{k,h},\tilde{V})\notag\\
       &\leq 1/d^2,
   \end{align}
   where the first inequality holds due to the definition of distance between different functions, the third inequality holds since
   $|\Delta V_{k,h+1}(s')+\tilde{V}(s')|\leq 2H$ and the last inequality holds due to the definition of $\epsilon$-net.
Thus, we  
apply Lemma \ref{lemma:concentration_variance} with $\xb_i=\bar{\sigma}^{-1}_{i,h}\bphi(s_h^i,a_h^i) $ and $\eta_i=\ind\big\{[\VV_{h}\tilde{\vvalue}](s_h^i,a_h^i)\leq \bar{\sigma}_{i,h}^2/(d^3H)\big\}\cdot \bar{\sigma}^{-1}_{i,h}\big(\tilde{\vvalue}(s_{h+1}^{i})-[\PP_h\tilde{\vvalue}](s_h^i,a_h^i)\big)$. Therefore, For $\xb_t,\eta_t$, we have the following property:
\begin{align}
\|\xb_i\|_2&=\big\|\bar{\sigma}^{-1}_{i,h}\bphi(s_h^i,a_h^i)\big\|_2\leq \big\|\bphi(s_h^i,a_h^i)\big\|_2/\sqrt{H}\leq 1/\sqrt{H},\notag\\
    \EE[\eta_i|\mathcal{F}_i]&=0,|\eta_t|\leq\Big| \bar{\sigma}^{-1}_{i,h}\big(\vvalue^*_{h+1}(s_{h+1}^{i})-[\PP_h\tilde{\vvalue}_{h+1}](s_h^i,a_h^i)\big)\Big|\leq \sqrt{H},\notag\\
    \EE[\eta^2_i|\mathcal{F}_i]&=\EE\Big[\ind\big\{[\VV_{h}\tilde{\vvalue}](s_h^i,a_h^i)\leq \bar{\sigma}_{i,h}^2/(d^3H)\big\}\cdot\bar{\sigma}^{-2}_{i,h}[\VV_h \tilde{\vvalue}](s_h^i,a_h^i)\Big]\leq 1/(d^3H),\notag\\
    \max_{i}&\big\{|\eta_i|\cdot \min\{1,\|\xb_i\|_{\bSigma_{i,h}^{-1}}\}\big\}\leq 2H\bar{\sigma}_{i,h}^{-1}\|\xb_i\|_{\bSigma_{i,h}^{-1}}\leq 1/(d^3H).\notag
\end{align}
After taking a union bound over the $\epsilon$-net, with probability at least $1-\delta$,  we have
\begin{align}
    \bigg\|\sum_{i=1}^{k-1}\xb_i\eta_i\bigg\|_{\bSigma_{k,h}^{-1}}\leq O\Big(\sqrt{d \log^2\big(1+dKH/(\delta\lambda)\big)}\Big).\label{eq:0008}
\end{align}
In addition, on the event $\tilde{\cE}_{h+1}$ and $\cE$, according to Lemmas \ref{lemma:varaince} and \ref{lemma:variance-dec}, we have 
\begin{align*}
    \bar{\sigma}_{i,h}^2&\ge [\bar{\VV}_{i,h}\vvalue_{i,h+1}](s_h^k,a_h^k)+E_{i,h}+D_{i,h}+H\notag\\
    &\ge D_{i,h}+H \notag\\
    &\ge d^3H  \big[\VV_h{(\Delta \vvalue_{k,h+1})}\big](s_h^i,a_h^i)+H\\
    &\ge d^3H[\VV_h{\tilde{V}}](s_h^i,a_h^i),
\end{align*}
For simplicity, we denote $\bar{\vvalue}=\Delta V_{k,h+1}-\tilde{V}$ and it further implies that
\begin{align}
    J_2&=\bigg\|\sum_{i=1}^{k-1}\bar\sigma_{i,h}^{-2}\bphi(s_h^i,a_h^i)\big(\Delta{\vvalue}(s_{h+1}^{i})-[\PP_h(\Delta{\vvalue}_{k,h+1})](s_h^i,a_h^i)\big)\bigg\|_{\bSigma_{k,h}^{-1}}\notag\\
    &\leq 2\bigg\|\sum_{i=1}^{k-1}\bar\sigma_{i,h}^{-2}\bphi(s_h^i,a_h^i)\big(\tilde{V}(s_{h+1}^{i})-[\PP_h\tilde{V}](s_h^i,a_h^i)\big)\bigg\|_{\bSigma_{k,h}^{-1}}\notag\\
    &\qquad + 2\bigg\|\sum_{i=1}^{k-1}\bar\sigma_{i,h}^{-2}\bphi(s_h^i,a_h^i)\big(\bar{\vvalue}(s_{h+1}^{i})-[\PP_h\bar{\vvalue}](s_h^i,a_h^i)\big)\bigg\|_{\bSigma_{k,h}^{-1}}\notag\\
    &\leq 2\bigg\|\sum_{i=1}^{k-1}\xb_i\eta_i\bigg\|_{\bSigma_{k,h}^{-1}}+8\epsilon^2k^2/\lambda\notag\\
    &\leq O\Big(\sqrt{d \log^2\big(1+dKH/(\delta\lambda)\big)}\Big).\label{eq:0009}
\end{align}
where the first inequality holds due to $\|\ab+\bbb \|^2_{\bSigma}\leq 2\|\ab\|^2_{\bSigma}+2\|\bbb\|^2_{\bSigma}$, the second inequality holds due to the fact that $|\bar{V}(s')|\leq \epsilon,\|\bphi(s,a)\|_2\leq 1, \bSigma_{k,h}\succeq \lambda \Ib,\bar{\bSigma}_{k,h}^{-1}\leq 1$ and the last inequality holds due to~\eqref{eq:0008} with $\epsilon=\sqrt{\lambda}/(4H^2d^2K)$.
   Substituting the results in \eqref{eq:0002}, \eqref{eq:0003}, \eqref{eq:0004} and \eqref{eq:0008} into \eqref{eq:0001}, we obtain
   \begin{align}
       \|\hat{\wb}_{k,h}-{\wb}_{k,h}\|_{\bSigma_{k,h}}\leq I_1+J_1+J_2\leq O\Big(H\sqrt{d\lambda}+\sqrt{d \log^2\big(1+dKH/(\delta\lambda)\big)}\Big)=\beta,\label{eq:0010}
   \end{align}
Therefore, the estimation error is upper bounded by 
\begin{align}
   |\hat{\wb}_{k,h}^{\top}\bphi(s,a)-\PP_h\vvalue_{k,h+1}(s,a)|&=|\hat{\wb}_{k,h}^{\top}\bphi(s,a)-{\wb}_{k,h}^{\top}\bphi(s,a)|\notag\\
   &\leq \|\hat{\wb}_{k,h}-{\wb}_{k,h}\|_{\bSigma_{k,h}} \cdot \|\bphi(s,a)\|_{\bSigma^{-1}_{k,h}}\notag\\
   &\leq {\beta} \sqrt{\bphi(s,a)^{\top}(\bSigma_{k,h})^{-1}\bphi(s,a)},\notag\end{align}
where the first inequality holds due to Cauchy-Schwartz inequality and the last inequality holds due to \eqref{eq:0010}, which implies the results in Lemma \ref{lemma:002} holds for stage $h$. Therefore, by induction, we finish the proof of Lemma \ref{lemma:002}.
\end{proof}

\section{Proof of Lemmas in Appendix \ref{appendix-1} }
In this section, we provide the proof of Lemmas in Appendix \ref{appendix-1} and we need the following auxiliary Lemma, which is modified from Lemma 4.4 in \citet{zhou2022computationally}
\begin{lemma}\label{lemma:sum-bonus}
    For any parameters $\beta'\ge 1$ and $C\ge 1$, the summation of bonuses is upper bounded by
    \begin{align*}
\sum_{k=1}^K\min\Big(\beta'\sqrt{\bphi(s_h^k,a_h^k)^{\top}\bSigma_{k,h}^{-1}\bphi(s_h^k,a_h^k)},C\Big)\leq 4d^4H^6C\iota +10\beta' d^5H^4\iota+ 2\beta'\sqrt{2d \iota\sum_{k=1}^K(\sigma_{k,h}^2 + H)},
    \end{align*}
    where $\iota=\log\big(1+K/(d\lambda)\big)$.
\end{lemma}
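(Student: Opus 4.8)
The plan is to treat this as an elliptical-potential (log-determinant) argument adapted to the \emph{weighted} features, where the main subtlety is that the weight $\bar\sigma_{k,h}$ itself depends on the quantity $\|\bphi(s_h^k,a_h^k)\|_{\bSigma_{k,h}^{-1}}$ being summed. First I would set $\xb_k := \bar\sigma_{k,h}^{-1}\bphi(s_h^k,a_h^k)$, so that $\bSigma_{k,h} = \lambda\Ib + \sum_{i=1}^{k-1}\xb_i\xb_i^\top$ is exactly the regularized Gram matrix of the $\xb_i$'s, and note $\|\xb_k\|_2 \le 1/H$ because $\bar\sigma_{k,h}\ge H$ and $\|\bphi(s_h^k,a_h^k)\|_2\le 1$. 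The workhorse is the standard elliptical potential lemma, which gives $\sum_{k=1}^K \min\{1,\|\xb_k\|_{\bSigma_{k,h}^{-1}}^2\}\le 2\log(\det\bSigma_{K+1,h}/\det\bSigma_{1,h})\le 2d\iota$, using $\bSigma_{1,h}=\lambda\Ib$.

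Next I would split $[K]$ according to whether $\|\xb_k\|_{\bSigma_{k,h}^{-1}}$ exceeds $1$. On the ``large'' set $\{k:\|\xb_k\|_{\bSigma_{k,h}^{-1}}>1\}$ each summand is capped by $C$, and the potential bound forces the cardinality of this set to be at most $2d\iota$; this contributes a term of order $dC\iota$, comfortably absorbed into $4d^4H^6C\iota$. On the ``small'' set I rewrite $\|\bphi(s_h^k,a_h^k)\|_{\bSigma_{k,h}^{-1}} = \bar\sigma_{k,h}\|\xb_k\|_{\bSigma_{k,h}^{-1}}$ and apply Cauchy--Schwarz,
\begin{align}
\sum_{\text{small}} \beta'\|\bphi(s_h^k,a_h^k)\|_{\bSigma_{k,h}^{-1}} \le \beta'\Big(\sum_{k}\bar\sigma_{k,h}^2\Big)^{1/2}\Big(\sum_{\text{small}}\|\xb_k\|_{\bSigma_{k,h}^{-1}}^2\Big)^{1/2}\le \beta'\sqrt{2d\iota}\,\Big(\sum_{k}\bar\sigma_{k,h}^2\Big)^{1/2},\notag
\end{align}
so that everything reduces to controlling $\sum_k\bar\sigma_{k,h}^2$.

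For that final sum I would use $\bar\sigma_{k,h}^2=\max\{\sigma_{k,h}^2,H^2,4d^6H^4\|\bphi(s_h^k,a_h^k)\|_{\bSigma_{k,h}^{-1}}\}\le (\sigma_{k,h}^2+H^2)+4d^6H^4\|\bphi(s_h^k,a_h^k)\|_{\bSigma_{k,h}^{-1}}$. The $\sum_k(\sigma_{k,h}^2+H^2)$ part feeds directly into the main variance term $2\beta'\sqrt{2d\iota\sum_k(\sigma_{k,h}^2+H)}$ after applying $\sqrt{a+b}\le\sqrt a+\sqrt b$ (the $H$-versus-$H^2$ discrepancy being harmless since $\sigma_{k,h}^2\ge H$ by construction). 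For the remaining piece I would again separate indices by which argument realizes the maximum in $\bar\sigma_{k,h}$: when the floor $2d^3H^2\|\bphi\|_{\bSigma_{k,h}^{-1}}^{1/2}$ dominates, one has the exact identity $\|\bphi(s_h^k,a_h^k)\|_{\bSigma_{k,h}^{-1}} = 4d^6H^4\|\xb_k\|_{\bSigma_{k,h}^{-1}}^2$, so the contribution is once more controlled by the potential sum $\sum_k\|\xb_k\|_{\bSigma_{k,h}^{-1}}^2\le 2d\iota$; when $\max\{\sigma_{k,h},H\}$ dominates, the defining inequality gives $4d^6H^4\|\bphi(s_h^k,a_h^k)\|_{\bSigma_{k,h}^{-1}}\le\sigma_{k,h}^2+H^2$, which folds back into the variance term. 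Collecting the pieces and applying $\sqrt{a+b}\le\sqrt a+\sqrt b$ once more yields the polynomial prefactor $\beta' d^5H^4\iota$ (the exact power of $d$ here is immaterial, since only the $\sqrt K$ leading term affects the final regret).

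The step I expect to be the main obstacle is precisely this circular dependence: $\bar\sigma_{k,h}$ is itself a function of $\|\bphi(s_h^k,a_h^k)\|_{\bSigma_{k,h}^{-1}}$, so one cannot bound $\sum_k\bar\sigma_{k,h}^2$ in isolation from the very sum one is trying to estimate. The resolution is structural rather than computational: the uncertainty-dependent floor $2d^3H^2\|\cdot\|_{\bSigma_{k,h}^{-1}}^{1/2}$ is engineered so that in exactly the regime where $\|\bphi\|_{\bSigma_{k,h}^{-1}}$ is large, the weight $\bar\sigma_{k,h}$ inflates just enough that the normalized quantity $\|\xb_k\|_{\bSigma_{k,h}^{-1}}^2$ remains summable through the potential lemma. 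Once this self-bounding property is recognized, the rest is routine bookkeeping of the $d$ and $H$ powers and the two applications of subadditivity of the square root.
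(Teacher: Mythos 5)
Your proof is correct and rests on the same machinery as the paper's: normalize the features by the weights, apply the elliptical potential lemma, split off the regime where the bonus saturates at its cap, and exploit the self-bounding structure of the uncertainty-dependent floor in $\bar\sigma_{k,h}$ to close the loop on $\sum_k\bar\sigma_{k,h}^2$. The organizational differences are still worth recording. The paper does not re-derive the self-bounding step: it invokes Lemma 4.4 of \citet{zhou2022computationally} (Lemma \ref{lemma:keysum:temp}) as a black box to handle $\sum_k\min\{1,\|\bphi(s_h^k,a_h^k)\|_{\bSigma_{k,h}^{-1}}\}$, whereas you unpack exactly the argument that lemma encapsulates; your version is more self-contained but reproves known material. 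For the $C$-capped term, the paper counts the episodes with $\|\bphi(s_h^k,a_h^k)\|_{\bSigma_{k,h}^{-1}}\ge 1$ by building an auxiliary Gram matrix restricted to those episodes and bounding $\bar\sigma_{k,h}^2$ crudely by $4d^4H^4/\lambda$, which is what produces the $4d^4H^6C\iota$ prefactor; you instead count the episodes with $\bar\sigma_{k,h}^{-1}\|\bphi(s_h^k,a_h^k)\|_{\bSigma_{k,h}^{-1}}>1$ directly via the potential lemma, getting a smaller $O(dC\iota)$ count without the auxiliary matrix --- a mild simplification that still fits under the stated bound. Finally, the $H$-versus-$H^2$ mismatch you flag inside the square root is inherited from the paper itself (its own application of Lemma \ref{lemma:keysum:temp} with $\alpha=H$ yields $\sum_k(\sigma_{k,h}^2+H^2)$) and is harmless downstream, so it is not a defect of your argument.
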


\begin{proof}[Proof of Lemma \ref{lemma:sum-bonus}]
For each stage $h\in[H]$, the summation of bonuses is upper bounded by
\begin{align}
&\sum_{k=1}^K\min\Big(\beta'\sqrt{\bphi(s_h^k,a_h^k)^{\top}\bSigma_{k,h}^{-1}\bphi(s_h^k,a_h^k)},C\Big)\notag\\
&\leq \sum_{k=1}^K\beta'\min\Big(\sqrt{\bphi(s_h^k,a_h^k)^{\top}\bSigma_{k,h}^{-1}\bphi(s_h^k,a_h^k)},1\Big)+C\sum_{k=1}^K\ind \Big\{\sqrt{\bphi(s_h^k,a_h^k)^{\top}\bSigma_{k,h}^{-1}\bphi(s_h^k,a_h^k)} \ge 1\Big\}\notag\\
&\leq C\sum_{k=1}^K\ind \Big\{\sqrt{\bphi(s_h^k,a_h^k)^{\top}\bSigma_{k,h}^{-1}\bphi(s_h^k,a_h^k)} \ge 1\Big\} +10\beta' d^5H^4\iota+ 2\beta'\sqrt{2d \iota\sum_{k=1}^K(\sigma_{k,h}^2 + H)},
\label{eq:006}
    \end{align}
    where $\iota=\log\big(1+K/(d\lambda)\big)$ and the last inequality holds due to Lemma \ref{lemma:keysum:temp}.
Now, we only need to estimate the number of episodes where the bonus is larger than $1$ and we denote these episodes as $\{k_1,..,k_m\}$. For simplicity, we denote 
\begin{align}
\bSigma'_i=\lambda \Ib+\sum_{j=1}^i \bar{\sigma}_{k_j,h}^2 \bphi(s_h^{k_j},a_h^{k_j})\bphi(s_h^{k_j},a_h^{k_j})^{\top}
,\notag
\end{align}
and we have
\begin{align}
\sum_{i=1}^m\bphi(s_h^{k_i},a_h^{k_i})^{\top}\bSigma'_{i-1}\bphi(s_h^{k_i},a_h^{k_i})\ge \sum_{i=1}^m\bphi(s_h^{k_i},a_h^{k_i})^{\top}\bSigma_{{k_i},h}^{-1}\bphi(s_h^{k_i},a_h^{k_i})\ge m.\label{eq:007}
\end{align}
On the other hand, notice that the estimated variance $\bar{\sigma}_{k,h}^2$ is upper bounded by $4d^4H^4/\lambda$, we have
\begin{align}
\sum_{i=1}^m\bphi(s_h^{k_i},a_h^{k_i})^{\top}\bSigma'_{i-1}\bphi(s_h^{k_i},a_h^{k_i})\leq 4d^4H^4/\lambda \cdot \sum_{i=1}^m \bar{\sigma}_{k,h}^{-2}\bphi(s_h^{k_i},a_h^{k_i})^{\top}\bSigma'_{i-1}\bphi(s_h^{k_i},a_h^{k_i})\leq 4d^4H^6 \iota,\label{eq:008}
\end{align}
where $\iota=\log\big(1+K/(d\lambda)\big)$ and the last inequality holds due to Lemma \ref{Lemma:abba}. Combining the results in \eqref{eq:007} and \eqref{eq:008}, we have $m\leq 4d^4H^6 \beta^2\iota$, and it further implies that
\begin{align*}
&\sum_{k=1}^K\min\Big(\beta'\sqrt{\bphi(s_h^k,a_h^k)^{\top}\bSigma_{k,h}^{-1}\bphi(s_h^k,a_h^k)},C\Big)\notag\\
&\leq  C\sum_{k=1}^K\ind \Big\{\sqrt{\bphi(s_h^k,a_h^k)^{\top}\bSigma_{k,h}^{-1}\bphi(s_h^k,a_h^k)} \ge 1\Big\} +10\beta' d^5H^4\iota+ 2\beta'\sqrt{2d \iota\sum_{k=1}^K(\sigma_{k,h}^2 + H)}\notag\\
&\leq 4d^4H^6C\iota +10\beta' d^5H^4\iota+ 2\beta'\sqrt{2d \iota\sum_{k=1}^K(\sigma_{k,h}^2 + H)}.
\end{align*}
Thus, we finish the proof of Lemma \ref{lemma:sum-bonus}.
    \end{proof}

\subsection{Proof of Lemma \ref{lemma:transition}}

\begin{proof}[Proof of Lemma \ref{lemma:transition}]
For all stage $h\in[H]$ and episode $k\in[K]$, we have
    \begin{align}
    &\vvalue_{k,h}(s_h^k)-\vvalue_{k,h}^{\pi^k}(s_h^k) \notag\\
   &= \qvalue_{k,h}(s_h^k,a_h^k)-\qvalue^{\pi^k}_{k,h}(s_h^k,a_h^k)\notag\\
&\leq\min\Big(\hat{\wb}_{k_{\text{last}},h}^{\top}\bphi(s,a)+\beta\sqrt{\bphi(s_h^k,a_h^k)^{\top}\bSigma_{k_{\text{last}},h}^{-1}\bphi(s_h^k,a_h^k)},H\Big)-[\PP_h\vvalue_{k,h+1}](s_h^k,a_h^k)\notag\\
&\qquad + \big[\PP_h(\vvalue_{k,h+1}-\vvalue_{k,h+1}^{\pi^k})\big](s_h^k,a_h^k)\notag\\
   &\leq \big[\PP_h(\vvalue_{k,h+1}-\vvalue_{k,h+1}^{\pi^k})\big](s_h^k,a_h^k)+2\min\Big(\beta\sqrt{\bphi(s_h^k,a_h^k)^{\top}\bSigma_{k_{\text{last}},h}^{-1}\bphi(s_h^k,a_h^k)},H\Big)
  \notag\\
  &\leq \big[\PP_h(\vvalue_{k,h+1}-\vvalue_{k,h+1}^{\pi^k})\big](s_h^k,a_h^k)+4\min\Big(\beta\sqrt{\bphi(s_h^k,a_h^k)^{\top}\bSigma_{k,h}^{-1}\bphi(s_h^k,a_h^k)},H\Big)\notag\\
   &=\vvalue_{k,h+1}(s_{h+1}^{k})-\vvalue_{k,h+1}^{\pi^k}(s_{h+1}^{k})+\big[\PP_h(\vvalue_{k,h+1}-\vvalue_{k,h+1}^{\pi^k})\big](s_h^k,a_h^k)-\big(\vvalue_{k,h+1}(s_{h+1}^{k})-\vvalue_{k,h+1}^{\pi^k}(s_{h+1}^{k})\big) \notag\\
   &\qquad +4\min\Big(\beta\sqrt{\bphi(s_h^k,a_h^k)^{\top}\bSigma_{k,h}^{-1}\bphi(s_h^k,a_h^k)},H\Big),\label{eq:004}
\end{align}
where the first inequality holds due to the definition of value function $Q_{k,h}(s_h^k,a_h^k)$, the second inequality holds due to Lemma \ref{lemma:002} and the last inequality holds due to Lemma \ref{lemma:det} with the updating rule (Line \ref{algorithm:det}). Furthermore, for all stage $h\in[H]$, we have
\begin{align}
   &\sum_{k=1}^K\big(\vvalue_{k,h}(s_h^k)-\vvalue_{k,h}^{\pi^k}(s_h^k)\big)\notag\notag\\
   &\leq \sum_{k=1}^K\sum_{h'=h}^{H}4\min\Big(\beta\sqrt{\bphi(s_h^k,a_h^k)^{\top}\bSigma_{k,h}^{-1}\bphi(s_h^k,a_h^k)},H\Big)\notag\\
   &\qquad +\sum_{k=1}^K\sum_{h'=h}^{H}\Big(\big[\PP_h(\vvalue_{k,h+1}-\vvalue_{k,h+1}^{\pi^k})\big](s_h^k,a_h^k)-\big(\vvalue_{k,h+1}(s_{h+1}^{k})-\vvalue_{k,h+1}^{\pi^k}(s_{h+1}^{k})\big)\Big)\notag\\
   &\leq \sum_{k=1}^K\sum_{h'=h}^{H}4\min\Big(\beta\sqrt{\bphi(s_h^k,a_h^k)^{\top}\bSigma_{k,h}^{-1}\bphi(s_h^k,a_h^k)},H\Big)+4\sqrt{H^3K\log(H/\delta)}\notag\\
   &\leq 16d^4H^8\iota +40\beta d^7H^5\iota+ 8\beta\sum_{h'=h}^H\sqrt{2d \iota\sum_{k=1}^K(\sigma_{k,h'}^2 + H)}+4\sqrt{H^3K\log(H/\delta)}\notag\\
   &\leq 16d^4H^8\iota +40\beta d^7H^5\iota+ 8\beta\sqrt{2d H\iota\sum_{h=1}^H\sum_{k=1}^K(\sigma_{k,h}^2 + H)}+4\sqrt{H^3K\log(H/\delta)},\label{eq:005}
\end{align}
where the first inequality holds by taking the summation of \eqref{eq:004} for $k\in[K]$ and $h\leq h'\leq H$, the second inequality holds due to the definition of events $\cE_1$, the third inequality holds due to Lemma \ref{lemma:sum-bonus} and the last inequality holds due to Cauchy-Schwartz inequality.
 Furthermore, taking the summation of \eqref{eq:005}, we have
\begin{align}
    &\sum_{k=1}^K\sum_{h=1}^H \big[\PP_h(\vvalue_{k,h+1}-\vvalue_{k,h+1}^{\pi^k})\big](s_h^k,a_h^k)\notag\\
    &=\sum_{k=1}^K\sum_{h=1}^H\big(\vvalue_{k,h+1}(s_{h+1}^{k})-\vvalue_{k,h+1}^{\pi^k}(s_{h+1}^{k})\big)\notag\\
    &\qquad+\sum_{k=1}^K\sum_{h=1}^{H}\Big(\big[\PP_h(\vvalue_{k,h+1}-\vvalue_{k,h+1}^{\pi^k})\big](s_h^k,a_h^k)-\big(\vvalue_{k,h+1}(s_{h+1}^{k})-\vvalue_{k,h+1}^{\pi^k}(s_{h+1}^{k})\big)\Big)\notag\\
    &\leq \sum_{k=1}^K\sum_{h=1}^H\big(\vvalue_{k,h+1}(s_{h+1}^{k})-\vvalue_{k,h+1}^{\pi^k}(s_{h+1}^{k})\big)+2\sqrt{2H^3K\log(H/\delta)}\notag\\
    &\leq 16d^4H^9\iota +40\beta d^7H^6\iota+ 8H\beta\sqrt{2d H\iota\sum_{h=1}^H\sum_{k=1}^K(\sigma_{k,h}^2 + H)}+4\sqrt{H^5K\log(H/\delta)}
    ,\notag
\end{align}
where the first inequality holds due to Lemma \ref{lemma:azuma} and the last inequality holds due \eqref{eq:005}. Therefore, we finish the proof of Lemma \ref{lemma:transition}.
\end{proof}

\subsection{Proof of Lemma \ref{lemma:transition1}}
\begin{proof}[Proof of Lemma \ref{lemma:transition1}]
For each stage $h\in[H]$ and episode $k\in[K]$, we have
    \begin{align}
    &\vvalue_{k,h}(s_h^k)-\check{\vvalue}_{k,h}(s_h^k) \notag\\
   &\leq \qvalue_{k,h}(s_h^k,a_h^k)-\check{\qvalue}_{k,h}(s_h^k,a_h^k)\notag\\
&\leq\min\Big(\check{\wb}_{k_{\text{last}},h}^{\top}\bphi(s,a)+\beta\sqrt{\bphi(s_h^k,a_h^k)^{\top}\bSigma_{k_{\text{last}},h}^{-1}\bphi(s_h^k,a_h^k)},H\Big)-[\PP_h\vvalue_{k,h+1}](s_h^k,a_h^k)\notag\\
&\qquad -\max\Big(\hat{\wb}_{k_{\text{last}},h}^{\top}\bphi(s,a)-\bar{\beta}\sqrt{\bphi(s_h^k,a_h^k)^{\top}\bSigma_{k_{\text{last}},h}^{-1}\bphi(s_h^k,a_h^k)},0\Big)+[\PP_h\check{\vvalue}_{k,h+1}](s_h^k,a_h^k)\notag \notag\\
&\qquad + \big[\PP_h(\vvalue_{k,h+1}-\check{\vvalue}_{k,h+1})\big](s_h^k,a_h^k)\notag\\
   &\leq \big[\PP_h(\vvalue_{k,h+1}-\check{\vvalue}_{k,h+1})\big](s_h^k,a_h^k)+2\min\Big(\beta\sqrt{\bphi(s_h^k,a_h^k)^{\top}\bSigma_{k_{\text{last}},h}^{-1}\bphi(s_h^k,a_h^k)},H\Big)
  \notag\\
  &\qquad + 2\min\Big(\bar{\beta}\sqrt{\bphi(s_h^k,a_h^k)^{\top}\bSigma_{k_{\text{last}},h}^{-1}\bphi(s_h^k,a_h^k)},H\Big) \notag\\
  &\leq \big[\PP_h(\vvalue_{k,h+1}-\check{\vvalue}_{k,h+1})\big](s_h^k,a_h^k)+4\min\Big(\beta\sqrt{\bphi(s_h^k,a_h^k)^{\top}\bSigma_{k,h}^{-1}\bphi(s_h^k,a_h^k)},H\Big)\notag\\
  &\qquad + 4\min\Big(\bar{\beta}\sqrt{\bphi(s_h^k,a_h^k)^{\top}\bSigma_{k,h}^{-1}\bphi(s_h^k,a_h^k)},H\Big) \notag\\
   &=\vvalue_{k,h+1}(s_{h+1}^{k})-\check{\vvalue}_{k,h+1}(s_{h+1}^{k})+\big[\PP_h(\vvalue_{k,h+1}-\check{\vvalue}_{k,h+1})\big](s_h^k,a_h^k)-\big(\vvalue_{k,h+1}(s_{h+1}^{k})-\check{\vvalue}_{k,h+1}(s_{h+1}^{k})\big) \notag\\
   &\qquad +4\min\Big(\beta\sqrt{\bphi(s_h^k,a_h^k)^{\top}\bSigma_{k,h}^{-1}\bphi(s_h^k,a_h^k)},H\Big)+4\min\Big(\bar{\beta}\sqrt{\bphi(s_h^k,a_h^k)^{\top}\bSigma_{k,h}^{-1}\bphi(s_h^k,a_h^k)},H\Big),\label{eq:009}
\end{align}
where the first inequality holds due to the fact that $\check{V}_{k,h}(s_h^k)=\max_a \check{Q}_{k,h}(s_h^k,a)\ge \check{Q}_{k,h}(s_h^k,a_h^k) $, the second inequality holds due to the definition of value functions $Q_{k,h}$ and $\check{Q}_{k,h}$, the third inequality holds due to Lemma \ref{lemma:002} and Lemma \ref{lemma:hoeffding-type}, and the last inequality holds due to Lemma \ref{lemma:det} with the updating rule (Line \ref{algorithm:det}). Furthermore, for all stage $h\in[H]$, we have 
\begin{align}
   &\sum_{k=1}^K\big(\vvalue_{k,h}(s_h^k)-\check{\vvalue}_{k,h}(s_h^k)\big)\notag\notag\\
   &\leq \sum_{k=1}^K\sum_{h'=h}^{H}4\min\Big(\beta\sqrt{\bphi(s_h^k,a_h^k)^{\top}\bSigma_{k,h}^{-1}\bphi(s_h^k,a_h^k)},H\Big)+4\min\Big(\bar{\beta}\sqrt{\bphi(s_h^k,a_h^k)^{\top}\bSigma_{k,h}^{-1}\bphi(s_h^k,a_h^k)},H\Big)\notag\\
   &\qquad +\sum_{k=1}^K\sum_{h'=h}^{H}\Big(\big[\PP_h(\vvalue_{k,h+1}-\check{\vvalue}_{k,h+1})\big](s_h^k,a_h^k)-\big(\vvalue_{k,h+1}(s_{h+1}^{k})-\
   \check{\vvalue}_{k,h+1}(s_{h+1}^{k})\big)\Big)\notag\\
   &\leq \sum_{k=1}^K\sum_{h'=h}^{H}4\min\Big(\beta\sqrt{\bphi(s_h^k,a_h^k)^{\top}\bSigma_{k,h}^{-1}\bphi(s_h^k,a_h^k)},H\Big)+4\min\Big(\bar{\beta}\sqrt{\bphi(s_h^k,a_h^k)^{\top}\bSigma_{k,h}^{-1}\bphi(s_h^k,a_h^k)},H\Big)\notag\\
   &\qquad + 4\sqrt{H^3K\log(H/\delta)}\notag\\
   &\leq 32d^4H^8\iota +40(\beta+\bar\beta) d^7H^5\iota+ 8(\beta+\bar\beta)\sum_{h'=h}^H\sqrt{2d \iota\sum_{k=1}^K(\sigma_{k,h'}^2 + H)}+4\sqrt{H^3K\log(H/\delta)}\notag\\
   &\leq 32d^4H^8\iota +40(\beta+\bar\beta)d^7H^5\iota+ 8(\beta+\bar\beta)\sqrt{2d H\iota\sum_{h=1}^H\sum_{k=1}^K(\sigma_{k,h}^2 + H)}+4\sqrt{H^3K\log(H/\delta)},\label{eq:010}
\end{align}
where the first inequality holds by taking the summation of \eqref{eq:009} for $k\in[K]$ and $h\leq h'\leq H$, the second inequality holds due to the definition of event $\cE_2$, the third inequality holds due to Lemma \ref{lemma:sum-bonus} and the last inequality holds due to Cauchy-Schwartz inequality.
 Furthermore, taking the summation of \eqref{eq:010}, we have
\begin{align}
    &\sum_{k=1}^K\sum_{h=1}^H \big[\PP_h(\vvalue_{k,h+1}-\check{\vvalue}_{k,h+1})\big](s_h^k,a_h^k)\notag\\
    &=\sum_{k=1}^K\sum_{h=1}^H\big(\vvalue_{k,h+1}(s_{h+1}^{k})-\check{\vvalue}_{k,h+1}(s_{h+1}^{k})\big)\notag\\
    &\qquad+\sum_{k=1}^K\sum_{h=1}^{H}\Big(\big[\PP_h(\vvalue_{k,h+1}-\check{\vvalue}_{k,h+1})\big](s_h^k,a_h^k)-\big(\vvalue_{k,h+1}(s_{h+1}^{k})-\check{\vvalue}_{k,h+1}(s_{h+1}^{k})\big)\Big)\notag\\
    &\leq \sum_{k=1}^K\sum_{h=1}^H\big(\vvalue_{k,h+1}(s_{h+1}^{k})-\check{\vvalue}_{k,h+1}(s_{h+1}^{k})\big)+2\sqrt{2H^3K\log(H/\delta)}\notag\\
    &\leq 32d^4H^9\iota +40(\beta+\bar{\beta}) d^7H^6\iota+ 8H(\beta+\bar{\beta})\sqrt{2d H\iota\sum_{h=1}^H\sum_{k=1}^K(\sigma_{k,h}^2 + H)}+4\sqrt{H^5K\log(H/\delta)}
    ,\notag
\end{align}
where the first inequality holds due to Lemma \ref{lemma:azuma} and the last inequality holds due \eqref{eq:010}. Therefore, we finish the proof of Lemma \ref{lemma:transition1}.
\end{proof}

\subsection{Proof of Lemma \ref{LEMMA: TOTAL-ESTIMATE-VARIANCE}}

\begin{proof}[Proof of Lemma \ref{LEMMA: TOTAL-ESTIMATE-VARIANCE}]
    According to the definition of estimated variance $\sigma_{k,h}$, we have 
    \begin{align}
        \sum_{k=1}^K\sum_{h=1}^H\sigma_{k,h}^{2} &=\sum_{k=1}^K\sum_{h=1}^H[\bar{\VV}_{k,h}\vvalue_{k,h+1}](s_h^k,a_h^k)+E_{k,h}+D_{k,h}+H\notag\\
    &=H^2K+\underbrace{\sum_{k=1}^K\sum_{h=1}^H \big([\bar{\VV}_{k,h}\vvalue_{k,h+1}](s_h^k,a_h^k)-[{\VV}_{h}\vvalue_{k,h+1}](s_h^k,a_h^k)\big)}_{I_1}+\underbrace{\sum_{k=1}^K\sum_{h=1}^HE_{k,h}}_{I_2}+\underbrace{\sum_{k=1}^K\sum_{h=1}^HD_{k,h}}_{I_3}\notag\\
    &\qquad +\underbrace{\sum_{k=1}^K\sum_{h=1}^H \big([{\VV}_{h}\vvalue_{k,h+1}](s_h^k,a_h^k)-[{\VV}_{h}\vvalue^{\pi^k}_{k,h+1}](s_h^k,a_h^k)\big)}_{I_4}+\underbrace{\sum_{k=1}^K\sum_{h=1}^H [{\VV}_{h}\vvalue^{\pi^k}_{k,h+1}](s_h^k,a_h^k)}_{I_5}.\label{eq:011}
    \end{align}
For the term $I_1$, according to Lemma \ref{lemma:varaince}, it is upper bounded by:
\begin{align}
    I_1&=\sum_{k=1}^K\sum_{h=1}^H \big([\bar{\VV}_{k,h}\vvalue_{k,h+1}](s_h^k,a_h^k)-[{\VV}_{h}\vvalue_{k,h+1}](s_h^k,a_h^k)\big)\leq \sum_{k=1}^K\sum_{h=1}^H E_{k,h}= I_2.\label{eq:012}
\end{align}
For the term $I_2$, it is upper bounded by
\begin{align}
    I_2&=\sum_{k=1}^K\sum_{h=1}^HE_{k,h}\notag\\
    &=\sum_{k=1}^K\sum_{h=1}^H\min \Big\{\tilde{\beta}\big\|\bSigma_{k,h}^{-1/2}\bphi(s_h^k,a_h^k)\big\|_2,H^2\Big\}+\min \Big\{2H\bar{\beta}\big\|\bSigma_{k,h}^{-1/2}\bphi(s_h^k,a_h^k)\big\|_2,H^2\Big\}\notag\\
    &\leq 8d^4H^9\iota +(10\tilde{\beta}+20\bar{\beta}) d^5H^5\iota+ (2\tilde{\beta}+4\bar{\beta})H\sqrt{2d \iota\sum_{h=1}^H\sum_{k=1}^K(\sigma_{k,h}^2 + H)},\label{eq:013}
\end{align}
where $\iota=\log\big(1+K/(d\lambda)\big)$ and the inequality holds due to Lemma \ref{lemma:sum-bonus}.

For the term $I_3$, it is upper bounded by
\begin{align}
    I_3&=\sum_{k=1}^K\sum_{h=1}^H D_{k,h}\notag\\
    &=\sum_{k=1}^K\sum_{h=1}^H\min\bigg\{4d^3H^2\Big(\hat{\wb}_{k,h}^{\top}\bphi(s,a)-\check{\wb}_{k,h}^{\top}\bphi(s,a)+2\bar{\beta}\sqrt{\bphi(s,a)^{\top}\bSigma_{k,h}^{-1}\bphi(s,a)}\Big),d^3H^3\bigg\}\notag\\
    &\leq \sum_{k=1}^K\sum_{h=1}^H \min\bigg\{4d^3H^2\Big(\big[\PP_h(\vvalue_{k,h+1}-\check{\vvalue}_{k,h+1})\big](s_h^k,a_h^k)+4\bar{\beta}\sqrt{\bphi(s,a)^{\top}\bSigma_{k,h}^{-1}\bphi(s,a)}\Big),d^3H^3\bigg\}\notag\\
    &\leq  \sum_{k=1}^K\sum_{h=1}^H 4d^3H^2\big[\PP_h(\vvalue_{k,h+1}-\check{\vvalue}_{k,h+1})\big](s_h^k,a_h^k) +\sum_{k=1}^K\sum_{h=1}^H  \min\Big\{16d^3H^2\bar{\beta}\sqrt{\bphi(s,a)^{\top}\bSigma_{k,h}^{-1}\bphi(s,a)},d^3H^3\Big\}\notag\\
    &\leq  4d^7H^9\iota +160\bar{\beta} d^8H^7\iota+ 32d^3H^3\bar{\beta}\sqrt{2d \iota\sum_{h=1}^H\sum_{k=1}^K(\sigma_{k,h}^2 + H)}\notag\\
    &\qquad + \sum_{k=1}^K\sum_{h=1}^H 4d^3H^2\big[\PP_h(\vvalue_{k,h+1}-\check{\vvalue}_{k,h+1})\big](s_h^k,a_h^k)\notag\\
    &\leq 132d^7H^{11}\iota +320(\beta+\bar{\beta}) d^{10}H^8\iota+ 64d^3H^3(\beta+\bar{\beta})\sqrt{2d H\iota\sum_{h=1}^H\sum_{k=1}^K(\sigma_{k,h}^2 + H)}+4d^3\sqrt{H^9K\log(H/\delta)},\label{eq:014}
\end{align}
where $\iota=\log\big(1+K/(d\lambda)\big)$, the first inequality holds due to Lemma \ref{lemma:hoeffding-type}, the second inequality holds due to the fact that $V_{k,h+1}(s) \ge V_{h+1}^*(s)\ge \check{V}_{k,h+1}(s) $, the third inequality holds due to Lemma \ref{lemma:sum-bonus}  and the last inequality holds due to Lemma \ref{lemma:transition1}.

For the term $I_4$, it is upper bounded by
\begin{align}
    I_4&=\sum_{k=1}^K\sum_{h=1}^H \big([{\VV}_{h}\vvalue_{k,h+1}](s_h^k,a_h^k)-[{\VV}_{h}\vvalue^{\pi^k}_{k,h+1}](s_h^k,a_h^k)\big)\notag\\
    &=\sum_{k=1}^K\sum_{h=1}^H \Big([\PP_h \vvalue_{k,h+1}^2](s_h^k,a_h^k)-\big([\PP_h \vvalue_{k,h+1}](s_h^k,a_h^k)\big)^2
    -[\PP_h (\vvalue^{\pi^k}_{k,h+1})^2](s_h^k,a_h^k)+\big([\PP_h \vvalue^{\pi^k}_{k,h+1}](s_h^k,a_h^k)\big)^2\Big)\notag\\
    &\leq \sum_{k=1}^K\sum_{h=1}^H\big([\PP_h \vvalue_{k,h+1}^2](s_h^k,a_h^k)-[\PP_h (\vvalue^{\pi^k}_{k,h+1})^2](s_h^k,a_h^k)\big)\notag\\
    &\leq 2H \sum_{k=1}^K\sum_{h=1}^H\big([\PP_h \vvalue_{k,h+1}](s_h^k,a_h^k)-[\PP_h \vvalue^{\pi^k}_{k,h+1}](s_h^k,a_h^k)\big)\notag\\
    &\leq 32d^4H^{10}\iota +80\beta d^7H^7\iota+ 16H^2\beta\sqrt{2d H\iota\sum_{h=1}^H\sum_{k=1}^K(\sigma_{k,h}^2 + H)}+8\sqrt{H^7K\log(H/\delta)},\label{eq:015}
\end{align}
where the first inequality holds due to the fact that $\vvalue^{\pi^k}_{k,h+1}(s')\leq \vvalue_{k,h+1}(s')$, the second inequality holds due to $0\leq \vvalue_{k,h+1}(s'),\vvalue^{\pi^k}_{k,h+1}(s') \leq H$ and the last inequality holds due to Lemma \ref{lemma:transition}.

Based on the definition of events $\cE_3$, for the term $I_5$, we have
\begin{align}
    I_5=\sum_{k=1}^K\sum_{h=1}^H [{\VV}_{h}\vvalue^{\pi^k}_{k,h+1}](s_h^k,a_h^k)\leq 3\big(H^2K+H^3\log(1/\delta)\big).\label{eq:016}
\end{align}
Substituting the results in \eqref{eq:012}, \eqref{eq:013}, \eqref{eq:014}, \eqref{eq:015} and \eqref{eq:016} into \eqref{eq:011}, we have 
\begin{align}
&\sum_{k=1}^K\sum_{h=1}^H\sigma_{k,h}^{2}\notag\\
&=I_1+I_2+I_3+I_4+I_5\notag\\
&\leq 3H^2K+ 183d^7H^{11}\iota +460(\beta+\tilde{\beta}+\bar{\beta}) d^{10}H^8\iota + 12d^3\sqrt{H^9K\log(H/\delta)}\notag\\
&\qquad + 92d^3H^3(\beta+\tilde{\beta}+\bar{\beta})\sqrt{2dH \iota\sum_{h=1}^H\sum_{k=1}^K(\sigma_{k,h}^2 + H)} \notag\\
&\leq 3H^2K+ 183d^7H^{11}\iota +460(\beta+\tilde{\beta}+\bar{\beta}) d^{10}H^8\iota + 12d^3\sqrt{H^9K\log(H/\delta)}\notag\\
&\qquad +92d^3H^3 (\beta+\tilde{\beta}+\bar{\beta}) \sqrt{2dH^2K\iota}   + 92d^3H^3(\beta+\tilde{\beta}+\bar{\beta})\sqrt{2d H\iota\sum_{h=1}^H\sum_{k=1}^K\sigma_{k,h}^2} ,\notag
\end{align}
where 
$\iota=\log\big(1+K/(d\lambda)\big)$,
\begin{align*}
    \beta&=O\Big(H\sqrt{d\lambda}+\sqrt{d \log^2\big(1+dKH/(\delta\lambda)\big)}\Big)\notag\\
    \tilde{\beta}&= O \Big(H^2\sqrt{d\lambda} +\sqrt{d^3H^4\log^2\big(dHK/(\delta\lambda)\big)}\Big)\notag\\
    \bar{\beta}&=O\Big(H\sqrt{d\lambda} +\sqrt{d^3H^2\log^2\big(dHK/(\delta\lambda)\big)}\Big),
\end{align*}
and the last inequality holds due to the fact that $\sqrt{a+b}\leq \sqrt{a}+\sqrt{b}$. 
Therefore, by the fact that $x\leq a\sqrt{x}+b$ implies $x\leq a^2+2b$ and $\lambda=1/H^2$, we have
\begin{align*}
    \sum_{k=1}^K\sum_{h=1}^H \sigma_{k,h}^{2} \leq O\big(H^2K+d^{10.5}H^{16}\log^{1.5}(1+dKH/\delta)\big).
\end{align*}
Thus, we complete the proof of Lemma \ref{LEMMA: TOTAL-ESTIMATE-VARIANCE}.
\end{proof}


\section{Covering Number Arguments}
\subsection{Number of Value Function Updating}
According to the determinant-based criterion in Algorithm \ref{algorithm1} (Line \ref{algorithm:det}), the number of episodes where the algorithm updates the value function is upper bounded by:
\begin{lemma}\label{lemma:update-number}
    The number of episodes where the algorithm updates the value function in Algorithm \ref{algorithm1} is upper bounded by $dH\log(1+K/\lambda)$.
\end{lemma}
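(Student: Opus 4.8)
The plan is to count the updates stage by stage, combining the determinant-doubling rule (Line \ref{algorithm:det}) with the monotonicity of the weighted covariance matrices. First I would record the two structural facts that drive the argument. For every fixed stage $h$, the sequence $\{\bSigma_{k,h}\}_k$ is monotonically increasing in the positive semidefinite order, because at each episode we add the rank-one PSD matrix $\bar\sigma_{k,h}^{-2}\bphi(s_h^k,a_h^k)\bphi(s_h^k,a_h^k)^\top$; hence $\det(\bSigma_{k,h})$ is nondecreasing in $k$. Second, these determinants are uniformly controlled: since $\bar\sigma_{k,h}\ge H\ge 1$ forces $\bar\sigma_{k,h}^{-2}\le 1$, and $\|\bphi(s_h^k,a_h^k)\|_2\le 1$, the trace satisfies $\mathrm{tr}(\bSigma_{K+1,h})\le d\lambda+K$, so by the AM--GM inequality $\det(\bSigma_{K+1,h})\le (\lambda+K/d)^d$, while $\det(\bSigma_{1,h})=\det(\lambda\Ib)=\lambda^d$.

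Next I would set up the per-stage counting. Let $0=k_0<k_1<\cdots<k_m\le K$ be the episodes at which an update is triggered, and note that each update $k_j$ is triggered by at least one stage $h$ for which $\det(\bSigma_{k_j,h})\ge 2\det(\bSigma_{k_{j-1},h})$. I would fix a stage $h$ and count how many of the $m$ updates it can trigger. Say $h$ triggers at update indices $j_1<j_2<\cdots$; for two consecutive such triggers I would chain the doubling condition at $k_{j_{i+1}}$, namely $\det(\bSigma_{k_{j_{i+1}},h})\ge 2\det(\bSigma_{k_{j_{i+1}-1},h})$, with the monotonicity bound $\det(\bSigma_{k_{j_{i+1}-1},h})\ge \det(\bSigma_{k_{j_i},h})$ (valid because $k_{j_{i+1}-1}\ge k_{j_i}$). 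This yields $\det(\bSigma_{k_{j_{i+1}},h})\ge 2\det(\bSigma_{k_{j_i},h})$, so the determinant at stage $h$ at least doubles between successive triggers of $h$. Combined with the global growth ratio $\det(\bSigma_{K+1,h})/\det(\bSigma_{1,h})\le (1+K/(d\lambda))^d\le (1+K/\lambda)^d$, the number of triggers contributed by stage $h$ is at most $\log_2\big((1+K/\lambda)^d\big)=d\log_2(1+K/\lambda)$.

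Finally, since every update is triggered by at least one stage, I would sum the per-stage bounds to get $m\le \sum_{h=1}^H d\log_2(1+K/\lambda)=dH\log_2(1+K/\lambda)$, which gives the claimed $O\big(dH\log(1+K/\lambda)\big)$ bound (the base of the logarithm only affects the absolute constant). The main obstacle I anticipate is precisely the bookkeeping in the previous paragraph: because $k_{\text{last}}$ is a single global counter shared across all $H$ stages, the doubling test at a trigger compares $\bSigma_{k_j,h}$ against the \emph{last global} update $\bSigma_{k_{j-1},h}$ rather than against the last update that stage $h$ itself triggered. Monotonicity of $\det(\bSigma_{\cdot,h})$ is exactly what bridges this gap, ensuring that consecutive same-stage triggers still accrue a genuine factor-two increase; without this observation one might worry that a single stage triggers far more often than $\log$ of its determinant ratio.
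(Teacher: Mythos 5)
Your proof is correct and rests on exactly the same ingredients as the paper's: the doubling test at the triggering stage, monotonicity of $\det(\bSigma_{k,h})$ in $k$ for the non-triggering stages, and the bounds $\lambda^{d}\le\det(\bSigma_{k,h})\le(\lambda+K)^{d}$. The only difference is bookkeeping --- the paper tracks the single potential $\prod_{h=1}^{H}\det(\bSigma_{k,h})$, which at least doubles at every update, whereas you count triggers per stage via the chained doubling and then sum over $h$; both routes give $m\le dH\log_2(1+K/\lambda)$.
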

\begin{proof}
   We denote $k_0=0$ and suppose that $\{k_1,..,k_m\}$ be the episodes where the algorithm updates the value function. Then according to the determinant-based criterion (Line \ref{algorithm:det}), for each episodes $k_i$, there exists a stage $h\in[H]$ such that 
   \begin{align}
       \det (\bSigma_{k_i,h})\ge 2 \det (\bSigma_{k_{i-1},h}).\notag
   \end{align}
   According to the update rule of $\bSigma_{k,h}$ (Line \ref{algorithm:line2}), for other stage $h'\ne h$, 
   we have $\bSigma_{k_i,h'}\succeq\bSigma_{k_i,h'}$, which implies $\det(\bSigma_{k_i,h'})\ge \det(\bSigma_{k_i,h'})$. Thus, we have
 \begin{align}
     \prod_{h=1}^H \det (\bSigma_{k_i,h})\ge 2  \prod_{h=1}^H\det (\bSigma_{k_{i-1},h}).\label{eq:01}
 \end{align}
Applying the result \eqref{eq:01} overall episodes in $\{k_1,..,k_m\}$, we have
\begin{align}
    \prod_{h=1}^H \det (\bSigma_{k_m,h})\ge 2^m  \prod_{h=1}^H\det (\bSigma_{k_0,h})=2^m \prod_{h=1}^H\det(\lambda \Ib)=2^m\lambda^{dH}.\label{eq:02}
\end{align}
On the other hand, the determinant $\det (\bSigma_{k_m,h})$ is upper bounded by:
\begin{align}
    \prod_{h=1}^H \det (\bSigma_{k_m,h})\leq 
    \prod_{h=1}^H \det (\bSigma_{K,h})\leq (\lambda + K)^{dH}
    ,\label{eq:03}
\end{align}
where the first inequality holds due to $\bSigma_{K,h}\succeq \bSigma_{k_m,h}$, the last inequality holds due to $ \bar{\sigma}_{k,h}^{-1}\leq 1$ and $\|\bphi(s,a)\|_2\leq 1$.
Combining the results in \eqref{eq:02} and \eqref{eq:03}, we have
\begin{align*}
    m\leq dH\log(1+K/\lambda).
\end{align*}
Thus, we finish the proof of Lemma \ref{lemma:update-number}.
\end{proof}
\subsection{Norm of the Weight Vectors
}
In this section, we provide the following upper bounds for the norm of the weight vectors. 
\begin{lemma}\label{lemma:vector-range}
For all stage $h\in[H]$ and all episode $k\in \NN$, the norm of the weight vector $\hat{\wb}_{k,h}$ can be upper bounded by
\begin{align}
    \|\hat{\wb}_{k,h}\|_2\leq H\sqrt{dK/\lambda}.\notag
\end{align}
\end{lemma}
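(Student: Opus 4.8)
The plan is to bound $\|\hat{\wb}_{k,h}\|_2$ directly from its closed form in Line \ref{algorithm:line0}, namely
\[
\hat{\wb}_{k,h}=\bSigma_{k,h}^{-1}\sum_{i=1}^{k-1}\bar\sigma_{i,h}^{-2}\bphi(s_h^i,a_h^i)\vvalue_{k,h+1}(s_{h+1}^{i}),
\]
using only elementary properties of the regularized covariance matrix $\bSigma_{k,h}=\lambda\Ib+\sum_{i=1}^{k-1}\bar\sigma_{i,h}^{-2}\bphi(s_h^i,a_h^i)\bphi(s_h^i,a_h^i)^\top$. Writing $\bphi_i$ for $\bphi(s_h^i,a_h^i)$ throughout, I would first apply the triangle inequality and use $0\le \vvalue_{k,h+1}(s)\le H$ together with $\bar\sigma_{i,h}\ge 1$ (which holds since $\bar\sigma_{i,h}\ge H\ge 1$ by Line \ref{algorithm:def-variance}) to obtain
\[
\|\hat{\wb}_{k,h}\|_2\le H\sum_{i=1}^{k-1}\bar\sigma_{i,h}^{-2}\big\|\bSigma_{k,h}^{-1}\bphi_i\big\|_2 .
\]

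Next I would split each weight as $\bar\sigma_{i,h}^{-2}=\bar\sigma_{i,h}^{-1}\cdot\bar\sigma_{i,h}^{-1}$ and apply the Cauchy--Schwarz inequality over the index $i$, which gives
\[
\|\hat{\wb}_{k,h}\|_2\le H\sqrt{\sum_{i=1}^{k-1}\bar\sigma_{i,h}^{-2}}\cdot\sqrt{\sum_{i=1}^{k-1}\bar\sigma_{i,h}^{-2}\big\|\bSigma_{k,h}^{-1}\bphi_i\big\|_2^2}.
\]
The first factor is at most $\sqrt{K}$ because $\bar\sigma_{i,h}^{-2}\le 1$ and $i\le K$. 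For the second factor I would use $\bSigma_{k,h}^{-1}\preceq \lambda^{-1}\Ib$ to bound $\|\bSigma_{k,h}^{-1}\bphi_i\|_2^2=\bphi_i^\top\bSigma_{k,h}^{-2}\bphi_i\le \lambda^{-1}\bphi_i^\top\bSigma_{k,h}^{-1}\bphi_i$, and then recognize the resulting weighted sum as a trace:
\[
\sum_{i=1}^{k-1}\bar\sigma_{i,h}^{-2}\bphi_i^\top\bSigma_{k,h}^{-1}\bphi_i=\mathrm{tr}\Big(\bSigma_{k,h}^{-1}\big(\bSigma_{k,h}-\lambda\Ib\big)\Big)=\mathrm{tr}\big(\Ib-\lambda\bSigma_{k,h}^{-1}\big)\le d,
\]
where I used $\sum_{i=1}^{k-1}\bar\sigma_{i,h}^{-2}\bphi_i\bphi_i^\top=\bSigma_{k,h}-\lambda\Ib$. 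Hence the second factor is at most $\sqrt{d/\lambda}$, and multiplying the three pieces yields $\|\hat{\wb}_{k,h}\|_2\le H\sqrt{dK/\lambda}$, as claimed.

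I do not expect any serious obstacle, as this is a routine norm bound; the only point requiring care is the bookkeeping of the variance weights $\bar\sigma_{i,h}^{-2}$. The trick is to split them symmetrically in the Cauchy--Schwarz step so that exactly one copy of $\bar\sigma_{i,h}^{-2}$ combines with $\bphi_i\bphi_i^\top$ to reconstruct $\bSigma_{k,h}-\lambda\Ib$ (yielding the elliptical-potential/trace bound of $d$), while the remaining copies are harmlessly controlled by $1$ via $\bar\sigma_{i,h}\ge 1$. I note that using the sharper bound $\bar\sigma_{i,h}\ge H$ in the first factor would give $\sqrt{K}/H$ and hence even save a factor of $H$, but the stated bound $H\sqrt{dK/\lambda}$ is all that is needed downstream (e.g.\ as the radius $L$ in the covering-number arguments).
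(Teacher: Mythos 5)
Your proposal is correct and follows essentially the same route as the paper's proof: bound $\vvalue_{k,h+1}$ by $H$, use Cauchy--Schwarz over the index $i$ to extract a $\sqrt{K}$ factor (using $\bar\sigma_{i,h}\ge 1$), and control the remaining weighted sum via $\bSigma_{k,h}^{-1}\preceq\lambda^{-1}\Ib$ and the trace identity $\mathrm{tr}\big(\bSigma_{k,h}^{-1}(\bSigma_{k,h}-\lambda\Ib)\big)\le d$. The only cosmetic difference is that the paper applies Cauchy--Schwarz in the vector form $\|\sum_i \vb_i\|_2^2\le k\sum_i\|\vb_i\|_2^2$ rather than splitting the weights symmetrically, but both arrangements yield the identical bound $H\sqrt{dK/\lambda}$.
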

\begin{proof}[Proof of Lemma \ref{lemma:vector-range}]
 According to the definition of weight vector $\hat{\wb}_{k,h}$ in Algorithm \ref{algorithm1}, we have \begin{align}
  \bSigma_{k,h}&=\lambda \Ib+\sum_{i=1}^{k-1}\bar{\sigma}_{k,i}^{-2} \bphi(s_h^{i},a_h^{i})\bphi(s_h^{i},a_h^{i})^\top,\notag\\
 \hat{\wb}_{k,h}&=\bSigma_{k,h}^{-1}  \sum_{i=1}^{k-1}  \bar{\sigma}_{k,i}^{-2}\bphi(s_h^{i},a_h^{i})\vvalue_{k,h+1}(s_{h+1}^{i}) .\notag
\end{align}
Then for the norm $\|\hat{\wb}_{k,h}\|_2$, we have the following inequality
\begin{align}
    \|\hat{\wb}_{k,h}\|^2_2&=\bigg\|\notag \bSigma_{k,h}^{-1}\sum_{i=1}^{k-1}  \bar{\sigma}_{k,i}^{-2}\bphi(s_h^{i},a_h^{i})\vvalue_{k,h+1}(s_{h+1}^{i})\bigg\|^2_2\\
    &\leq k\sum_{i=1}^{k-1} \big\|\notag \bSigma_{k,h}^{-1} \bar{\sigma}_{k,i}^{-2}\bphi(s_h^{i},a_h^{i})\vvalue_{k,h+1}(s_{h+1}^{i})\big\|^2_2\notag\\
    &\leq  kH^2\sum_{i=1}^{k-1}\bar{\sigma}_{k,i}^{-2} \big\|\notag \bSigma_{k,h}^{-1} \bphi(s_h^{i},a_h^{i})\big\|^2_2\notag\\
    &\leq \frac{kH^2}{\lambda}\sum_{i=1}^{k-1}\bar{\sigma}_{k,i}^{-2}\bphi(s_h^{i},a_h^{i})^{\top}\bSigma_{k,h}^{-1} \bphi(s_h^{i},a_h^{i})\notag\\
    &= \frac{kH^2}{\lambda}\text{tr}\bigg(\bSigma_{k,h}^{-1}\sum_{i=1}^{k-1}\bar{\sigma}_{k,i}^{-2}\bphi(s_h^{i},a_h^{i})^{\top} \bphi(s_h^{i},a_h^{i})\bigg),\label{eq:616}
\end{align}
where the first inequality holds due to Cauchy-Schwartz inequality, the second inequality holds due to $ 0\leq V_{k,h+1}(s,a)\leq H$ and the last inequality holds due to $\bSigma_{k,h}\succeq \lambda \Ib$. Now, we assume the eigen-decomposition of matrix $\sum_{i=1}^{k-1}\bar{\sigma}_{k,i}^{-2}\bphi(s_h^{i},a_h^{i})^{\top} \bphi(s_h^{i},a_h^{i})$ is $Q^{\top}\Lambda Q$ and we have
\begin{align}
\text{tr}\bigg(\bSigma_{k,h}^{-1}\sum_{i=1}^{k-1}\bar{\sigma}_{k,i}^{-2}\bphi(s_h^{i},a_h^{i})^{\top} \bphi(s_h^{i},a_h^{i})\bigg)&=\text{tr}\big((Q^{\top}\Lambda Q+\lambda \Ib)^{-1}Q^{\top}\Lambda Q\big)\notag\\
    &=\text{tr}\big((\Lambda+\lambda \Ib)^{-1}\Lambda\big)\notag\\
    &=\sum_{i=1}^d\frac{\Lambda_i}{\Lambda_i+\lambda}\notag\\
    &\leq d.\label{eq:17}
\end{align}
Substituting \eqref{eq:17} into \eqref{eq:616}, we have
\begin{align}
    \|\hat{\wb}_{k,h}\|^2_2&\leq  \frac{kH^2}{\lambda}\text{tr}\bigg(\bSigma_{k,h}^{-1}\sum_{i=1}^{k-1}\bar{\sigma}_{k,i}^{-2}\bphi(s_h^{i},a_h^{i})^{\top} \bphi(s_h^{i},a_h^{i})\bigg)\notag\\
    &\leq \frac{kH^2d}{\lambda},
\end{align}
where the first inequality holds due to \eqref{eq:616} and the last inequality holds due to \eqref{eq:17}.
Thus, we finish the proof of Lemma \ref{lemma:vector-range}
\end{proof}
In addition, for the pessimistic weight vector $\check{\wb}_{k,h}$ and weight vector $\tilde {\wb}_{k,h}$, we have the following lemmas:
\begin{lemma}\label{lemma:vector-range1}
For all stage $h\in[H]$ and all episode $k\in \NN$, the norm of the weight vector $\check{\wb}_{k,h}$ can be upper bounded by
\begin{align}
    \|\check{\wb}_{k,h}\|_2\leq H\sqrt{dK/\lambda}.\notag
\end{align}
\end{lemma}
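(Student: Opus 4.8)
The plan is to mirror the proof of Lemma~\ref{lemma:vector-range} essentially verbatim, since $\check{\wb}_{k,h}$ has exactly the same structure as $\hat{\wb}_{k,h}$ with the optimistic value function $\vvalue_{k,h+1}$ replaced by the pessimistic one $\check{\vvalue}_{k,h+1}$. Recall from Algorithm~\ref{algorithm1} that
\begin{align}
    \check{\wb}_{k,h}=\bSigma_{k,h}^{-1}\sum_{i=1}^{k-1}\bar{\sigma}_{i,h}^{-2}\bphi(s_h^i,a_h^i)\check{\vvalue}_{k,h+1}(s_{h+1}^i),\notag
\end{align}
with $\bSigma_{k,h}=\lambda\Ib+\sum_{i=1}^{k-1}\bar{\sigma}_{i,h}^{-2}\bphi(s_h^i,a_h^i)\bphi(s_h^i,a_h^i)^\top$. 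The only property of the value function used in the proof of Lemma~\ref{lemma:vector-range} is the uniform bound $0\leq \vvalue_{k,h+1}(\cdot)\leq H$. Therefore I first need to verify the analogous range bound $0\leq \check{\vvalue}_{k,h+1}(\cdot)\leq H$ for the pessimistic value function.

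This boundedness is immediate from the construction. The truncation with $0$ in the update of $\check{\qvalue}_{k,h}$ in Algorithm~\ref{algorithm1} (together with the base case $\check{Q}_{0,h}\leftarrow 0$) guarantees $\check{\qvalue}_{k,h}(s,a)\geq 0$, and hence $\check{\vvalue}_{k,h}(s)=\max_a\check{\qvalue}_{k,h}(s,a)\geq 0$. For the upper bound, Lemma~\ref{lemma:optimistic} gives $\check{\qvalue}_{k,h}(s,a)\leq \qvalue_h^*(s,a)\leq H$, so $\check{\vvalue}_{k,h}(s)\leq H$ as well. Thus $\check{\vvalue}_{k,h+1}$ takes values in $[0,H]$ exactly like $\vvalue_{k,h+1}$.

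With this in hand, I would carry out the identical chain of inequalities leading to~\eqref{eq:616}: apply the Cauchy--Schwarz inequality to pull the sum out of the squared norm, bound each $\check{\vvalue}_{k,h+1}(s_{h+1}^i)$ by $H$, use $\bSigma_{k,h}\succeq\lambda\Ib$ to replace a factor of $\bSigma_{k,h}^{-1}$ by $1/\lambda$, and rewrite the resulting quadratic form as a trace. The eigenvalue computation in~\eqref{eq:17} then bounds that trace by $d$, yielding $\|\check{\wb}_{k,h}\|_2^2\leq kH^2d/\lambda$ and hence $\|\check{\wb}_{k,h}\|_2\leq H\sqrt{dK/\lambda}$ after using $k\leq K$. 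Since every step in that derivation depends only on the range of the value function, the feature-norm bound $\|\bphi\|_2\leq 1$, and the regularization $\bSigma_{k,h}\succeq\lambda\Ib$, none of which changes here, there is essentially no new obstacle; the only substantive point is the verification of the $[0,H]$ range for $\check{\vvalue}$, which is handled above.
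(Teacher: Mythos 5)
Your proof is correct and takes essentially the same route as the paper, whose entire argument for this lemma is the observation that the proof of Lemma~\ref{lemma:vector-range} carries over verbatim once $\vvalue_{k,h+1}$ is replaced by $\check{\vvalue}_{k,h+1}$. Your explicit verification that $0\leq\check{\vvalue}_{k,h+1}\leq H$ (lower bound from the truncation at $0$ in the update rule, upper bound via Lemma~\ref{lemma:optimistic}) is a detail the paper leaves implicit, and it is indeed the only property of the value function that the chain of inequalities in \eqref{eq:616}--\eqref{eq:17} actually uses.
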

\begin{proof}[Proof of Lemma \ref{lemma:vector-range1}]
    The proof is almost the same as Lemma \ref{lemma:vector-range} and we only need to replace the optimistic value function $V_{k,h}(s,a)$ by the pessimistic value function $\check{V}_{k,h}(s,a)$.
\end{proof}

\begin{lemma}\label{lemma:vector-range2}
For each stage $h\in[H]$ and each episode $k\in \NN$, the norm of the weight vector $\tilde{\wb}_{k,h}$ can be upper bounded by
\begin{align}
    \|\check{\wb}_{k,h}\|_2\leq H^2\sqrt{dK/\lambda}.\notag
\end{align}
\end{lemma}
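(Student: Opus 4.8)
The plan is to mirror the proof of Lemma~\ref{lemma:vector-range} almost verbatim, the only change being that the regression targets are now the \emph{squared} value functions $\vvalue_{k,h+1}^2(s_{h+1}^i)$ rather than $\vvalue_{k,h+1}(s_{h+1}^i)$, so that the uniform bound $0\le \vvalue_{k,h+1}\le H$ is replaced by $0\le \vvalue_{k,h+1}^2\le H^2$. Concretely, I would first write down the closed-form solution of the weighted ridge regression defining $\tilde\wb_{k,h}$ (note the statement's ``$\check\wb_{k,h}$'' is a typo for $\tilde\wb_{k,h}$), namely $\tilde\wb_{k,h}=\bSigma_{k,h}^{-1}\sum_{i=1}^{k-1}\bar\sigma_{i,h}^{-2}\bphi(s_h^i,a_h^i)\vvalue_{k,h+1}^2(s_{h+1}^i)$, with $\bSigma_{k,h}=\lambda\Ib+\sum_{i=1}^{k-1}\bar\sigma_{i,h}^{-2}\bphi(s_h^i,a_h^i)\bphi(s_h^i,a_h^i)^\top$.

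Then I would bound $\|\tilde\wb_{k,h}\|_2^2$ by applying the Cauchy--Schwarz inequality to split the sum into $k$ times the sum of individual squared norms, use $0\le \vvalue_{k,h+1}^2\le H^2$ together with $\bar\sigma_{i,h}^{-2}\le 1$ (which holds because $\bar\sigma_{i,h}\ge H\ge 1$ by Line~\ref{algorithm:def-variance}) to pull out a factor $H^4$, and then invoke $\bSigma_{k,h}\succeq\lambda\Ib$ to replace $\|\bSigma_{k,h}^{-1}\bphi\|_2^2$ by $\lambda^{-1}\bphi^\top\bSigma_{k,h}^{-1}\bphi$. The resulting expression is $\tfrac{kH^4}{\lambda}$ times the trace $\text{tr}\big(\bSigma_{k,h}^{-1}\sum_{i=1}^{k-1}\bar\sigma_{i,h}^{-2}\bphi(s_h^i,a_h^i)\bphi(s_h^i,a_h^i)^\top\big)$, which is bounded by $d$ via the same eigen-decomposition argument as in Lemma~\ref{lemma:vector-range} (each eigenvalue ratio $\Lambda_i/(\Lambda_i+\lambda)\le 1$, so the trace is at most $d$). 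Combining these and using $k\le K$ gives $\|\tilde\wb_{k,h}\|_2^2\le kH^4 d/\lambda\le H^4 dK/\lambda$, and taking square roots yields the claimed bound $H^2\sqrt{dK/\lambda}$.

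There is no real obstacle here; the only place requiring care is tracking the power of $H$. Since the target is squared, the crude bound on $\vvalue_{k,h+1}^2$ is $H^2$, which enters as $H^4$ after squaring in the norm computation, giving the leading $H^2$ in the final bound rather than the $H$ appearing in Lemma~\ref{lemma:vector-range}. One also has to confirm that the step converting $\bar\sigma_{i,h}^{-4}$ into $\bar\sigma_{i,h}^{-2}$ is legitimate, which again follows from $\bar\sigma_{i,h}\ge 1$ so that $\bar\sigma_{i,h}^{-2}\le 1$; this is the same implicit fact used in Lemma~\ref{lemma:vector-range}, and the analogous statement for $\check\wb_{k,h}$ with the tighter $H$ factor is already covered by Lemma~\ref{lemma:vector-range1}.
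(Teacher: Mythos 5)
Your proposal is correct and follows exactly the paper's route: the paper's own proof of Lemma~\ref{lemma:vector-range2} simply says to repeat the argument of Lemma~\ref{lemma:vector-range} with $V_{k,h+1}$ replaced by $V_{k,h+1}^2$, which is precisely the substitution you carry out (including the $H^2$ bound on the squared target turning into the leading $H^2$ factor, and the $\bar\sigma_{i,h}^{-4}\le\bar\sigma_{i,h}^{-2}$ step). You also correctly identify that the $\check{\wb}_{k,h}$ in the displayed bound is a typo for $\tilde{\wb}_{k,h}$.
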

\begin{proof}[Proof of Lemma \ref{lemma:vector-range2}]
    The proof is almost the same as Lemma \ref{lemma:vector-range} and we only need to replace the optimistic value function $V_{k,h}(s,a)$ with the squared value function $V_{k,h}^2(s,a)$.
\end{proof}

\subsection{Function Class and Covering Number}
Combining the update rule (Line \ref{algorithm:det}) with Lemma \ref{lemma:update-number} and Lemma \ref{lemma:vector-range}, for each episodes $k\in[K]$ and $h\in[H]$, the optimistic value function $V_{k,h}=\min_{i\leq k}\max_{a}Q_{i,h}(s,a)$ belong to the following function class 
\begin{align}
    \mathcal{V}_h=\Bigg\{V\bigg|V(\cdot)&=\max_{a}\min_{1\leq i\leq l}\min\bigg(H,r_h(\cdot,a)+\wb_{i}^\top\bphi(\cdot,a)+\beta\sqrt{\bphi(\cdot,a)^\top \bSigma_{i}^{-1}\bphi(\cdot,a)}\bigg),\|\wb_i\|\leq L,\bSigma_i \succeq \lambda \Ib\Bigg\},\label{eq:optimistic-set}
\end{align}
where $l= dH \log(1+K/\lambda)$ and $L=H\sqrt{dK/\lambda}$.
Similarly, for each episodes $k\in[K]$ and $h\in[H]$, the pessimistic value function $\check{V}_{k,h}=\max_{i\leq k}\max_{a}\check{Q}_{i,h}(s,a)$ belongs to the following function class 
\begin{align}
    \check{\mathcal{V}}_h=\Bigg\{V\bigg|V(\cdot)&=\max_{a}\max_{1\leq i\leq l}\max\bigg(0,r_h(\cdot,a)+\wb_{i}^\top\bphi(\cdot,a)-\beta\sqrt{\bphi(\cdot,a)^\top \bSigma_{i}^{-1}\bphi(\cdot,a)}\bigg),\|\wb_i\|\leq L,\bSigma_i \succeq \lambda \Ib\Bigg\},\label{eq:pessimistic-set}
\end{align}
where $l= dH \log(1+K/\lambda)$ and $L=H\sqrt{dK/\lambda}$.
To compute the covering number of function classes $\mathcal{V}_h$, $\mathcal{V}_h^2$ and $\check{\mathcal{V}}_h$, we need the following result on the Euclidean ball. 
\begin{lemma}[Lemma D.5, \citealt{jin2020provably}]\label{lemma:ball-cover}
For a Euclidean ball with radius $R$ in $\RR^d$, the  $\epsilon$-covering number of this ball is upper bounded by $(1+2R/\epsilon)^d$.
\end{lemma}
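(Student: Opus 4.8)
The plan is to prove this by the standard volumetric packing argument. The key observation is that a \emph{maximal} $\epsilon$-separated subset of the ball automatically serves as an $\epsilon$-cover, so it suffices to upper bound the cardinality of such a packing by comparing volumes in $\RR^d$.

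First I would let $B = B(\mathbf{0}, R)$ denote the closed Euclidean ball of radius $R$ centered at the origin, and let $\mathcal{C} = \{\mathbf{x}_1, \ldots, \mathbf{x}_N\} \subseteq B$ be a maximal subset whose points are pairwise at distance strictly greater than $\epsilon$ (a maximal $\epsilon$-packing). Such a finite set exists because $B$ is compact. Maximality forces $\mathcal{C}$ to be an $\epsilon$-net: if some $\mathbf{y} \in B$ were at distance greater than $\epsilon$ from every $\mathbf{x}_i$, then $\mathbf{y}$ could be adjoined to $\mathcal{C}$ while preserving the separation property, contradicting maximality. Hence every point of $B$ lies within $\epsilon$ of some $\mathbf{x}_i$, and the $\epsilon$-covering number is at most $N$.

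Next I would bound $N$ by volume comparison. Since the centers are more than $\epsilon$ apart, the balls $B(\mathbf{x}_i, \epsilon/2)$ are pairwise disjoint, and each is contained in the enlarged ball $B(\mathbf{0}, R + \epsilon/2)$. Summing volumes and using that the volume of a Euclidean ball of radius $\rho$ in $\RR^d$ scales as $\rho^d$ times a dimension-dependent constant that cancels in the ratio, I obtain
\begin{align*}
N \left(\frac{\epsilon}{2}\right)^d \leq \left(R + \frac{\epsilon}{2}\right)^d,
\end{align*}
which rearranges to $N \leq \left(1 + 2R/\epsilon\right)^d$, establishing the claimed bound.

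This argument has no genuine obstacle, as the result is classical. The only points requiring a little care are the maximality-implies-covering step and the observation that the unknown constant $\mathrm{Vol}(B(\mathbf{0},1))$ cancels in the volume ratio, so that only the radius ratio raised to the power $d$ survives.
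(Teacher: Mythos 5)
Your proof is correct and complete. The paper itself does not prove this lemma---it is imported verbatim from \citet{jin2020provably} (their Lemma D.5), where it is likewise treated as a classical fact---so there is no in-paper argument to compare against; your volumetric packing argument (maximal $\epsilon$-separated set is an $\epsilon$-net, disjoint $\epsilon/2$-balls fit inside the ball of radius $R+\epsilon/2$, and the unit-ball volume constant cancels in the ratio) is exactly the standard derivation of this bound and handles the two delicate points (maximality implies covering, and finiteness of the packing) correctly.
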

With the help of Lemma \ref{lemma:ball-cover}, the covering number $\mathcal{N}_\epsilon$ of optimistic function class $\mathcal{V}$ can be upper bounded by the following lemma:
\begin{lemma}\label{lemma:covering-number}
For optimistic function class $\mathcal{V}_h$, we define the distance between two function $V_1$ and $V_2$ as $V_1,V_2\in \mathcal{V}_h$ as $dist(V_1,V_2)=\max_{s}|V_1(s)-V_2(s)|$. With respect to this distance function, the $\epsilon$-covering number $\mathcal{N}_\epsilon$ of the function class $\mathcal{V}_h$ can be upper bounded by
\begin{align}
   \log \mathcal{N}_\epsilon \leq dl \log (1+4L/\epsilon) + d^2l \log (1+8\sqrt{d}\beta^2/\epsilon^2)\notag
\end{align}
\end{lemma}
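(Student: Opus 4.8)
The plan is to follow the covering-number argument of Lemma~D.6 in \citet{jin2020provably}, adapted to the ``rare-switching'' class $\mathcal{V}_h$ in \eqref{eq:optimistic-set}, which is a $\max_a$ of a minimum over $l$ truncated linear-plus-bonus functions. First I would reparametrize each $V\in\mathcal{V}_h$ by the tuple $\{(\wb_i,\Ab_i)\}_{i=1}^l$, where $\Ab_i=\beta^2\bSigma_i^{-1}$, so that the bonus becomes $\beta\sqrt{\bphi^\top\bSigma_i^{-1}\bphi}=\sqrt{\bphi^\top\Ab_i\bphi}$ and each summand $f_i(s,a):=r_h(s,a)+\wb_i^\top\bphi(s,a)+\sqrt{\bphi(s,a)^\top\Ab_i\bphi(s,a)}$ depends only on $(\wb_i,\Ab_i)$. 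Since $\bSigma_i\succeq\lambda\Ib$ forces $\|\bSigma_i^{-1}\|_2\le\lambda^{-1}$, the matrices satisfy $\|\Ab_i\|_F\le\sqrt{d}\beta^2/\lambda$, so $\{\Ab_i\}$ lives in a Frobenius ball of radius $O(\sqrt{d}\beta^2/\lambda)$ in $\RR^{d^2}$, while each $\wb_i$ lives in the Euclidean ball of radius $L$ in $\RR^d$.

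Next I would establish a non-expansiveness bound reducing $\text{dist}(V_1,V_2)$ to a per-coordinate parameter distance. Because $\max_a$, $\min_{1\le i\le l}$, and the truncation $\min(H,\cdot)$ are each $1$-Lipschitz when the discrepancy is measured in the supremum over their arguments, for two functions parametrized by $\{(\wb_i,\Ab_i)\}$ and $\{(\wb_i',\Ab_i')\}$ I obtain
\begin{align}
    \text{dist}(V_1,V_2)\le\max_{1\le i\le l}\sup_{s,a}\big|f_i(s,a)-f_i'(s,a)\big|.\notag
\end{align}
I then bound the right-hand side using $\|\bphi(s,a)\|_2\le1$: the linear part gives $|(\wb_i-\wb_i')^\top\bphi|\le\|\wb_i-\wb_i'\|_2$, and for the bonus I invoke the elementary inequalities $|\sqrt{x}-\sqrt{y}|\le\sqrt{|x-y|}$ together with $|\bphi^\top(\Ab_i-\Ab_i')\bphi|\le\|\Ab_i-\Ab_i'\|_F$, yielding $\big|\sqrt{\bphi^\top\Ab_i\bphi}-\sqrt{\bphi^\top\Ab_i'\bphi}\big|\le\sqrt{\|\Ab_i-\Ab_i'\|_F}$. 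Combining these produces
\begin{align}
    \text{dist}(V_1,V_2)\le\max_{1\le i\le l}\Big(\|\wb_i-\wb_i'\|_2+\sqrt{\|\Ab_i-\Ab_i'\|_F}\Big).\notag
\end{align}

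Finally I would build a product net: take an $(\epsilon/2)$-net $\mathcal{C}_w$ of the $\wb$-ball and an $(\epsilon^2/4)$-net $\mathcal{C}_A$ of the $\Ab$-ball in Frobenius norm, so that choosing, for each $i$, the nearest net points makes both terms above at most $\epsilon/2$ and hence $\text{dist}\le\epsilon$. By Lemma~\ref{lemma:ball-cover}, $|\mathcal{C}_w|\le(1+4L/\epsilon)^d$ and $|\mathcal{C}_A|\le(1+8\sqrt{d}\beta^2/\epsilon^2)^{d^2}$ (absorbing the $\lambda^{-1}$ from the radius into the $\log$, which is harmless for the downstream $\tilde O$ bounds). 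Since the parameters across the $l$ coordinates are chosen independently, $\mathcal{N}_\epsilon\le(|\mathcal{C}_w|\,|\mathcal{C}_A|)^l$, and taking logarithms gives the claimed bound. The main obstacle is the non-expansiveness step: one must check carefully that the nested $\max_a\min_i\min(H,\cdot)$ structure does not amplify the per-parameter error and that the square-root bonus is genuinely controlled by the Frobenius distance of the reparametrized matrices; the remaining packing estimate is a routine application of Lemma~\ref{lemma:ball-cover}.
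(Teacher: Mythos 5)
Your proposal is correct and follows essentially the same route as the paper's proof: the same contraction reduction through $\max_a\min_i\min(H,\cdot)$, the same $|\sqrt{x}-\sqrt{y}|\le\sqrt{|x-y|}$ bound on the bonus via the Frobenius norm, and the same product of an $\RR^d$-ball net and an $\RR^{d^2}$-ball net raised to the power $l$ (your reparametrization $\Ab_i=\beta^2\bSigma_i^{-1}$ is just the paper's choice of covering $\bSigma_i^{-1}$ at resolution $\epsilon^2/(4\beta^2)$ written differently). If anything, you are slightly more careful than the paper in flagging that $\bSigma_i\succeq\lambda\Ib$ only gives a Frobenius radius of $\sqrt{d}/\lambda$ rather than $\sqrt{d}$, a constant the paper silently drops.
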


\begin{proof}[Proof of Lemma \ref{lemma:covering-number}]
For any two function $\vvalue_1, \vvalue_2 \in \mathcal{V}_h$, according to the definition of function class $\mathcal{V}_h$, we have
\begin{align}
  \vvalue_1(\cdot)= \max_{a}\min_{1\leq i\leq l} \min\bigg(H,r_h(\cdot,a)+\wb_{1,i}^\top\bphi(\cdot,a)+\beta\sqrt{\bphi(\cdot,a)^\top \bGamma_{1,i}\bphi(\cdot,a)}\bigg) ,\notag\\
    \vvalue_2(\cdot)=\max_{a}\min_{1\leq i\leq l} \min\bigg(H,r_h(\cdot,a)+\wb_{2,i}^\top\bphi(\cdot,a)+\beta\sqrt{\bphi(\cdot,a)^\top \bGamma_{2,i}\bphi(\cdot,a)}\bigg),\notag 
\end{align}
where $\|\wb_{1,i}\|_2,\|\wb_{2,i}\|_2\leq L$ and $\bGamma_{1,i}, \bGamma_{2,i} \preceq  \Ib$. Since all of the functions $\min_{1\leq i\leq l}$, $\max_a$  and $\min(H,\cdot)$ are contraction functions, we have
\begin{align}
    \text{dist}(\vvalue_1,\vvalue_2)&=\max_{s\in \cS} \big|\vvalue_1(s)-\vvalue_2(s)\big|\notag\\
    &\leq \max_{1\leq i\leq l, s\in \cS, a\in \cA}  
    \Big|  \wb_{1,i}^\top\bphi(s,a)+\beta\sqrt{\bphi(s,a)^\top \bGamma_{1,i}\bphi(s,a)}\notag\\
    &\qquad - \wb_{2,i}^\top\bphi(s,a)-\beta\sqrt{\bphi(s,a)^\top \bGamma_{2,i}\bphi(s,a)}\Big|\notag\\
    &\leq \beta \max_{1\leq i\leq l, s\in \cS, a\in \cA} \Big|\sqrt{\bphi(s,a)^\top \bGamma_{1,i}\bphi(s,a)}- \sqrt{\bphi(s,a)^\top \bGamma_{2,i}\bphi(s,a)} \Big|\notag\\
    &\qquad + \max_{1\leq i\leq l, s\in \cS, a\in \cA} \big |(\wb_{1,i}-\wb_{2,i})^{\top}\bphi(s,a)\big|\notag\\
    &\leq \beta \max_{1\leq i\leq l, s\in \cS, a\in \cA} \Big|\sqrt{\bphi(s,a)^\top (\bGamma_{1,i}-\bGamma_{2,i})\bphi(s,a)}\Big|\notag\\
    &\qquad +\max_{1\leq i\leq l, s\in \cS, a\in \cA} \big |(\wb_{1,i}-\wb_{2,i})^{\top}\bphi(s,a)\big|\notag \notag\\
    &\leq \beta \max_{1\leq i\leq l}\sqrt{\|\bGamma_{1,i}-\bGamma_{2,i}\|_{F}}+ \max_{1\leq i\leq l}\|\wb_{1,i}-\wb_{2,i}\|_2,\label{eq:27}
\end{align}
where the first inequality holds due to the contraction property, the second inequality holds due to the fact that $\max_{x}|f(x)+g(x)|\leq \max_{x}|f(x)|+\max_{x}|g(x)|$, the third inequality holds due to $|\sqrt{x}-\sqrt{y}|\ge |\sqrt{x}-\sqrt{y}|$ and the last inequality holds due to the fact that $\|\bphi(s,a)\|_2\leq 1$.
Now, we denote $\mathcal{C}_{\wb}$ as a $\epsilon/2$-cover of the set $\big\{\wb \in \RR^d\big| \|\wb\|_2\leq L\big\}$ and $\mathcal{C}_{\bGamma}$ as a $\epsilon^2/(4\beta^2)$-cover of the set $\{\bGamma\in \RR^{d\cdot d}\big|\|\bGamma\|_{F}\leq \sqrt{d} \}$ with respect to the Frobenius norms. Thus, according to Lemma \ref{lemma:ball-cover}, we have following property:
\begin{align}
    |\mathcal{C}_{\wb}|\leq \big(1+4L/\epsilon\big)^d, |\mathcal{C}_{\bGamma}|\leq \big(1+8\sqrt{d}\beta^2/\epsilon^2\big)^{d^2}.\label{eq:28}
\end{align}
By the definition of covering number, for any function $V_1\in \mathcal{V}$ with parameters $\wb_{1,i}, \bGamma_{1,i}(1\leq i\leq l)$, s other parameters $\wb_{2,i}, \bGamma_{2,i}(1\leq i\leq l)$ such that $\wb_{2,i} \in \mathcal{C}_{\wb}, \bGamma_{2,i} \in \mathcal{C}_{\bGamma}$ and $\|\wb_{2,i}-\wb_{1,i}\|_2\leq \epsilon/2, \|\bGamma_{2,i}-\bGamma_{1,i}\|_{F}\leq \epsilon^2/(4\beta^2)$. Thus, we have
\begin{align}
     \text{dist}(\vvalue_1,\vvalue_2)\leq \beta \max_{1\leq i\leq l}\sqrt{\|\bGamma_{1,i}-\bGamma_{2,i}\|_{F}}+ \max_{1\leq i\leq l}\|\wb_{1,i}-\wb_{2,i}\|_2 \leq \epsilon,\notag
\end{align}
where the inequality holds due to \eqref{eq:27}.
Therefore, the $\epsilon$-covering number of optimistic function class $\mathcal{V}_h$ is bounded by $\mathcal{N}_\epsilon\leq |\mathcal{C}_{\wb}|^l\cdot |\mathcal{C}_{\bGamma}|^l$ and it implies
\begin{align}
    \log \mathcal{N}_\epsilon &\leq dl \log (1+4L/\epsilon) + d^2l \log (1+8\sqrt{d}\beta^2/\epsilon^2),\notag
\end{align}
where the first inequality holds due to \eqref{eq:28}. Thus, we finish the proof of Lemma \ref{lemma:covering-number}.
\end{proof}

With a similar argument as Lemma \ref{lemma:covering-number}, the covering number $\mathcal{N}_\epsilon$ of pessimistic value function class $\check{\mathcal{V}}_h$ can be upper bounded by the following lemma:
\begin{lemma}\label{lemma:covering-number1}
For pessimistic function class $\check{\mathcal{V}}_h$, we define the distance between two function $V_1$ and $V_2$ as $V_1,V_2\in \check{\mathcal{V}}_h$ as $dist(V_1,V_2)=\max_{s}|V_1(s)-V_2(s)|$. With respect to this distance function, the $\epsilon$-covering number $\mathcal{N}_\epsilon$ of the function class $\check{\mathcal{V}}_h$ can be upper bounded by
\begin{align}
   \log \mathcal{N}_\epsilon \leq dl \log (1+4L/\epsilon) + d^2l \log (1+8\sqrt{d}\beta^2/\epsilon^2)\notag
\end{align}
\end{lemma}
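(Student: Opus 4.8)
The plan is to follow the proof of Lemma~\ref{lemma:covering-number} essentially verbatim, since the pessimistic class $\check{\mathcal{V}}_h$ differs from the optimistic class $\mathcal{V}_h$ only in three cosmetic ways: the inner minimization $\min_{1\le i\le l}$ over the $l$ quadratic pieces becomes a maximization $\max_{1\le i\le l}$, the outer truncation $\min(H,\cdot)$ becomes $\max(0,\cdot)$, and the bonus term carries a minus sign $-\beta\sqrt{\cdot}$ rather than a plus sign. None of these alters the covering argument.

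First I would fix two functions $V_1,V_2\in\check{\mathcal{V}}_h$ with parameters $\{\wb_{1,i},\bGamma_{1,i}\}_{i=1}^l$ and $\{\wb_{2,i},\bGamma_{2,i}\}_{i=1}^l$, where $\|\wb_{j,i}\|_2\le L$ and $\bGamma_{j,i}\preceq\Ib$. The crucial point is that $\max_a(\cdot)$, $\max_{1\le i\le l}(\cdot)$ and $\max(0,\cdot)$ are all $1$-Lipschitz (contraction) maps in the supremum norm, exactly like their $\min$ counterparts used in the optimistic case. Consequently the pointwise difference $|V_1(s)-V_2(s)|$ is dominated by the maximum over $i,s,a$ of the difference between the two linear-plus-bonus expressions; the sign flip on the bonus is immaterial because that difference is bounded via the triangle inequality anyway. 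This yields exactly the same estimate
\begin{align}
    \text{dist}(V_1,V_2)\le \beta\max_{1\le i\le l}\sqrt{\|\bGamma_{1,i}-\bGamma_{2,i}\|_F}+\max_{1\le i\le l}\|\wb_{1,i}-\wb_{2,i}\|_2,\notag
\end{align}
obtained in~\eqref{eq:27}.

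Given this reduction, I would invoke Lemma~\ref{lemma:ball-cover} to build an $(\epsilon/2)$-cover $\mathcal{C}_{\wb}$ of the Euclidean ball $\{\wb:\|\wb\|_2\le L\}$ and an $(\epsilon^2/(4\beta^2))$-cover $\mathcal{C}_{\bGamma}$ of the Frobenius ball $\{\bGamma:\|\bGamma\|_F\le\sqrt{d}\}$, giving $|\mathcal{C}_{\wb}|\le(1+4L/\epsilon)^d$ and $|\mathcal{C}_{\bGamma}|\le(1+8\sqrt{d}\beta^2/\epsilon^2)^{d^2}$. Choosing for each of the $l$ pieces the nearest centers in these covers makes the right-hand side of the displayed inequality at most $\epsilon$, so the product covers $\check{\mathcal{V}}_h$ and $\mathcal{N}_\epsilon\le|\mathcal{C}_{\wb}|^l\,|\mathcal{C}_{\bGamma}|^l$, which is the claimed bound after taking logarithms.

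There is no genuinely new obstacle here; the only thing worth verifying carefully is that the pessimistic operations $\max_{1\le i\le l}$ and $\max(0,\cdot)$ are indeed contractions, so that the first inequality in~\eqref{eq:27} carries over unchanged. Once that is checked, the remainder of the argument is identical to the optimistic case.
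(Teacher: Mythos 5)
Your proposal is correct and follows essentially the same route as the paper: the paper's own proof of this lemma is a near-verbatim repetition of the proof of Lemma \ref{lemma:covering-number}, relying on the contraction property of $\max_{1\leq i\leq l}$, $\max_a$, and $\max(0,\cdot)$ to reduce $\text{dist}(V_1,V_2)$ to the same bound in terms of $\|\wb_{1,i}-\wb_{2,i}\|_2$ and $\sqrt{\|\bGamma_{1,i}-\bGamma_{2,i}\|_F}$, followed by the identical application of Lemma \ref{lemma:ball-cover}. Your observation that the sign flip on the bonus term is immaterial to the triangle-inequality step is exactly right.
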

\begin{proof}[Proof of Lemma \ref{lemma:covering-number1}]
For any two function $\vvalue_1, \vvalue_2 \in \check{\mathcal{V}}_h$, according to the definition of function class $\check{\mathcal{V}}_h$, we have
\begin{align}
  \vvalue_1(\cdot)= \max_{a}\max_{1\leq i\leq l} \max\bigg(0,r_h(\cdot,a)+\wb_{1,i}^\top\bphi(\cdot,a)-\beta\sqrt{\bphi(\cdot,a)^\top \bGamma_{1,i}\bphi(\cdot,a)}\bigg) ,\notag\\
    \vvalue_2(\cdot)=\max_{a}\max_{1\leq i\leq l} \max\bigg(0,r_h(\cdot,a)+\wb_{2,i}^\top\bphi(\cdot,a)-\beta\sqrt{\bphi(\cdot,a)^\top \bGamma_{2,i}\bphi(\cdot,a)}\bigg),\notag 
\end{align}
where $\|\wb_{1,i}\|_2,\|\wb_{2,i}\|_2\leq L$ and $\bGamma_{1,i}, \bGamma_{2,i} \preceq  \Ib$. Since all of the functions $\max_{1\leq i\leq l}$, $\max_a$  and $\max(0,\cdot)$ are contraction functions, we have
\begin{align}
    \text{dist}(\vvalue_1,\vvalue_2)&=\max_{s\in \cS} \big|\vvalue_1(s)-\vvalue_2(s)\big|\notag\\
    &\leq \max_{1\leq i\leq l, s\in \cS, a\in \cA}  
    \Big|  \wb_{1,i}^\top\bphi(s,a)-\beta\sqrt{\bphi(s,a)^\top \bGamma_{1,i}\bphi(s,a)}\notag\\
    &\qquad - \wb_{2,i}^\top\bphi(s,a)+\beta\sqrt{\bphi(s,a)^\top \bGamma_{2,i}\bphi(s,a)}\Big|\notag\\
    &\leq \beta \max_{1\leq i\leq l, s\in \cS, a\in \cA} \Big|\sqrt{\bphi(s,a)^\top \bGamma_{1,i}\bphi(s,a)}- \sqrt{\bphi(s,a)^\top \bGamma_{2,i}\bphi(s,a)} \Big|\notag\\
    &\qquad + \max_{1\leq i\leq l, s\in \cS, a\in \cA} \big |(\wb_{1,i}-\wb_{2,i})^{\top}\bphi(s,a)\big|\notag\\
    &\leq \beta \max_{1\leq i\leq l, s\in \cS, a\in \cA} \Big|\sqrt{\bphi(s,a)^\top (\bGamma_{1,i}-\bGamma_{2,i})\bphi(s,a)}\Big|\notag\\
    &\qquad +\max_{1\leq i\leq l, s\in \cS, a\in \cA} \big |(\wb_{1,i}-\wb_{2,i})^{\top}\bphi(s,a)\big|\notag \notag\\
    &\leq \beta \max_{1\leq i\leq l}\sqrt{\|\bGamma_{1,i}-\bGamma_{2,i}\|_{F}}+ \max_{1\leq i\leq l}\|\wb_{1,i}-\wb_{2,i}\|_2,\label{eq:027}
\end{align}
where the first inequality holds due to the contraction property, the second inequality holds due to the fact that $\max_{x}|f(x)+g(x)|\leq \max_{x}|f(x)|+\max_{x}|g(x)|$, the third inequality holds due to $|\sqrt{x}-\sqrt{y}|\ge |\sqrt{x}-\sqrt{y}|$ and the last inequality holds due to the fact that $\|\bphi(s,a)\|_2\leq 1$.
Now, we denote $\mathcal{C}_{\wb}$ as a $\epsilon/2$-cover of the set $\big\{\wb \in \RR^d\big| \|\wb\|_2\leq L\big\}$ and $\mathcal{C}_{\bGamma}$ as a $\epsilon^2/(4\beta^2)$-cover of the set $\{\bGamma\in \RR^{d\cdot d}\big|\|\bGamma\|_{F}\leq \sqrt{d} \}$ with respect to the Frobenius norms. Thus, according to Lemma \ref{lemma:ball-cover}, we have following property:
\begin{align}
    |\mathcal{C}_{\wb}|\leq \big(1+4L/\epsilon\big)^d, |\mathcal{C}_{\bGamma}|\leq \big(1+8\sqrt{d}\beta^2/\epsilon^2\big)^{d^2}.\label{eq:028}
\end{align}
By the definition of covering number, for any function $V_1\in \check{\mathcal{V}}$ with parameters $\wb_{1,i}, \bGamma_{1,i}(1\leq i\leq l)$, s other parameters $\wb_{2,i}, \bGamma_{2,i}(1\leq i\leq l)$ such that $\wb_{2,i} \in \mathcal{C}_{\wb}, \bGamma_{2,i} \in \mathcal{C}_{\bGamma}$ and $\|\wb_{2,i}-\wb_{1,i}\|_2\leq \epsilon/2, \|\bGamma_{2,i}-\bGamma_{1,i}\|_{F}\leq \epsilon^2/(4\beta^2)$. Thus, we have
\begin{align}
     \text{dist}(\vvalue_1,\vvalue_2)\leq \beta \max_{1\leq i\leq l}\sqrt{\|\bGamma_{1,i}-\bGamma_{2,i}\|_{F}}+ \max_{1\leq i\leq l}\|\wb_{1,i}-\wb_{2,i}\|_2 \leq \epsilon,\notag
\end{align}
where the inequality holds due to \eqref{eq:027}.
Therefore, the $\epsilon$-covering number of function class $\check{\mathcal{V}}_h$ is bounded by $\mathcal{N}_\epsilon\leq |\mathcal{C}_{\wb}|^l\cdot |\mathcal{C}_{\bGamma}|^l$ and it implies
\begin{align}
    \log \mathcal{N}_\epsilon &\leq dl \log (1+4L/\epsilon) + d^2l \log (1+8\sqrt{d}\beta^2/\epsilon^2),\notag
\end{align}
where the first inequality holds due to \eqref{eq:028}. Thus, we finish the proof of Lemma \ref{lemma:covering-number1}.
\end{proof}
In addition, according to the result in Lemma \ref{lemma:covering-number}, the covering number $\mathcal{N}_\epsilon$ of squared function class $\mathcal{V}^2_h$ can be upper bounded by:
\begin{lemma}\label{lemma:covering-number2}
For squared function class $\mathcal{V}^2_h$, we define the distance between two function $V^2_1$ and $V^2_2$ as $V^2_1,V^2_2\in\mathcal{V}^2_h$ as $dist(V^2_1,V^2_2)=\max_{s}|V^2_1(s)-V^2_2(s)|$. With respect to this distance function, the $\epsilon$-covering number $\mathcal{N}_\epsilon$ of the function class $\mathcal{V}^2_h$ can be upper bounded by
\begin{align}
   \log \mathcal{N}_\epsilon \leq dl \log (1+8HL/\epsilon) + d^2l \log (1+32\sqrt{d}H^2\beta^2/\epsilon^2)\notag
\end{align}
\end{lemma}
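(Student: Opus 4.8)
The plan is to reduce the covering number of the squared class $\mathcal{V}_h^2$ directly to the covering number of $\mathcal{V}_h$ established in Lemma~\ref{lemma:covering-number}, exploiting the fact that squaring is Lipschitz on the bounded range of these value functions. First I would observe that every $V \in \mathcal{V}_h$ takes values in $[0,H]$: this follows from the outer truncation $\min(H,\cdot)$ and the fact that the reward plus bonus terms are nonnegative (so in fact $V(\cdot)\in[0,H]$ after the $\max_a$ and $\min_{1\le i\le l}$ operations). Consequently, for any two functions $V_1,V_2\in\mathcal{V}_h$ and any state $s$, I can factor the difference of squares as
\begin{align}
\big|V_1^2(s)-V_2^2(s)\big| = \big|V_1(s)+V_2(s)\big|\cdot\big|V_1(s)-V_2(s)\big| \le 2H\,\big|V_1(s)-V_2(s)\big|,\notag
\end{align}
which upon taking the maximum over $s$ gives $\mathrm{dist}(V_1^2,V_2^2)\le 2H\cdot\mathrm{dist}(V_1,V_2)$.

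The key consequence is that any $\epsilon/(2H)$-net of $\mathcal{V}_h$ (with respect to the distance $\mathrm{dist}(V_1,V_2)=\max_s|V_1(s)-V_2(s)|$) automatically induces an $\epsilon$-net of $\mathcal{V}_h^2$ of the same cardinality: if $\mathrm{dist}(V_1,V_2)\le\epsilon/(2H)$ then $\mathrm{dist}(V_1^2,V_2^2)\le\epsilon$. Therefore the $\epsilon$-covering number of $\mathcal{V}_h^2$ is at most the $\epsilon/(2H)$-covering number of $\mathcal{V}_h$. The second step is then purely mechanical: I would invoke Lemma~\ref{lemma:covering-number} with the radius $\epsilon$ there replaced by $\epsilon'=\epsilon/(2H)$, yielding
\begin{align}
\log\mathcal{N}_\epsilon \le dl\log\!\big(1+4L/\epsilon'\big) + d^2l\log\!\big(1+8\sqrt{d}\beta^2/\epsilon'^2\big).\notag
\end{align}
Substituting $\epsilon'=\epsilon/(2H)$ replaces $4L/\epsilon'$ with $8HL/\epsilon$ and $8\sqrt{d}\beta^2/\epsilon'^2$ with $32\sqrt{d}H^2\beta^2/\epsilon^2$, which reproduces the claimed bound exactly.

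This argument is essentially a one-line Lipschitz reduction, so there is no substantial obstacle; the only points requiring care are confirming the $[0,H]$ range so that the factor $2H$ (rather than a larger constant) is valid, and performing the substitution $\epsilon\mapsto\epsilon/(2H)$ cleanly so that both the $4L\to 8HL$ and the $8\sqrt{d}\beta^2\to 32\sqrt{d}H^2\beta^2$ rescalings come out with the stated constants. I would present the proof in exactly this order: establish boundedness, derive the $2H$-Lipschitz relation between the two distances, transfer the net, and then quote Lemma~\ref{lemma:covering-number} with the rescaled radius.
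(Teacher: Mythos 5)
Your proof is correct and is essentially identical to the paper's own argument: the paper likewise bounds $\mathrm{dist}(V_1^2,V_2^2)\le 2H\,\mathrm{dist}(V_1,V_2)$ using $0\le V_1,V_2\le H$, transfers an $\epsilon/(2H)$-net of $\mathcal{V}_h$ to an $\epsilon$-net of $\mathcal{V}_h^2$, and invokes Lemma~\ref{lemma:covering-number} at the rescaled radius. If anything, your write-up is cleaner, since you carry out the substitution $\epsilon\mapsto\epsilon/(2H)$ explicitly to obtain the stated constants $8HL/\epsilon$ and $32\sqrt{d}H^2\beta^2/\epsilon^2$, whereas the paper's displayed bound in its proof omits that final rescaling.
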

\begin{proof}[Proof of Lemma \ref{lemma:covering-number2}]
    For any function $V^2_1,V^2_2\in\mathcal{V}_h^2$, the distance can be upper bounded by:
    \begin{align}
    \text{dist}(\vvalue^2_1,\vvalue^2_2)&=\max_{s\in \cS} \big|\vvalue^2_1(s)-\vvalue^2_2(s)\big|\notag\\
    &=\max_{s\in \cS} \big|\vvalue_1(s)-\vvalue_2(s)\big|\cdot  \big|\vvalue_1(s)+\vvalue_2(s)\big|\notag\\
    &\leq 2H\max_{s\in \cS} \big|\vvalue_1(s)-\vvalue_2(s)\big|\notag\\
    &=2H \text{dist}(\vvalue_1,\vvalue_2)
    ,\label{eq:04}
    \end{align}
    where the inequality holds due to the fact that $0\leq \vvalue_1(s),\vvalue_2(s)\leq H$. Thus, any $(\epsilon/2H)$-net for optimistic function class $\mathcal{V}_h$ is also a $(\epsilon/2H)$-net for the squared function class $\mathcal{V}^2$. According to Lemma \ref{lemma:covering-number}, the covering number of the squared function class is upper bounded by:
    \begin{align}
   \log \mathcal{N}_\epsilon \leq dl \log (1+4L/\epsilon) + d^2l \log (1+8\sqrt{d}\beta^2/\epsilon^2)\notag.
\end{align}
Thus, we finish the proof of Lemma \ref{lemma:covering-number2}.
\end{proof}

    \section{Auxiliary Lemmas}
\begin{lemma}\label{lemma:qvalue-linear}
 For any stage $h\in[h]$ in a linear MDP and any bounded-function $\vvalue:\cS \rightarrow [0,B]$, there always exists a vector $\wb \in \RR^d$ such that for all state-action pair $(s,a)\in \cS\times \cA$, we have
\begin{align}
    [\PP_h V](s,a)=\wb^{\top}\bphi(s,a), \text{ where } \|\wb\|_2\leq B\sqrt{d}.\notag
\end{align}
\end{lemma}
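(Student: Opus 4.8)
The plan is to read the weight vector off directly from the linear-MDP structure and then control its norm through the normalization of $\btheta_h$. Fix the stage $h$ and a function $\vvalue:\cS\to[0,B]$, and write $\btheta_h=(\theta_{h,1},\dots,\theta_{h,d})$ for the vector of finite signed measures supplied by Definition~\ref{assumption:linear-MDP}. I would define the candidate weight coordinatewise by
\begin{align}
    \wb := \int_{\cS}\vvalue(s')\,d\btheta_h(s'),\qquad\text{i.e.}\quad w_j=\int_{\cS}\vvalue(s')\,d\theta_{h,j}(s')\ \ (j\in[d]).\notag
\end{align}
Since $\vvalue$ is bounded and each $\theta_{h,j}$ is a finite signed measure, every coordinate integral is well defined and finite, so $\wb\in\RR^d$ is a legitimate object.

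The first step is to verify the claimed identity. Starting from the Bellman backup and substituting the linear transition model $\PP_h(s'\mid s,a)=\la\bphi(s,a),\btheta_h(s')\ra$, I would compute
\begin{align}
    [\PP_h\vvalue](s,a)
    &=\int_{\cS}\vvalue(s')\,\PP_h(s'\mid s,a)\,ds'
    =\int_{\cS}\vvalue(s')\,\la\bphi(s,a),\btheta_h(s')\ra\,ds'\notag\\
    &=\Big\la\bphi(s,a),\ \int_{\cS}\vvalue(s')\,d\btheta_h(s')\Big\ra
    =\wb^\top\bphi(s,a),\notag
\end{align}
where the interchange of the finite, $d$-term inner product with the integral is justified by linearity together with the boundedness of $\vvalue$. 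This representation holds simultaneously for every $(s,a)\in\cS\times\cA$, which is exactly the asserted form. This part is essentially the weighted-regression analogue of Proposition~\ref{prop:linearq} of \citet{jin2020provably}, now for an arbitrary bounded $\vvalue$ rather than a value function.

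It remains to bound $\|\wb\|_2$, and this is the only step requiring care. Using $0\le\vvalue(s')\le B$ together with the total mass $\btheta_h(\cS)=\int_{\cS}d\btheta_h(s')$ and the normalization $\|\btheta_h(\cS)\|_2\le\sqrt{d}$ from Definition~\ref{assumption:linear-MDP}, I would estimate
\begin{align}
    \|\wb\|_2=\Big\|\int_{\cS}\vvalue(s')\,d\btheta_h(s')\Big\|_2
    \le B\,\big\|\btheta_h(\cS)\big\|_2\le B\sqrt{d}.\notag
\end{align}
The main (and essentially only) obstacle is precisely this estimate: because $\btheta_h$ is a \emph{vector of signed measures}, one cannot invoke a naive total-variation bound, and the control of $\big\|\int\vvalue\,d\btheta_h\big\|_2$ by $B\|\btheta_h(\cS)\|_2$ rests on the combination of the nonnegativity and uniform bound $\vvalue\in[0,B]$ with the normalization imposed on the total mass $\btheta_h(\cS)$. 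Everything else is a direct substitution, so the lemma follows as the natural extension of Proposition~\ref{prop:linearq} from value functions to arbitrary bounded $\vvalue$ (in particular it applies with $B=H$ to $\vvalue_{k,h+1}$ and with $B=H^2$ to $\vvalue_{k,h+1}^2$, yielding the norm bounds used in Lemmas~\ref{lemma:vector-range}--\ref{lemma:vector-range2}).
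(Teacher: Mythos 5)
Your proposal is correct and follows essentially the same route as the paper's proof: define $\wb=\int_{\cS}\vvalue(s')\,d\btheta_h(s')$, pull the inner product out of the integral to get $[\PP_h\vvalue](s,a)=\wb^\top\bphi(s,a)$, and bound $\|\wb\|_2\le \max_{s'}\vvalue(s')\cdot\sqrt{d}=B\sqrt{d}$ via the normalization $\|\btheta_h(\cS)\|_2\le\sqrt{d}$. The signed-measure caveat you flag in the norm bound is glossed over identically in the paper, so there is no substantive difference between the two arguments.
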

\begin{proof}[Proof of Lemma \ref{lemma:qvalue-linear}]
    According to the definition of linear MDP (Assumption \ref{assumption:linear-MDP}), we have
    \begin{align}
        [\PP_h V](s,a)&=\int \PP_h(s'|s,a)V(s') d s'\notag\\
        &= \int  \bphi(s,a)^{\top}V(s') d \btheta_h(s')\notag\\
        &=\bphi(s,a)^{\top}\int V(s') d \btheta_h(s')\notag\\
        &= \bphi(s,a)^{\top} \wb\notag,
    \end{align}
   where we set $\wb= \int V(s') d \btheta_h(s')$. In addition, the norm of $\wb$ is upper bounded by:
\begin{align*}
    \Big\|\int V(s') d \btheta_h(s')\Big\|\leq \max_{s'}V(s')\cdot \sqrt{d}=B\sqrt{d}.
\end{align*}
Thus, we finish the proof of Lemma \ref{lemma:qvalue-linear}.
\end{proof}

\begin{lemma}[Azuma–Hoeffding inequality, \citealt{cesa2006prediction}]\label{lemma:azuma}
Let $\{x_i\}_{i=1}^n$ be a martingale difference sequence with respect to a filtration $\{\cG_{i}\}$ satisfying $|x_i| \leq M$ for some constant $M$, $x_i$ is $\cG_{i+1}$-measurable, $\EE[x_i|\cG_i] = 0$. Then for any $0<\delta<1$, with probability at least $1-\delta$, we have 
\begin{align}
    \sum_{i=1}^n x_i\leq M\sqrt{2n \log (1/\delta)}.\notag
\end{align} 
\end{lemma}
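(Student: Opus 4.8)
The plan is to prove this classical concentration inequality via the exponential moment method (a conditional Chernoff bound). First I would fix a free parameter $\lambda > 0$ and apply Markov's inequality to the nonnegative random variable $\exp\!\big(\lambda \sum_{i=1}^n x_i\big)$, which gives
\[
\Pr\!\left(\sum_{i=1}^n x_i \geq t\right) \leq e^{-\lambda t}\, \mathbb{E}\!\left[\exp\!\left(\lambda \sum_{i=1}^n x_i\right)\right]
\]
for any threshold $t > 0$. The core of the argument is then to control the moment generating function $\mathbb{E}\big[\exp(\lambda \sum_i x_i)\big]$ by exploiting the martingale-difference structure.

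To bound the MGF, I would peel off the last term using the tower property. Since each $x_i$ is $\mathcal{G}_{i+1}$-measurable and the filtration is increasing, the partial sum $\sum_{i=1}^{n-1} x_i$ is $\mathcal{G}_n$-measurable and hence factors out of the conditional expectation given $\mathcal{G}_n$:
\[
\mathbb{E}\!\left[\exp\!\left(\lambda \sum_{i=1}^n x_i\right)\right] = \mathbb{E}\!\left[\exp\!\left(\lambda \sum_{i=1}^{n-1} x_i\right) \cdot \mathbb{E}\big[e^{\lambda x_n} \,\big|\, \mathcal{G}_n\big]\right].
\]
The inner conditional expectation is handled by the conditional Hoeffding lemma: since $\mathbb{E}[x_n | \mathcal{G}_n] = 0$ and $x_n \in [-M, M]$, convexity of $t \mapsto e^{\lambda t}$ on $[-M, M]$ together with the vanishing conditional mean yields $\mathbb{E}[e^{\lambda x_n} | \mathcal{G}_n] \leq \exp(\lambda^2 M^2 / 2)$. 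Substituting this bound and iterating the peeling step $n$ times gives $\mathbb{E}\big[\exp(\lambda \sum_i x_i)\big] \leq \exp(n\lambda^2 M^2/2)$.

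Combining the two displays produces $\Pr\!\big(\sum_i x_i \geq t\big) \leq \exp\!\big(-\lambda t + n\lambda^2 M^2/2\big)$, valid for every $\lambda > 0$. I would then optimize over $\lambda$; the exponent is minimized at $\lambda = t/(nM^2)$, yielding the sub-Gaussian tail $\Pr\!\big(\sum_i x_i \geq t\big) \leq \exp\!\big(-t^2/(2nM^2)\big)$. Finally, setting the right-hand side equal to $\delta$ and solving for $t$ gives $t = M\sqrt{2n\log(1/\delta)}$, which is exactly the claimed bound holding with probability at least $1-\delta$.

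The main obstacle I anticipate is establishing the conditional Hoeffding lemma cleanly, i.e., verifying that the standard convexity argument bounding $\mathbb{E}[e^{\lambda X}]$ for a mean-zero bounded variable carries over verbatim when the expectation and the mean-zero hypothesis are replaced by their conditional-on-$\mathcal{G}_i$ counterparts. The measurability bookkeeping, namely that $\sum_{i=1}^{j-1} x_i$ is $\mathcal{G}_j$-measurable at each peeling step, must also be tracked carefully, though it follows directly from the stated $\mathcal{G}_{i+1}$-measurability of $x_i$ and the monotonicity of the filtration.
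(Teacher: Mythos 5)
Your proposal is correct, and there is no discrepancy to reconcile: the paper does not prove this lemma at all but imports it from \citet{cesa2006prediction}, and your argument --- Markov's inequality on the exponential moment, peeling off the last term via the tower property using the $\cG_{i+1}$-measurability of $x_i$, the conditional Hoeffding lemma $\EE[e^{\lambda x_i}\mid\cG_i]\leq \exp(\lambda^2M^2/2)$, and optimizing at $\lambda=t/(nM^2)$ --- is exactly the classical proof given in that reference. The one step you flag as a potential obstacle, the conditional Hoeffding lemma, does indeed carry over verbatim from the unconditional case since the convexity bound is pointwise and the vanishing conditional mean plays the role of the mean-zero hypothesis.
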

        
    \begin{lemma}[Lemma 11,  \citealt{AbbasiYadkori2011ImprovedAF}]\label{Lemma:abba}
        Let $\{\xb_k\}_{k=1}^{K}$ be a sequence of vectors in $\RR^d$, matrix $\bSigma_0$ a $d \times d$ positive definite matrix and define $\bSigma_k=\bSigma_0+\sum_{i=1}^{k} \xb_i\xb_i^{\top}$, then we have
        \begin{align}
            \sum_{i=1}^{k} \min\Big\{1,\xb_i^{\top} \bSigma_{i-1}^{-1} \xb_i\Big\}\leq 2 \log \bigg(\frac{\det{\bSigma_k}}{\det{\bSigma_0}}\bigg).\notag
        \end{align}
        In addition, if $\|\xb_i\|_2\leq L$ holds for all $i\in [K]$, then 
        \begin{align}
            \sum_{i=1}^{k} \min\Big\{1,\xb_i^{\top} \bSigma_{i-1}^{-1} \xb_i\Big\}\leq 2 \log \bigg(\frac{\det{\bSigma_k}}{\det{\bSigma_0}}\bigg)\leq 2\Big(d\log\big((\text{trace}(\bSigma_0)+kL^2)/d\big)-\log \det \bSigma_0\Big).\notag
        \end{align}
    \end{lemma}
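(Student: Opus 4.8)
The plan is to prove this elliptical potential lemma from three ingredients: a rank-one determinant identity, an elementary scalar inequality relating $\min\{1,x\}$ to $\log(1+x)$, and the AM--GM inequality applied to eigenvalues. I would carry these out in the order below.

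First I would set up the telescoping log-determinant sum. Because $\bSigma_i = \bSigma_{i-1} + \xb_i\xb_i^\top$ is a rank-one update of $\bSigma_{i-1}$, the matrix determinant lemma gives $\det\bSigma_i = \det\bSigma_{i-1}\cdot(1 + \xb_i^\top\bSigma_{i-1}^{-1}\xb_i)$. Multiplying these identities for $i=1,\dots,k$ and taking logarithms produces
\begin{align}
\log\frac{\det\bSigma_k}{\det\bSigma_0} = \sum_{i=1}^k \log\big(1 + \xb_i^\top\bSigma_{i-1}^{-1}\xb_i\big).\notag
\end{align}
Here each $a_i := \xb_i^\top\bSigma_{i-1}^{-1}\xb_i$ is nonnegative, since $\bSigma_{i-1}\succeq\bSigma_0\succ 0$ implies $\bSigma_{i-1}^{-1}\succ 0$.

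Next I would invoke the scalar bound $\min\{1,x\}\le 2\log(1+x)$, valid for all $x\ge 0$, and apply it termwise to the $a_i$'s. This immediately yields $\sum_{i=1}^k\min\{1,a_i\}\le 2\sum_{i=1}^k\log(1+a_i) = 2\log(\det\bSigma_k/\det\bSigma_0)$, which is the first claim. The scalar bound itself is checked by splitting on whether $x\ge 1$ (then the left side equals $1$ while $2\log(1+x)\ge 2\log 2>1$) or $0\le x<1$ (then one shows $f(x):=2\log(1+x)-x$ is nonnegative via $f(0)=0$ together with $f'(x)=(1-x)/(1+x)\ge 0$ on $[0,1]$).

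Finally, for the refined bound I would control $\det\bSigma_k$ by the trace. Writing $\lambda_1,\dots,\lambda_d>0$ for the eigenvalues of $\bSigma_k$, the AM--GM inequality gives $\det\bSigma_k=\prod_{j=1}^d\lambda_j\le\big(\tfrac1d\sum_{j=1}^d\lambda_j\big)^d=(\text{trace}(\bSigma_k)/d)^d$. Since $\text{trace}(\bSigma_k)=\text{trace}(\bSigma_0)+\sum_{i=1}^k\|\xb_i\|_2^2\le\text{trace}(\bSigma_0)+kL^2$ under the norm constraint $\|\xb_i\|_2\le L$, taking logarithms gives $\log\det\bSigma_k\le d\log\big((\text{trace}(\bSigma_0)+kL^2)/d\big)$, and subtracting $\log\det\bSigma_0$ and doubling delivers the stated bound. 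The argument is entirely elementary; the only point requiring care is the constant $2$ in the scalar inequality, which must be large enough to absorb $\log(1+x)\approx x$ near $x=0$ yet remains compatible with the floor at $1$ for large $x$, so there is no genuine obstacle beyond this verification.
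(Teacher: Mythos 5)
Your proof is correct: the rank-one determinant identity $\det\bSigma_i=\det\bSigma_{i-1}\big(1+\xb_i^\top\bSigma_{i-1}^{-1}\xb_i\big)$, the scalar bound $\min\{1,x\}\le 2\log(1+x)$ for $x\ge 0$, and the AM--GM/trace estimate $\det\bSigma_k\le\big(\mathrm{trace}(\bSigma_k)/d\big)^d$ with $\mathrm{trace}(\bSigma_k)\le\mathrm{trace}(\bSigma_0)+kL^2$ are exactly the three steps of the standard argument. The paper itself gives no proof --- it cites this as Lemma 11 of \citet{AbbasiYadkori2011ImprovedAF} --- and your argument is essentially a faithful reconstruction of the proof in that reference, with the scalar inequality verified correctly by the case split at $x=1$ and the derivative check $f'(x)=(1-x)/(1+x)\ge 0$ on $[0,1]$.
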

        
    \begin{lemma}[Lemma 12,  \citealt{AbbasiYadkori2011ImprovedAF}]\label{lemma:det}
        Suppose $\Ab, \Bb\in \RR^{d \times d}$ are two positive definite matrices satisfying that $\Ab \succeq \Bb$, then for any $\xb \in \RR^d$, $\|\xb\|_{\Ab} \leq \|\xb\|_{\Bb}\cdot \sqrt{\det(\Ab)/\det(\Bb)}$.
    \end{lemma}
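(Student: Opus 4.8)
The plan is to reduce the inequality to a statement about a single whitened matrix. Since $\Bb \succ 0$, I would introduce the symmetric square root $\Bb^{1/2}$ and define $\Cb = \Bb^{-1/2}\Ab\Bb^{-1/2}$ together with the change of variable $\yb = \Bb^{1/2}\xb$. A direct computation then shows that all three quantities transform cleanly: $\|\xb\|_{\Ab}^2 = \yb^\top \Cb \yb$, $\|\xb\|_{\Bb}^2 = \|\yb\|_2^2$, and $\det(\Ab)/\det(\Bb) = \det(\Cb)$. Crucially, the hypothesis $\Ab \succeq \Bb$ becomes $\Cb \succeq \Ib$, i.e.\ every eigenvalue of $\Cb$ is at least $1$. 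This conjugation step is legitimate because conjugating by the invertible matrix $\Bb^{-1/2}$ preserves the positive-semidefinite order.

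After this reduction, the target inequality is equivalent to $\yb^\top \Cb \yb \leq \det(\Cb)\,\|\yb\|_2^2$. I would bound the left-hand side by the Rayleigh-quotient estimate $\yb^\top \Cb \yb \leq \lambda_{\max}(\Cb)\,\|\yb\|_2^2$, where $\lambda_{\max}(\Cb)$ is the largest eigenvalue. It then remains to show $\lambda_{\max}(\Cb) \leq \det(\Cb)$. Writing $\det(\Cb) = \prod_{i=1}^d \lambda_i(\Cb)$ and isolating the largest factor, the remaining product $\prod_{i \neq i^\ast}\lambda_i(\Cb)$ is at least $1$ because each eigenvalue obeys $\lambda_i(\Cb) \geq 1$ thanks to $\Cb \succeq \Ib$. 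Hence $\det(\Cb) \geq \lambda_{\max}(\Cb)$, and combining the two bounds yields $\yb^\top \Cb \yb \leq \det(\Cb)\,\|\yb\|_2^2$, which translates back to the claimed inequality.

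This lemma is essentially a routine spectral fact, so I do not anticipate a genuine obstacle; the only step requiring a little care is verifying that the whitening transformation preserves each of the three quantities exactly and that $\Ab \succeq \Bb$ is indeed equivalent to $\Cb \succeq \Ib$. An alternative route, should one prefer to avoid matrix square roots, is to first establish the rank-one case $\Ab = \Bb + \vb\vb^\top$ via the matrix determinant lemma $\det(\Ab) = \det(\Bb)\bigl(1 + \vb^\top\Bb^{-1}\vb\bigr)$ together with the Cauchy--Schwarz bound $(\vb^\top\xb)^2 \leq (\vb^\top\Bb^{-1}\vb)(\xb^\top\Bb\xb)$, and then extend to general $\Ab \succeq \Bb$ by iterating over a sequence of rank-one increments; however, the whitening argument is cleaner and disposes of the general case in a single step.
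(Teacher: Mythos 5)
Your whitening argument is correct, and it is essentially the standard proof of this lemma: the paper itself states it without proof as Lemma 12 of \citet{AbbasiYadkori2011ImprovedAF}, and that reference proves it exactly this way, reducing $\sup_{\xb\ne 0}\|\xb\|_{\Ab}^2/\|\xb\|_{\Bb}^2$ to $\lambda_{\max}(\Bb^{-1/2}\Ab\Bb^{-1/2})$ and bounding it by $\det(\Bb^{-1/2}\Ab\Bb^{-1/2})=\det(\Ab)/\det(\Bb)$ since all eigenvalues are at least $1$. No gaps; your alternative rank-one-increment route is also viable but unnecessary.
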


    \begin{lemma}[Confidence Ellipsoid, Theorem 2, \citealt{AbbasiYadkori2011ImprovedAF}] \label{lemma:hoeffding}
        Let $\{\cG_k\}_{k=1}^\infty$ be a filtration, and $\{\xb_k,\eta_k\}_{k\ge 1}$ be a stochastic process such that
        $\xb_k \in \RR^d$ is $\cG_k$-measurable and $\eta_k \in \RR$ is $\cG_{k+1}$-measurable.
        Let $L,\sigma,\bSigma, \epsilon>0$, $\bmu^*\in \RR^d$. 
        For $k\ge 1$, 
        let $y_k = \la \bmu^*, \xb_k\ra + \eta_k$ and
        suppose that $\eta_k, \xb_k$ also satisfy 
        \begin{align}
            \EE[\eta_k|\cG_k] = 0,\  |\eta_k| \leq R,\,\|\xb_k\|_2 \leq L.
        \end{align}
        For $k\ge 1$, let $\Zb_k = \lambda\Ib + \sum_{i=1}^{k} \xb_i\xb_i^\top$, $\bbb_k = \sum_{i=1}^{k}y_i\xb_i$, $\bmu_k = \Zb_k^{-1}\bbb_k$, and
        \begin{small}
        \begin{align}
            \beta_k &= R\sqrt{d \log \left(\frac{1 + kL^2 / \lambda}{\delta}\right)}.\notag
        \end{align}
        \end{small}
        Then, for any $0 <\delta<1$, we have with probability at least $1-\delta$ that, 
        \begin{align}
            \forall k\geq 1,\ \big\|\textstyle{\sum}_{i=1}^{k} \xb_i \eta_i\big\|_{\Zb_k^{-1}} \leq \beta_k,\ \|\bmu_k - \bmu^*\|_{\Zb_k} \leq \beta_k + \sqrt{\lambda}\|\bmu^*\|_2. \notag
        \end{align}
    \end{lemma}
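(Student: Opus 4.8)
The plan is to prove this as the classical self-normalized tail bound of Abbasi-Yadkori, P\'al and Szepesv\'ari via the \emph{method of mixtures}, treating the bounded noise $|\eta_k|\le R$ as conditionally $R$-sub-Gaussian: since $\EE[\eta_k\mid\cG_k]=0$ and $|\eta_k|\le R$, we have $\EE[\exp(s\eta_k)\mid\cG_k]\le\exp(s^2R^2/2)$ for every $s\in\RR$. Writing $\Sb_k=\sum_{i=1}^k\xb_i\eta_i$ and recalling $\Zb_k=\lambda\Ib+\sum_{i=1}^k\xb_i\xb_i^\top$, the whole difficulty is to control the self-normalized quantity $\|\Sb_k\|_{\Zb_k^{-1}}$ \emph{uniformly} over all $k\ge1$; once that first inequality is secured, the second (the parameter-error bound) follows by a short algebraic argument.

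First I would fix a direction $\btheta\in\RR^d$ and define the scalar process $M_k^{\btheta}=\exp\big(\la\btheta,\Sb_k\ra/R-\tfrac12\btheta^\top(\sum_{i=1}^k\xb_i\xb_i^\top)\btheta\big)$. Using the conditional sub-Gaussian bound on $\eta_k$ one verifies $\EE[M_k^{\btheta}\mid\cG_k]\le M_{k-1}^{\btheta}$, so $\{M_k^{\btheta}\}$ is a nonnegative supermartingale with $\EE[M_k^{\btheta}]\le1$. I would then apply the method of mixtures: integrate against the Gaussian prior $\btheta\sim\mcal N(\mathbf{0},\lambda^{-1}\Ib)$ to obtain $\bar M_k=\int M_k^{\btheta}\,dh(\btheta)$, again a nonnegative supermartingale with $\EE[\bar M_k]\le1$. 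Completing the square in $\btheta$ evaluates this Gaussian integral in closed form, giving $\bar M_k=\big(\det(\lambda\Ib)/\det(\Zb_k)\big)^{1/2}\exp\big(\|\Sb_k\|_{\Zb_k^{-1}}^2/(2R^2)\big)$.

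With $\bar M_k$ in hand I would invoke a maximal inequality for nonnegative supermartingales (Ville's inequality), yielding $\Pr(\sup_{k}\bar M_k\ge 1/\delta)\le\delta$; crucially this is an \emph{anytime} statement, which is what delivers the ``$\forall k\ge1$'' guarantee in a single stroke. On the event $\{\sup_k\bar M_k<1/\delta\}$, taking logarithms rearranges to $\|\Sb_k\|_{\Zb_k^{-1}}^2\le 2R^2\log\big(\det(\Zb_k)^{1/2}/(\delta\,\det(\lambda\Ib)^{1/2})\big)$. I would then bound the determinant using $\|\xb_i\|_2\le L$: since $\mathrm{trace}(\Zb_k)\le d\lambda+kL^2$, the AM--GM inequality on the eigenvalues gives $\det(\Zb_k)\le(\lambda+kL^2/d)^d$, hence $\det(\Zb_k)/\det(\lambda\Ib)\le(1+kL^2/\lambda)^d$. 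Substituting and absorbing the $\log(1/\delta)$ term and absolute constants recovers the stated radius $\beta_k=R\sqrt{d\log((1+kL^2/\lambda)/\delta)}$, establishing the first inequality.

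For the second conclusion I would use the normal-equation identity. Since $y_i=\la\bmu^*,\xb_i\ra+\eta_i$, we have $\bbb_k=\sum_i\xb_i\xb_i^\top\bmu^*+\Sb_k$, so $\bmu_k-\bmu^*=\Zb_k^{-1}(\Sb_k-\lambda\bmu^*)$. The triangle inequality in the $\Zb_k$-norm then gives $\|\bmu_k-\bmu^*\|_{\Zb_k}\le\|\Sb_k\|_{\Zb_k^{-1}}+\lambda\|\bmu^*\|_{\Zb_k^{-1}}\le\beta_k+\sqrt{\lambda}\|\bmu^*\|_2$, where the last step uses $\Zb_k\succeq\lambda\Ib$ so that $\|\bmu^*\|_{\Zb_k^{-1}}\le\lambda^{-1/2}\|\bmu^*\|_2$. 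The main obstacle is the mixture-supermartingale construction together with the closed-form Gaussian integration and the supermartingale maximal inequality: these are precisely the ingredients that upgrade a pointwise (fixed-$\btheta$) exponential bound into a uniform-over-$k$, self-normalized concentration inequality, and getting the normalization (the $\det$ ratio) and the ``anytime'' quantifier right is the delicate part; the determinant estimate and the final algebra are routine by comparison.
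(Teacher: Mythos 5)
The paper never proves this lemma itself---it imports it verbatim as Theorem~2 of \citet{AbbasiYadkori2011ImprovedAF}---and your proposal correctly reconstructs exactly the argument of that reference: Hoeffding's lemma turns the bounded, conditionally centered noise into $R$-sub-Gaussian noise, the exponential process $M_k^{\btheta}$ is a nonnegative supermartingale, Gaussian mixing with prior covariance $\lambda^{-1}\Ib$ evaluates in closed form to the stated $\det$-ratio expression, Ville's maximal inequality delivers the anytime guarantee, the AM--GM trace bound controls $\det(\Zb_k)$, and the normal-equation identity $\bmu_k-\bmu^*=\Zb_k^{-1}(\Sb_k-\lambda\bmu^*)$ with $\Zb_k\succeq\lambda\Ib$ gives the ellipsoid bound. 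The only quibble is cosmetic: your derivation yields the radius $R\sqrt{2\log(1/\delta)+d\log(1+kL^2/\lambda)}$, which matches the stated $\beta_k=R\sqrt{d\log((1+kL^2/\lambda)/\delta)}$ only up to an absolute constant (e.g.\ when $d=1$), which is precisely the granularity at which the paper invokes the lemma.
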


    \begin{lemma}[Lemma 4.4, \citealt{zhou2022computationally}]\label{lemma:keysum:temp}
        Let $\{\sigma_k, \hat\beta_k\}_{k \geq 1}$ be a sequence of non-negative numbers, $\alpha, \gamma>0$, $\{\ab_k\}_{k \geq 1} \subset \RR^d$ and $\|\ab_k\|_2 \leq A$. Let $\{\bar\sigma_k\}_{k \geq 1}$ and $\{\hat\bSigma_k\}_{k \geq 1}$ be (recursively) defined as follows: $\hat\bSigma_1 = \lambda\Ib$,\ 
        \begin{align}
            \forall k \geq 1,\ \bar\sigma_k = \max\{\sigma_k, \alpha, \gamma\|\ab_k\|_{\hat\bSigma_k^{-1}}^{1/2}\},\ \hat\bSigma_{k+1} = \hat\bSigma_k + \ab_k\ab_k^\top/\bar\sigma_k^2.\notag
        \end{align}
        Let $\iota = \log(1+KA^2/(d\lambda\alpha^2))$. Then we have
        \begin{align}
            \sum_{k=1}^K\min\Big\{1, \|\ab_k\|_{\hat\bSigma_k^{-1}}\Big\} &\leq 2d \iota +2\gamma^2d\iota+ 2\sqrt{d \iota}\sqrt{\sum_{k=1}^K(\sigma_k^2 + \alpha^2)}\notag.
        \end{align}
    \end{lemma}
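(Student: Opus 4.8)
The plan is to reduce everything to the deterministic elliptical potential bound (Lemma~\ref{Lemma:abba}) applied to the \emph{normalized} vectors $\ab_k/\bar\sigma_k$, and then to partition the episodes so as to break the circular dependence of $\bar\sigma_k$ on the very quantity being bounded. Write $b_k = \|\ab_k\|_{\hat\bSigma_k^{-1}}$ and $u_k = \|\ab_k/\bar\sigma_k\|_{\hat\bSigma_k^{-1}}^2 = b_k^2/\bar\sigma_k^2$. First I would apply Lemma~\ref{Lemma:abba} with $\xb_k = \ab_k/\bar\sigma_k$ and $\bSigma_0 = \lambda\Ib$; since $\hat\bSigma_{k+1} = \lambda\Ib + \sum_{i\le k}\xb_i\xb_i^\top$ and $\|\xb_k\|_2 \le A/\alpha$ (because $\bar\sigma_k\ge\alpha$), the second bound in that lemma gives $\sum_{k=1}^K \min\{1,u_k\} \le 2d\log(1 + KA^2/(d\lambda\alpha^2)) = 2d\iota$. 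This is the only nontrivial analytic input.

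Next I would partition $[K]$. Let $\mathcal{B} = \{k: u_k>1\}$. Each such index contributes exactly $1$ to $\sum_k\min\{1,u_k\}$, so $|\mathcal{B}|\le 2d\iota$, and bounding $\min\{1,b_k\}\le 1$ crudely on $\mathcal{B}$ yields the first term $\sum_{k\in\mathcal{B}}\min\{1,b_k\}\le 2d\iota$. On the complement $\{u_k\le 1\}$ I would further split according to which term attains the maximum in $\bar\sigma_k = \max\{\sigma_k,\alpha,\gamma b_k^{1/2}\}$. Let $\mathcal{I}=\{k: \gamma b_k^{1/2}\ge\max(\sigma_k,\alpha)\}$ be the \emph{uncertainty-dominated} indices. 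On $\mathcal{I}$ we have $\bar\sigma_k^2 = \gamma^2 b_k$, hence $u_k = b_k^2/\bar\sigma_k^2 = b_k/\gamma^2$, i.e.\ $b_k = \gamma^2 u_k$; thus $\min\{1,b_k\}\le b_k = \gamma^2 u_k = \gamma^2\min\{1,u_k\}$, and summing against the potential bound gives the second term $\le 2\gamma^2 d\iota$.

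The remaining indices form $\mathcal{G}=\{k: u_k\le 1,\ k\notin\mathcal{I}\}$, on which $\bar\sigma_k=\max(\sigma_k,\alpha)$ so that $\bar\sigma_k^2\le\sigma_k^2+\alpha^2$ no longer involves $b_k$. Here I would use $\min\{1,b_k\}\le b_k = \bar\sigma_k\sqrt{u_k}$ and then Cauchy--Schwarz:
\[
\sum_{k\in\mathcal{G}}\bar\sigma_k\sqrt{u_k} \le \Big(\sum_{k\in\mathcal{G}}\bar\sigma_k^2\Big)^{1/2}\Big(\sum_{k\in\mathcal{G}}u_k\Big)^{1/2} \le \Big(\sum_{k=1}^K(\sigma_k^2+\alpha^2)\Big)^{1/2}(2d\iota)^{1/2},
\]
where $u_k=\min\{1,u_k\}$ on $\mathcal{G}$ and the elliptical bound were used. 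Since $\sqrt 2\le 2$, this is at most the third term, and adding the three contributions reproduces exactly $2d\iota + 2\gamma^2 d\iota + 2\sqrt{d\iota}\sqrt{\sum_k(\sigma_k^2+\alpha^2)}$.

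The conceptual obstacle, and the reason for the three-way split, is the circularity in $\bar\sigma_k=\max\{\sigma_k,\alpha,\gamma b_k^{1/2}\}$: the weight that is supposed to tame $b_k$ is itself built from $b_k$. The key observation is that on $\mathcal{I}$ this circularity is \emph{exact} ($b_k=\gamma^2 u_k$) and can be discharged directly against the potential bound, while on $\mathcal{G}$ the weight collapses to the ``clean'' quantity $\max(\sigma_k,\alpha)$, decoupling $\bar\sigma_k^2$ from $b_k$ so that Cauchy--Schwarz closes without feeding $\sum_k b_k$ back into itself. Everything else is the routine verification that the normalized process $\xb_k=\ab_k/\bar\sigma_k$ satisfies the hypotheses of Lemma~\ref{Lemma:abba}.
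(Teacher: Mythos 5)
Your proposal is correct, and since the paper does not prove this lemma itself---it imports it verbatim as Lemma 4.4 of \citet{zhou2022computationally}---the relevant comparison is with that source, whose proof takes essentially the same route: normalize to $\ab_k/\bar\sigma_k$, invoke the elliptical potential bound (Lemma~\ref{Lemma:abba}) with $\|\ab_k/\bar\sigma_k\|_2\le A/\alpha$, and partition $[K]$ into the set where the normalized norm exceeds $1$, the uncertainty-dominated set where $\bar\sigma_k^2=\gamma^2\|\ab_k\|_{\hat\bSigma_k^{-1}}$, and the remainder closed by Cauchy--Schwarz. Each of your three pieces produces the corresponding term of the stated bound with matching constants (including $\sqrt{2}\le 2$ in the last term), so there is nothing to fix.
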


\end{document}